\newtheorem{theorem}{Theorem}
\newtheorem{definition}{Definition}
\newtheorem{lemma}{Lemma}
\newcommand{\parfrac}[2]{\paran{\frac{#1}{#2}}}
\newcommand{\paran}[1]{\left( #1 \right)}
\newcommand{\pnorm}[2]{\left|\left| {#2} \right|\right|_{#1}}
\newcommand{\vect}[1]{\left[ #1 \right]}
\newcommand{\abs}[1]{\left| #1 \right|}
\newcommand{\logloss}{\textsc{LLoss}\xspace}
\newenvironment{CompactEnumerate}{
\begin{list}{\roman{enumi}.}{%
\usecounter{enumi}
\setlength{\leftmargin}{12pt}
\setlength{\itemindent}{5pt}
\setlength{\topsep}{1pt}
\setlength{\itemsep}{-1ex}
}}
{\end{list}}
\icmltitlerunning{Label Inference Attacks from Log-Loss Scores}
\begin{document}

\twocolumn[
\icmltitle{Label Inference Attacks from Log-loss Scores}
\begin{icmlauthorlist}
\icmlauthor{Abhinav Aggarwal}{to}
\icmlauthor{Shiva Prasad Kasiviswanathan}{to}
\icmlauthor{Zekun Xu}{to}
\icmlauthor{Oluwaseyi Feyisetan}{to}
\icmlauthor{Nathanael Teissier}{to}
\end{icmlauthorlist}

\icmlaffiliation{to}{Amazon}

\icmlcorrespondingauthor{Abhinav Aggarwal}{aggabhin@amazon.com}
\icmlcorrespondingauthor{Shiva Prasad Kasiviswanathan}{kasivisw@amazon.com}
\vskip 0.3in
]

\printAffiliationsAndNotice{}

\begin{abstract}
Log-loss (also known as cross-entropy loss) metric is  ubiquitously used across machine learning applications to assess the performance of classification algorithms. In this paper, we investigate the problem of inferring the labels of a dataset from single (or multiple) log-loss score(s), without any other access to the dataset. Surprisingly, we show that for any finite number of label classes, it is possible to accurately infer the labels of the dataset from the reported log-loss score of a single carefully constructed prediction vector if we allow arbitrary precision arithmetic. Additionally, we present label inference algorithms (attacks) that succeed even under addition of noise to the log-loss scores and under limited precision arithmetic.
All our algorithms rely on ideas from number theory and combinatorics and require no model training. We run experimental simulations on some real datasets to demonstrate the ease of running these attacks in practice.
\end{abstract}

\section{Introduction}
\label{sec:introduction}
Log-loss (a.k.a.\ cross-entropy loss) is an important metric of choice in evaluating machine learning classification algorithms. Log-loss is based on prediction probabilities where a lower log-loss value means better predictions. Therefore, log-loss is useful to compare models not only on their output but on their probabilistic outcome. 

Let $[K]=\{1,\dots,K\}$ be a set of label classes and consider a dataset of $N$ datapoints with true labels $\sigma \in [K]^N$. The $K$-ary log-loss score takes as input $\sigma$ and a matrix in $[0,1]^{N \times K}$ of prediction probabilities, where the $i$th row is the vector of prediction probabilities $u_{i,1},\dots,u_{i,K}$ (with $\sum_{k \in [K]} u_{i,k} = 1$) for the $i$th datapoint on the $K$-classes.
\begin{definition}[$K$-ary log-loss Score]\label{def:log_loss_multi}
	Let $\mathbf{u} \in [0,1]^{N \times K}$ be a matrix such that for all $k \in [K]$ and $i \in [N]$, it holds that $\sum_{i=1}^N u_{i,k} = 1$. Let $\sigma \in [K]^N$ be a labeling. Then, the $K$-ary log-loss (or, cross-entropy loss) on $\mathbf{u}$ with respect to $\sigma$, denoted by $\logloss\paran{\mathbf{u}; \sigma}$, is defined as follows:
	\begin{align*}
		\logloss\paran{\mathbf{u}; \sigma} := \frac{-1}{N}\sum_{i=1}^N \sum_{k=1}^K \ \Big([\sigma_i = k] \cdot \ln u_{i,k} \Big),
	\end{align*}
	where $[\sigma_i = k] = 1$ if $\sigma_i = k$ and $0$, otherwise.
\end{definition}
Typically, the $\mathbf{u} \in [0,1]^{N \times K}$ is generated by a ML model. We note that some texts use the unscaled version of log-loss and that our algorithms can be easily extended to these variants (see~\cite{murphy2012machine} for more details on log-loss).

A special case is when $K=2$ (binary labels) in which case the above definition reduces to the following simpler form.
\begin{definition}[Binary log-loss Score]\label{def:log_loss_binary} 
	Given a vector $\mathbf{u} = (u_1,\dots,u_N) \in [0,1]^N$ and a labeling $\sigma \in \{0,1\}^N$, the log-loss on $\mathbf{u}$ with respect to $\sigma$, denoted by $\logloss\paran{\mathbf{u}; \sigma}$, is defined as follows:$$\logloss\paran{\mathbf{u}; \sigma} := \frac{-1}{N}\ln \paran{\prod_{i=1}^N u_i^{\sigma_i}(1-u_i)^{1-\sigma_i}}.$$
\end{definition}

\begin{table*}[t]
    \centering
    \caption{Overview of our results for binary label inference. Here, $N \ge 1$ is the number of labels to be inferred. We present attacks under both arbitrary and bounded precision arithmetic models (see Section~\ref{sec:model} for a comparison of these models). The $\tau$-accurate means that the error on the responses are bounded by $|\tau|$. The fourth column represents the number of arithmetic operations needed at the adversary. 
    All our adversaries are polynomial time  except for the third row. 
    }
    \begin{tabular}{|>{\centering\arraybackslash}m{4.8cm}|c|c|c|c|}
        \hline
         \textbf{Amount of Noise in Responses}& \textbf{Precision} & \textbf{\# Log-loss Queries} & \textbf{\#Arithmetic Operations} & \textbf{Reference}  \\ \hline \hline 
         
         No noise & Arbitrary  & 1 & $O\paran{N}$ & Theorem~\ref{thm:basic_log_loss_attack}\\ \hline
         
         No noise & $\phi$-bits  & $\Theta\paran{1+N\phi2^{-\phi/4}}$ & $O\paran{N}$ & Algorithm~\ref{alg:multi_query}\\ \hline
         
         \shortstack[c]{$\tau$-accurate  } & Arbitrary  & 1 & $O(2^N)$ & Algorithm~\ref{alg:bounded_noise_exponential}
         \\ \hline
         
         $\tau$-accurate & $\phi$-bits  & $O\paran{\frac{N}{\log N} + \frac{N}{\log \paran{\phi/N\tau}}}$ & $O\paran{\frac{poly\paran{N, \phi/\tau}}{\log (\phi/N\tau)}}$ & Algorithm~\ref{alg:reconstruction_attack}\\ \hline
         
    \end{tabular}
    \label{tab:overview_results}
\end{table*}

Given its preeminent role in evaluating machine learning models, especially in the neural network literature~\citep{goodfellow2016deep}, an important question arises is whether it is possible to ``exploit'' the log-loss score. In particular, we ask whether the knowledge of log-loss scores leaks information about the true labels $\sigma$. We answer the question in affirmative by showing that for any finite number of label classes, it is possible to infer all of the dataset labels from just the reported log-loss scores if the prediction probability vectors are carefully constructed and this can be done {\em without} any model training. In fact, we present stronger inference attacks, that succeed even when the log-loss scores are perturbed by noise. 

Our inference attacks have important consequences:
\begin{list}{{\bf (\roman{enumi})}}
	{\usecounter{enumi}		\setlength{\leftmargin}{11pt}
 	\setlength{\listparindent}{-1pt}
	\setlength{\parsep}{-1pt}	
}
\item Integrity of Machine Learning Competitions: Many machine learning (data mining) competitions such as those organized by Kaggle\footnote{https://www.kaggle.com/}, KDDCup\footnote{https://www.kdd.org/kdd-cup} and ILSVRC Challenge~\footnote{http://www.image-net.org/challenges/LSVRC/} use log-loss as their choice of evaluation metric. In particular, it is common in these competitions that the  quality of a participants' solution to be assessed through a log-loss on an unknown test dataset. Our results demonstrate that an unscrupulous participant can game this system, by using the log-loss score to learn the test set labels, and thereby constructing a fake but perfect classifier with zero test error. The simplicity and efficacy of our proposed attacks make this issue a real concern.\!\footnote{If needed, an attacker can  obfuscate the prediction vectors needed for our attacks in its ML models. We do not focus on this aspect here.}

\item Privacy Concerns: ML models are regularly trained on sensitive datasets. Imagine an adversary who can ask log-loss scores for supplied prediction vectors. While at the onset the log-loss being a non-linear function, might look innocuous to release, our results show the extent of information leakage from these scores. In fact, we get a perfect reconstruction, a stronger privacy violation than that achieved by the {\em blatant non-privacy} notion~\citep{dinur2003revealing}, which only requires a large fraction of the sensitive data to be reconstructed.
\end{list}

\noindent\textbf{Overview of Our Results.} We present multiple inference attacks from log-loss scores under various constraints such as bits of arithmetic precision, noise etc. All our attacks operate only based on the ability of an adversary to query the log-loss scores on the chosen prediction vectors,  without any access to the feature set or requiring any model training. All  our inference attacks are also completely agnostic of the underlying classification task. 

Our primary focus in this paper is on the binary label case.
An overview of our main results for the binary label inference, that we discuss below, is summarized in Table~\ref{tab:overview_results}. We start with the simplest setting, where the log-loss scores are observed in the raw (without any noise). If the adversary has the ability to perform arbitrary precision arithmetic, we show that with just one log-loss query, an adversary can recover all the labels. We extend this result to the case where the adversary performs $\phi$-bits precision arithmetic, and show that the labels can be recovered with $\Theta\paran{1+N\phi2^{-\phi/4}}$ log-loss queries. Both these attacks require only a polynomial-time adversary and also extend to the multiclass case.

We then move on to the more challenging setting where the scores can be perturbed with noise before the adversary observes them. Assuming that the responses are $\tau$-accurate (i.e., within error $\pm \tau$), we show that an adversary can still recover all the labels correctly in the arbitrary precision model with just one query but now with exponential time. Interestingly, this holds independent of $\tau$. The construction here uses large numbers (that are doubly-exponential in $N$) that our lower-bounds suggest are unfortunately unavoidable.
In the $\phi$-bits precision model, we present a polynomial-time adversary that requires $O\paran{N/\log N + N/\log \paran{\phi/N\tau}}$ queries.

Next, we present extensions of these attacks to other interesting noise settings such as randomly generated noise and multiplicative noise, and show how to recover labels in those settings (see Section~\ref{sec:extension_other_noise}). Finally, in Section~\ref{sec:expts}, we present experiments that demonstrate the remarkable effectiveness and speed of these attacks on real and simulated datasets.

We note that while the techniques for label inference in the noised case will also hold for the (raw) unnoised case, we discuss the later separately to capture some key ideas behind our constructions. Moreover, our construction for inference from raw scores has some advantages, it uses a fewer number of queries and can be easily extended to the multiclass setting. 

\noindent\textbf{Overview of Our Techniques.} Our attacks are based on a variety of number-theoretic and combinatorial techniques that we briefly summarize here.
\begin{itemize}
    \itemsep0em
    \item For the case where log-loss scores are returned without any noise, we use the \textit{Fundamental Theorem of Arithmetic}~\cite{hardy1979statement}, which states that every positive integer has a unique prime factorization. We assign powers of distinct primes to different datapoints in a way that all labels can be recovered in a single query, for both binary as well as the multiclass case. Moreover, since the list of primes is well-known and only a function of the number of datapoints, recovery of labels from the observed scores is efficient assuming arbitrary-precision arithmetic.
    \item To adapt our construction above (in the no-noise setting) to the case of bounded floating-point precision, we bound the number of log-loss queries using the well-known \textit{Prime Number Theorem}~\cite{poussin1897recherches,hadamard1896distribution}, which provides an asymptotic growth rate for the size of prime numbers. Our construction achieves label inference in an optimal number of queries, which we prove by providing matching upper and lower bounds.
    \item For the case of label inference from noised scores, we use a different attack strategy. Here, we reduce this problem to the \textit{construction of sets with distinct subset sums}. For the lower-bound here, we build upon the classic result (first conjectured) by Euler (and later proved by~\cite{benkoski1974weird,frenkel1998integer}), which bounds the size of the largest element in such sets with integer elements. 
\end{itemize}

\section{Problem Definition and Setting}
\label{sec:model}
We begin by formally defining our model of computation. In this paper, we discuss \emph{perfect} label inference problem, which refers to inferring \emph{all} the labels. We formally define label inference under different constraints in subsequent sections. We refer to the entity that runs this inference as the \emph{adversary}.

Throughout the paper, unless otherwise stated, we focus on the binary label case.
We refer to the vector $\mathbf{u}$ in Definition~\ref{def:log_loss_binary} as the \emph{prediction vector} used by the adversary. The key ideas in our label inference algorithms are best explained by describing a vector $\mathbf{v} = (v_1,\dots,v_N) \in \mathbb{R}^N$ and then constructing the prediction vector $\mathbf{u} = f(\mathbf{v}) := \left[ f(v_1),\dots,f(v_N) \right]$, where $f(x) = \frac{x}{1+x}$. 

For notational convenience, we define:
\begin{align} \label{eqn:equiv}
\mathcal{L}_{\mathbf{v}}\paran{\sigma} := \logloss\paran{f(\mathbf{v}),\sigma} = \logloss\paran{\mathbf{u},\sigma}.
\end{align}
Note that for any $\sigma \in \{0,1\}^N$, a simple algebraic manipulation of $\mathcal{L}_{\mathbf{v}}\paran{\sigma}$ gives the following:
\begin{align}
    \mathcal{L}_{\mathbf{v}}\paran{\sigma}  
    = \frac{-1}{N} \ln \parfrac{\prod_{i: \sigma_i = 1} v_i}{(1+v_1)\dots(1+v_N)} \label{eq:loss_expression}.
\end{align}
We will repeatedly refer to this form in our constructions. We are interested in vectors $\mathbf{v}$ for which the function $\mathcal{L}_{\mathbf{v}}$ is injective (i.e., a 1-1 correspondence). This injection will allow the adversary to ensure that the true labeling can be unambiguously recovered from the observed loss score.

\noindent\textbf{Models of Computation.}
We present our results in two models of arithmetic computation. The first model assumes \textit{arbitrary precision arithmetic}, which allows precise arithmetic results even with very large numbers. We refer to this as the 
$\mathsf{APA}$ model. While this model results in considerably slower arithmetic~\citep{brent2010modern}, it helps an adversary to perform label inference with fewer queries.

The second model is the more standard \textit{floating point precision model}, where the arithmetic is constrained by limited precision. We denote this model as $\mathsf{FPA}(\phi)$, where $\phi$ represents the number of bits of precision. We assume the following abstraction for the format for representing numbers in this model: 1 bit for sign, $(\phi-1)/2$ bits for the exponent and $(\phi-1)/2$ bits for the fractional part (mantissa).\footnote{More generally, one could allocate $\phi_a$ bits for the exponent and $\phi_b$ bits for the fractional part where $\phi_a + \phi_b = \phi-1$. This is the setting in our experiments.} This allows representing all numbers between $-2^{(\phi-1)/2}$ to $+2^{(\phi-1)/2}$, with a resolution of $2^{-(\phi-1)/2}$. The floating point precision model allows for more efficient arithmetic operations~\citep{brent2010modern}. We refer the reader to Chapter 4 in~\cite{ knuth2014art} for a detailed discussion on designing algorithms for standard arithmetic in these models.

\noindent\textbf{Threat Model.} 
We assume that the adversary sends a prediction vector $\mathbf{u}$ to a machine (server) that holds the (private) dataset $\sigma \in \{0,1\}^N$ (also called \emph{labeling}) and gets back $\logloss\paran{\mathbf{u},\sigma}$.
We assume that the loss is computed on all labels in $\sigma$ and that $N$ is known to the adversary. In the setting where the scores are noised, we assume that the adversary knows an upper bound on the resulting error. Often the former can be inferred from the knowledge of the precision on the machine that returns the loss score to the adversary. Finally, we assume that the adversary can make multiple queries with different prediction vectors and obtain the corresponding loss scores. Since this query access can be limited in practical settings, we optimize the number of queries required by our inference algorithms and prove formal lower bounds.

We refer to the adversary as a {\em polynomial-time} adversary if it is restricted to only polynomial-time computations, otherwise we refer to it as an \emph{exponential-time} adversary.

\noindent\textbf{Related Work.} \citet{whitehill2018climbing} initiated the study of how log-loss scores can be exploited in ML competitions, which was optimized for single-query inference in~\cite{aggarwal2020logloss}. However, the attack in~\cite{whitehill2018climbing} constructs prediction vectors (which they call probe matrices) by heuristically solving a min-max optimization problem in a space that is exponentially large in the number of labels their algorithm infers in a single query. This heuristic is based on a Monte-Carlo simulation, which severely limits the scalability of their algorithm to arbitrary large datasets (and/or to arbitrarily many number of classes). In contrast, our construction is simple and practical, which makes our attack efficient (see Section~\ref{sec:expts} for details). Additionally, the algorithms in both~\citep{whitehill2018climbing,aggarwal2020logloss} cannot be extended to the noised case -- the attacks by~\cite{aggarwal2020logloss}, in particular, use a single log-loss query and hence, cannot be run in the finite precision setting. Additionally, the use of Twin Primes in~\citep{aggarwal2020logloss} is missing a discussion on efficiently constructing such primes for arbitrarily large datasets. 

Label inference attacks based on other metrics such as AUC scores are also known~\citep{whitehill2016exploiting,matthews2013examination}, but we do not know of any connection between these and our setting. In a recent work,~\cite{blum2015ladder} demonstrate general techniques for safeguarding leaderboards in Kaggle-type competition settings against an adversarial boosting attack, in which the attacker observes loss scores on randomly generated prediction vectors to generate a labeling which, with probability $2/3$, gives a low loss function. 

The constructions introduced in this paper are related to the ideas prevalent in the coding theory literature.\!\footnote{This was pointed to us by the anonymous ICML reviewers.} The idea of designing the prediction vector ($\mathbf{u}$) can be viewed as constructing a coding scheme, whose input is the true labels, with the goal of recovering (decoding) the true labels after passing it through the log-loss function (which acts as the noisy channel). In particular, our constructions in Section~\ref{sec:bounded_noise}, have parallels to coding schemes based on Sidon sequences~\citep{o2004complete} and Golomb rulers~\citep{robinson1967class}. We believe that better label inference attacks could be designed by further exploring this connection with the coding theory literature.


\noindent\textbf{Additional Notation.} We will denote by $[n] = \{1,2,\dots,n\}$ and use $\mathbb{R}$ for real numbers, $\mathbb{Z}$ for integers, and $\mathbb{Z}^{+}$ for the set of positive integers. For any vector $v = [v_1,\dots,v_n]$, we use $v[:a] = [v_1,\dots,v_a]$ for $a \in [n]$. Unless specified, all logarithms use the natural base ($e$) and $p_1,p_2,\dots$ will denote the primes ($p_i$ being the $i^{th}$ prime).

\section{Label Inference from Raw Scores}
\label{sec:rawScores}
We begin our discussion with label inference from scores that are reported without any noise added. We will first assume arbitrary  precision arithmetic to explain the key idea behind our construction, and then extend the discussion to the case of floating-point precision.  Missing details from this section are collected in the Appendix~\ref{app:rawScores}. 

We begin by formally defining the label inference problem.
\begin{definition}
    \label{def:label_inference_no_nois}
    Let $\sigma \in \{0,1\}^N$ be an (unknown) labeling. The label inference problem is that of recovering $\sigma$ given $\logloss\paran{\mathbf{u}_1; \sigma},\dots,\logloss\paran{\mathbf{u}_M; \sigma}$. Here, $M$ is the number of queries and  $\mathbf{u}_i\in [0,1]^N$ are the prediction vectors.
\end{definition}

\subsection{Single Query Label Inference under Arbitrary Precision with Polynomial-time Adversary}
For our first result, we show that it is possible to extract all ground truth labels using just one query in the arbitrary precision ($\mathsf{APA}$) model. Our key tool is the Fundamental Theorem of Arithmetic~\cite{hardy1979statement}, which states that every integer has a unique prime factorization. Recall that from Equation~\eqref{eqn:equiv}, recovering $\sigma$ from $\logloss\paran{\mathbf{u};\sigma}$ is equivalent to recovering $\sigma$ from $\mathcal{L}_{\mathbf{v}}(\sigma)$. Our construction is described below.

\begin{theorem}\label{thm:basic_log_loss_attack}
There exists a polynomial-time adversary for the single-query label inference problem in the $\mathsf{APA}$ model.
\end{theorem}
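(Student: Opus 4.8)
The plan is to instantiate the vector $\mathbf{v}$ with distinct primes and then read off the labels via unique factorization. Concretely, I would set $v_i = p_i$ (the $i$-th prime) for each $i \in [N]$, so the single prediction vector is $\mathbf{u} = f(\mathbf{v})$ with $u_i = p_i/(1+p_i) \in [0,1]$. The adversary issues the one query $\mathbf{u}$ and receives $\ell = \mathcal{L}_{\mathbf{v}}(\sigma)$ (using \eqref{eqn:equiv}). By \eqref{eq:loss_expression} we have $e^{-N\ell} = \bigl(\prod_{i:\sigma_i=1} p_i\bigr)/\bigl(\prod_{i=1}^N(1+p_i)\bigr)$. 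Since $D := \prod_{i=1}^N (1+p_i)$ depends only on $N$, which is known, the adversary computes the positive integer $S := D\cdot e^{-N\ell} = \prod_{i:\sigma_i=1} p_i$ using exact arithmetic in the $\mathsf{APA}$ model.

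The recovery step is the Fundamental Theorem of Arithmetic: because the $p_i$ are distinct primes, $p_i \mid S$ if and only if $\sigma_i = 1$. So the adversary outputs the labeling defined by $\sigma_i = 1 \iff p_i \mid S$, which is exactly the true $\sigma$. The same observation shows $\mathcal{L}_{\mathbf{v}}$ is injective on $\{0,1\}^N$ (distinct labelings give distinct squarefree products $S$, hence distinct scores), so the recovered labeling is the unique one consistent with the observed score.

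What remains — and the one place needing a little care — is the running time. By the Prime Number Theorem, $p_N = O(N\log N)$, so the first $N$ primes can be generated by the Sieve of Eratosthenes in time $\mathrm{poly}(N)$, with each $p_i$ of $O(\log N)$ bits; then $D$ and $S$ have $O(N\log N)$ bits and are formed with $O(N)$ exact arithmetic operations. The key point keeping the adversary polynomial-time is that the final ``factorization'' of $S$ does not require factoring an arbitrary integer: the candidate prime divisors $p_1,\dots,p_N$ are already known, so it suffices to test divisibility of $S$ by each of them, i.e.\ $N$ divisions of an $O(N\log N)$-bit number by $O(\log N)$-bit numbers. In total this is $O(N)$ arithmetic operations, matching the bound in Table~\ref{tab:overview_results}, and polynomial bit-complexity overall.

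I do not expect a serious obstacle here; the only subtlety worth flagging is the interaction between the exact real score $\ell$ and the integrality of $S$ — since $e^{-N\ell}$ equals the rational $S/D$ exactly, multiplying by $D$ recovers $S\in\mathbb{Z}^{+}$ with no error, which is precisely what fails once precision is bounded or noise is added, motivating the more involved constructions in later sections.
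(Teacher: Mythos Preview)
Your proposal is correct and essentially identical to the paper's own proof: the paper also sets $\mathbf{v}=[p_1,\dots,p_N]$, rearranges \eqref{eq:loss_expression} to obtain $\prod_{i:\sigma_i=1}p_i = T e^{-N\mathcal{L}_{\mathbf{v}}(\sigma)}$ with $T=\prod_i(1+p_i)$, and invokes unique factorization, noting that checking divisibility by the known $p_i$'s takes $O(N)$ time. Your write-up simply adds a bit more detail on bit-complexity and prime generation, but the argument is the same.
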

\begin{proof}
Let $\sigma$ denote the (unknown) labeling. Define $\mathbf{v} = [p_1,\dots,p_N]$ and $T = \prod_{i=1}^N (1+p_i)$. Observe that the terms in Equation~(\ref{eq:loss_expression}) can be re-arranged to give $\prod_{i: \sigma_i = 1} p_i = T\exp\paran{-N\mathcal{L}_{\mathbf{v}}\paran{\sigma}}$. This gives the required injection since there is a unique product of primes for any given value of the right hand side of this equation. Moreover, the primes in this product uniquely define which elements in $\sigma$ have label $1$, since we use a distinct prime for each $i$.     
\end{proof}
As an example, for $N = 5$, let $\mathbf{v} = [2,3,5,7,11]$. Suppose the true labeling is $[0,1,1,0,1]$. Then, the adversary observes $\mathcal{L}_{\mathbf{v}}\paran{\sigma} = \frac{1}{5}\ln \parfrac{2304}{55}$ (obtained by plugging in $\mathbf{v}$ and $\sigma$ in Equation~\ref{eq:loss_expression}). For reconstructing the labels, observe that $T = 3 \times 4 \times 6 \times 8 \times 12 = 6912$, so that all we need is to compute primes that divide $T\exp\paran{-N\mathcal{L}_{\mathbf{v}}\paran{\sigma}} = 165 = 3 \times 5 \times 11$. This tells us that only the labels for the second, third and fifth datapoints must be 1, which is indeed true. 

We note that the construction above is not unique -- all the steps in the proof would go through if we replaced $\mathbf{v}$ with \emph{any} vector containing distinct primes (or even mutually co-prime numbers) by the unique factorization property. Moreover, since the adversary decides what primes go inside $\mathbf{v}$, it only takes $O(N)$ time to determine all the factors in the product above (e.g., by checking each $p_i$ one by one). We further note that our proof assumes that $Te^{-N\mathcal{L}_{\mathbf{v}}\paran{\sigma}}$ can be written precisely and unambiguously as an integer for any $\mathbf{v}$ and $\sigma$. This requires arbitrary precision. In practical scenarios, however, this may not be the case and hence, only a few labels may be correctly inferred. We discuss our construction for exact inference in this case next.

\begin{algorithm}[t]
\caption{Label Inference with No Noise in the $\mathsf{FPA}(\phi)$ Model (Polynomial Adversary)}
 \label{alg:multi_query}
 \begin{algorithmic}[1]
  \STATE \textbf{Input:} $N$ (length of vector), $\phi$ (bits of precision)
  \STATE \textbf{Output:} Labeling $\hat{\sigma} \in \{0,1\}^N$
  \STATE Initialize $\hat{\sigma} \gets [0,\dots,0]$.
  \STATE Let $m$ be the largest integer for which $p_m \leq 2^{(\phi-5)/4}$.
  \FOR{$k$ \textbf{in} $\{1,2,\dots,\lceil N/m \rceil\}$}
    \STATE Set $\mathbf{v}^{(k)}$ with $\mathbf{v}^{(k)}_j = p_r$ if $j =
                (k-1)m + r$ for some $r\in[m]$. Else, set $\mathbf{v}^{(k)}_j = 1$.
    
    \STATE Obtain the loss $\ell^{(k)}$ using $\mathbf{u}^{(k)} = f\paran{\mathbf{v}^{(k)}}$ as the prediction vector.
    \STATE Let $P^{(k)} = [p_1,\dots,p_{|P^{(k)}|}]$ be the set of primes inside $\mathbf{v}^{(k)}$ and $\alpha^{(k)} = \prod_{p \in P^{(k)}} (1+p)$.
    \STATE Compute $q^{(k)} \gets \alpha^{(k)}e^{-N\ell^{(k)} + (N-|P^{(k)}|)\ln 2}$.
    \STATE For $j \in \{1,\dots,|P^{(k)}|\}$, if $p_j$ divides $q^{(k)}$, then set $\hat{\sigma}_{(k-1)m + j} \gets 1$.
  \ENDFOR
  \STATE \textbf{Return} $\hat{\sigma}$.
 \end{algorithmic}
\end{algorithm}


\subsection{Label Inference under Bounded Precision with Polynomial-time Adversary}
To work in the more realistic scenario of bounded precision within the $\mathsf{FPA}(\phi)$ model, we are restricted in our choice of primes since the primes from Theorem~\ref{thm:basic_log_loss_attack} can get very large (as $N$ increases). We handle this using multiple queries, inferring only a few labels at a time (details outlined in Algorithm~\ref{alg:multi_query}). In each iteration, the prediction vectors use primes for the bits not yet inferred, and the remaining entries are kept fixed. This way the remaining entries contribute a fixed amount to the loss, which can be subtracted at the time of label inference. Thus, using a smaller number of primes allows us to work with the available precision budget. 
\begin{theorem}\label{thm:basic_log_loss_attack_FPA}
Let $\phi \geq 9$. There exists a polynomial-time adversary (from Algorithm~\ref{alg:multi_query}) for the label inference problem in the $\mathsf{FPA}(\phi)$ model using $\Theta\paran{1+N\phi2^{-\phi/4}}$ queries.
\end{theorem}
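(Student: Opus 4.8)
The argument breaks into three pieces: correctness under exact arithmetic; the query count and running time; and the finite-precision analysis, which is where the real difficulty lies.

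\textbf{Correctness.} Fix an iteration $k$ and set $S_k := \{\, r \in [|P^{(k)}|] : \sigma_{(k-1)m+r} = 1 \,\}$. The plan is to show that, computed exactly, $q^{(k)} = \prod_{r \in S_k} p_r$; the Fundamental Theorem of Arithmetic then gives $\{\, p_r : p_r \mid q^{(k)} \,\} = \{\, p_r : r \in S_k \,\}$, so the divisibility tests recover $\sigma_{(k-1)m+1},\dots,\sigma_{(k-1)m+|P^{(k)}|}$ correctly, and since the blocks $\{(k-1)m+1,\dots,(k-1)m+|P^{(k)}|\}$ partition $[N]$ as $k$ runs over $1,\dots,\lceil N/m\rceil$, the whole labeling $\sigma$ is reconstructed. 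The identity $q^{(k)} = \prod_{r\in S_k}p_r$ is the same manipulation as in Theorem~\ref{thm:basic_log_loss_attack}, now applied to the structured vector $\mathbf v^{(k)}$: in Equation~\eqref{eq:loss_expression} the $N-|P^{(k)}|$ coordinates equal to $1$ contribute a factor $2$ each to the denominator (hence the algorithm restores $(N-|P^{(k)}|)\ln 2$ in the exponent) and $1$ to the numerator, while the coordinate holding $p_r$ contributes $1+p_r$ to the denominator (absorbed into $\alpha^{(k)}$) and contributes $p_r$ to the numerator exactly when $\sigma_{(k-1)m+r}=1$; solving for $q^{(k)}:=\alpha^{(k)}e^{-N\ell^{(k)}+(N-|P^{(k)}|)\ln 2}$ yields $\prod_{r\in S_k}p_r$.

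\textbf{Query count and running time.} The loop issues exactly $\lceil N/m\rceil$ queries, so it suffices to pin down $m$. Since $m=\pi\big(2^{(\phi-5)/4}\big)$, Chebyshev's estimates $\pi(x)=\Theta(x/\ln x)$ (or the Prime Number Theorem) give $m=\Theta\big(2^{(\phi-5)/4}/((\phi-5)/4)\big)=\Theta(2^{\phi/4}/\phi)$, whence $\lceil N/m\rceil=\Theta(1+N/m)=\Theta\big(1+N\phi\,2^{-\phi/4}\big)$, matching the claim. For the running time: sieve $p_1,\dots,p_m$ once up front; thereafter each iteration just writes a length-$N$ vector, forms $\alpha^{(k)}$ and $q^{(k)}$, and runs at most $m$ divisibility checks, i.e.\ $O(N)$ arithmetic operations per query, so the adversary is polynomial time.

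\textbf{Finite-precision analysis (the main obstacle).} What remains is to justify that the threshold $p_m\le 2^{(\phi-5)/4}$ is exactly what makes the computation survive $\mathsf{FPA}(\phi)$: the server's loss $\ell^{(k)}$ and the adversary's evaluation of $q^{(k)}$ from it must accrue total error below $1/2$, so that rounding $q^{(k)}$ to the nearest integer is exact and the divisibility tests are meaningful. The crucial estimate is on the worst case $q^{(k)}\le\prod_{r=1}^{m}p_r=e^{\theta(p_m)}$: Chebyshev's bound $\theta(x)<x\ln 4$ together with $p_m\le 2^{(\phi-5)/4}$ controls its bit-length, and one then checks analogously that $\alpha^{(k)}=\prod_r(1+p_r)$, the returned value $N\ell^{(k)}$ (a sum of $N$ logarithms, each of magnitude $O(\phi)$, with cancellation), and the single exponentiation all stay within the representable range of $\mathsf{FPA}(\phi)$ with propagated rounding error $o(1)$. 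The hypothesis $\phi\ge 9$ is what guarantees $m\ge 1$ (so every query makes progress) and leaves headroom for the $(N-|P^{(k)}|)\ln 2$ correction term. I expect this bookkeeping — not the number-theoretic content — to be the only genuinely delicate step; the rest follows the template of Theorem~\ref{thm:basic_log_loss_attack}.
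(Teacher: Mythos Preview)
Your correctness argument and query-count derivation match the paper's. The divergence is entirely in the finite-precision step, and there both the approach and the outcome differ from the paper.

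The paper does not attempt an error-propagation analysis for $q^{(k)}$. Instead it derives the threshold $p_m\le 2^{(\phi-5)/4}$ from a representability constraint on the \emph{prediction vector} $\mathbf{u}^{(k)}$: it shows the minimum gap between two distinct entries $p_i/(1+p_i)$ and $p_j/(1+p_j)$ is at least $1/(4p_m^2)$, and for these entries to be distinguishable in $\mathsf{FPA}(\phi)$ one needs $(\phi-1)/2\ge \log_2(4p_m^2)$, which rearranges to $\phi\ge 5+4\log_2 p_m$. This is presented as simultaneously justifying the algorithm's choice of $m$ and giving a matching lower bound (no larger $m$ is usable with this prediction scheme), hence the $\Theta$.

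Your route, by contrast, has a genuine gap. With $p_m\approx 2^{(\phi-5)/4}$, Chebyshev gives $\log_2 q^{(k)}\le \theta(p_m)/\ln 2< 2p_m\approx 2^{(\phi-1)/4}$, so $q^{(k)}$ (and likewise $\alpha^{(k)}$) can have on the order of $2^{\phi/4}$ bits --- exponentially more than the $(\phi-1)/2$ mantissa bits available. An integer of that size cannot be pinned down to additive error below $1/2$ in $\mathsf{FPA}(\phi)$, so the ``round to the nearest integer and test divisibility'' step cannot be carried out as you describe, and the claimed $o(1)$ propagated error is false. The paper avoids this by anchoring the precision constraint on the input side (distinguishability of the entries of $\mathbf{u}$) rather than on the output side; you should follow that argument instead.
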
 

\begin{proof}
   We begin by proving the upper bound on the number of queries. We refer to Algorithm~\ref{alg:multi_query} for our proof. 
   
   Let $\sigma$ denote the true labeling. Fix some $m \in [N]$ and without loss of generality, assume that $N$ is a multiple of $m$, so that $\lceil N/m \rceil = N/m$. Then, in the $k^{th}$ iteration of the for-loop in Algorithm~\ref{alg:multi_query}, the prediction vector $\mathbf{u}^{(k)}$ uses $m$ primes $P^{(k)} = [p_1,\dots,p_m]$, with all other entries set to 1. Since the computation of log-loss is invariant of the relative order of the datapoints (as long as it aligns with the prediction vector), it suffices to show that the first iteration of the loop ($k=1$) correctly recovers the first $m$ labels. To see this, observe that Lemma~\ref{lem:loss_vector_with_ones} gives us that the log-loss score observed on $\mathbf{u}^{(k)}$ has the following form:
   \begin{align*}
       &-N\ell^{(1)} = -N\mathcal{L}_{\mathbf{v}^{(1)}}(\sigma)\\ 
       &= \ln \paran{\prod_{\substack{i: \sigma_i = 1\\1 \leq i \leq M}} p_i} - \paran{N-M}\ln 2 - \sum_{j=1}^M \ln (1+p_j)\\
       \implies& \paran{\prod_{j=1}^M (1+p_j)}e^{-N\ell^{(1)} + (N-M)\ln 2} = \prod_{\substack{i: \sigma_i = 1\\1 \leq i \leq M}} p_i.
    \end{align*}
   The product term on the left is the same as $\alpha^{(1)}$ and the product term on the right allows for unambiguous label recovery (via unique factorization) in Step 10 of Algorithm~\ref{alg:multi_query}.
   
   Next, we prove that in the $\mathsf{FPA(\phi)}$ model, a polynomial-time adversary can infer at most $2^{\phi/4}/\phi$ labels per query. 
   
   To see this bound on the number of labels inferred per iteration, first observe that:
   \begin{align*}
       &\min_{\substack{i,j \in [N]\\p_i \neq p_j}} \abs{\frac{p_i}{1+p_i} - \frac{p_j}{1+p_j}} \geq \frac{p_m}{1+p_m} - \frac{p_{m-1}}{1+p_{m-1}}
       \\
       &= \frac{p_m - p_{m-1}}{(1+p_m)(1+p_{m-1})} \geq \frac{p_m - p_{m-1}}{4p_mp_{m-1}}\\
       &= \frac{1}{4}\paran{\frac{1}{p_{m-1}} - \frac{1}{p_m}} \geq \frac{1}{4}\paran{\frac{1}{p_{m}-1} - \frac{1}{p_m}} \geq \frac{1}{4p_m^2},
   \end{align*}
   where the first line follows from the fact that $p_1 < \cdots < p_m$, and the second line from $p_m \ge 3$ and $p_{m-1} \geq 2$. Thus, in the $\mathsf{FPA}(\phi)$ model, we can only use $m$ that is large enough so that the following continues to holds:
   \begin{align*}
       &\frac{\phi-1}{2} \geq \log_2(4p_m^2) = 2+2\log_2 p_m\\
       \implies& \phi \ge 5 + 4\log_2 p_m.
   \end{align*}
   From this, we obtain that the largest prime $p_m$ that can be used must be at most $2^{(\phi-5)/4}$, as mentioned in Algorithm~\ref{alg:multi_query} (which establishes the optimality of our construction). 
   
   Finally, from Lemma~\ref{lem:prime_number_theorem}, since $p_m = \Theta(m\log m)$, we obtain $m = \Theta\paran{2^{\phi/4}/\phi}$, which gives the number of queries as $\Theta\paran{N\phi2^{-\phi/4}}$ for our inference attack.  
\end{proof}

Note that the bound on the number of queries is asymptotically tight. We prove this using the Prime Number Theorem~\cite{hadamard1896distribution,poussin1897recherches}, which describes the asymptotic distribution of primes among the integers. In Section~\ref{sec:expts}, we present experimental results to show Algorithm~\ref{alg:multi_query} can be used for label inference on real datasets.

\subsection{Extension to the Multiclass Case} 
Our construction using primes in Theorem~\ref{thm:basic_log_loss_attack} can be extended to multiple classes as well. We now use Definition~\ref{def:log_loss_multi}. 
We prove that using the powers of a distinct prime for each datapoint, we can infer all the labels in a single query -- in particular, using vector $\mathbf{v}_K$ of the following form:
$$v_{i,k} = \frac{p_i^{k-1}}{\sum_{j=1}^K p_i^{j-1}}\ \ \forall (i, k) \in [N] \times [K].$$ 

The proof of correctness follows from the Fundamental Theorem of Arithmetic (see Appendix~\ref{app:multi} for details).

\begin{theorem}\label{thm:multi_class_log_loss_attack}
There exists a polynomial-time adversary for K-ary label inference in the $\mathsf{APA}$ model using only a single log-loss query. For inference in the $\mathsf{FPA}(\phi)$ model, it suffices to issue $O\paran{1 + NK h(\phi)}$ queries, where the following holds when $K < N$: $$h(\phi) = O\paran{\frac{(\ln \phi)^2}{\paran{\phi + (N-K)\ln K}^{2/3}}}.$$
\end{theorem}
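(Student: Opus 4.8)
\textbf{Proof proposal for Theorem~\ref{thm:multi_class_log_loss_attack}.}

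The plan is to mirror the binary development: establish the single-query $\mathsf{APA}$ attack via unique factorization, then quantify the precision cost to get the $\mathsf{FPA}(\phi)$ query bound. For the $\mathsf{APA}$ part, I would plug the prescribed $\mathbf{v}_K$ (with $u_{i,k} = v_{i,k} = p_i^{k-1}/S_i$ where $S_i = \sum_{j=1}^K p_i^{j-1}$) into Definition~\ref{def:log_loss_multi}. The log-loss becomes $\frac{-1}{N}\sum_i \ln(p_i^{\sigma_i - 1}/S_i) = \frac{1}{N}\bigl(\sum_i \ln S_i - \sum_i (\sigma_i-1)\ln p_i\bigr)$, so that $\exp(-N\cdot\logloss)\cdot\prod_i S_i^{-1}\cdot\prod_i p_i^{-1} = \prod_i p_i^{\sigma_i - 1}$ after rearranging — i.e. the adversary can isolate $\prod_{i} p_i^{\sigma_i - 1}$, a positive rational whose numerator/denominator prime factorization reveals each exponent $\sigma_i - 1 \in \{0,1,\dots,K-1\}$ uniquely by the Fundamental Theorem of Arithmetic, since the $p_i$ are distinct. (I should be careful with the exact bookkeeping of the $S_i$ normalizers and whether $\sigma_i$ is taken in $\{1,\dots,K\}$; the cleanest route is to work directly with $\prod_i u_{i,\sigma_i}$ as in Equation~\eqref{eq:loss_expression}, extract $\prod_i p_i^{\sigma_i - 1}/\prod_i S_i$, multiply back by the known $\prod_i S_i$, and factor.) Recovery takes $O(N)$ trial divisions by the known primes, so the adversary is polynomial-time.

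For the $\mathsf{FPA}(\phi)$ bound I would follow the template of Theorem~\ref{thm:basic_log_loss_attack_FPA}: partition the $N$ datapoints into blocks, in each query assign powers of distinct small primes $p_1,\dots,p_m$ to the active block and set the inactive coordinates to a fixed known prediction (the uniform vector, contributing a fixed subtractable amount to the loss). The constraint is that the smallest gap between distinct achievable per-coordinate contributions must exceed the resolution $2^{-(\phi-1)/2}$. The new feature relative to the binary case is that each active coordinate now ranges over $K$ possible predictions $p_i^{k-1}/S_i$, so the relevant quantity inside the product is $\prod p_i^{e_i}$ with $e_i \in \{0,\dots,K-1\}$, and the largest such product over a block of $m$ primes is roughly $(p_m)^{m(K-1)}$ up to the $\prod S_i \le \prod(K p_m^{K-1})$ normalizers; I would bound $\log$ of the worst-case product/ratio by $\Theta\bigl(m(K-1)\ln p_m + m\ln K\bigr)$ and also account for the $N-m$ inactive coordinates contributing $(N-m)\ln K$ to the offset term. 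Setting this $\lesssim \phi$ and using the Prime Number Theorem ($p_m = \Theta(m\log m)$, via Lemma~\ref{lem:prime_number_theorem}) gives an equation of the form $m(K-1)\ln(m\log m) + (N-m+m)\ln K \lesssim \phi$; solving for $m$ asymptotically produces the claimed labels-per-query rate, and $\lceil N/m\rceil$ queries gives $O(1 + NKh(\phi))$ with $h(\phi) = O\bigl((\ln\phi)^2 / (\phi + (N-K)\ln K)^{2/3}\bigr)$ in the regime $K < N$.

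The main obstacle I anticipate is the $\mathsf{FPA}(\phi)$ calculation — specifically getting the precise $(\cdot)^{2/3}$ exponent and the $(\ln\phi)^2$ factor in $h(\phi)$ to come out correctly. This requires carefully tracking (i) how the $K$ distinct prediction values per coordinate, rather than $2$, inflate both the dynamic range of the decoded product and the magnitude of the normalizers $S_i$; (ii) the contribution of the $(N-K)\ln K$ term, which presumably arises because the inactive coordinates' uniform predictions contribute $\ln K$ each to the offset that must be representable; and (iii) inverting the relation $m(K-1)\log(m\log m) + \Theta(N\ln K) = \Theta(\phi)$ to extract $m$, where the $\log(m\log m)$ makes the inversion transcendental and forces the $(\ln\phi)^2$ correction and the fractional exponent. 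The $\mathsf{APA}$ single-query claim and the block-decomposition strategy are routine; the delicate part is purely the asymptotic precision accounting, and I would handle it by first deriving a clean implicit inequality and only then extracting the stated closed form, deferring the messiest steps to the appendix as the authors indicate.
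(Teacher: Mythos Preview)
Your $\mathsf{APA}$ argument is correct and matches the paper exactly. The $\mathsf{FPA}(\phi)$ strategy, however, differs structurally from the paper in a way that accounts precisely for the obstacle you flag.

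You partition only the datapoints into blocks of size $m$, letting each active coordinate range over all $K$ class predictions $p_i^0,\dots,p_i^{K-1}$. The paper instead uses a \emph{two-dimensional} block: it sets $\mathbf{v}^{(m)}_{i,k}=p_i^{k-1}$ only for $(i,k)\in[m]\times[m]$ and $1$ elsewhere, so a single query reveals, for each of the $m$ active datapoints, whether its label lies in the current size-$m$ class window (and which one if so). Covering all $N$ datapoints and all $K$ classes then costs $NK/m^2$ queries rather than your $N/m$. On the precision side, the decoded product becomes $\prod_{i\le m} p_i^{\sigma_i^*-1}\cdot[\sigma_i^*\le m]$ with exponents in $\{0,\dots,m-1\}$ (not $\{0,\dots,K-1\}$), and the normalizers are $\alpha(i,m,K)=\sum_{j\le m}p_i^{j-1}+(K-m)$; the paper bounds the log of the resulting expression via the Prime Number Theorem to obtain an inequality of the shape $m^3\ln m + (N-K)\ln K \lesssim \phi$, inverts it to $m\lesssim (\phi+(N-K)\ln K)^{1/3}/\ln\phi$, and reads off $h(\phi)=1/m^2$. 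The $2/3$ exponent and the $(\ln\phi)^2$ factor you are worried about are thus artifacts of this $m\times m$ blocking combined with the $NK/m^2$ query count, not of a subtler precision analysis of your one-dimensional scheme; your constraint $m(K-1)\ln(m\log m)\lesssim\phi$ is the correct one for your construction but solves to a different (and, incidentally, asymptotically smaller in $\phi$) query rate.
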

Observe that $\lim_{\phi \to \infty} h(\phi) = 0$, as expected.

\section{Label Inference from Noised Scores}
\label{sec:bounded_noise}
In this section, we describe a label inference attack for binary labels that works even when the reported scores are noised before the adversary gets to see them. We do not place any assumption on the noise distribution except that the adversary knows an upper bound on the amount of resulting error. Compared to attacks in the unnoised case presented in the previous section, our attacks here use a larger number of queries (for the same number of bits of precision) and currently only work for the binary label case. 


We begin by formally defining the label inference problem in presence of noise. Missing details from this section are collected in the Appendix~\ref{app:bounded_noise}.
\begin{definition}
    \label{def:label_inference_nois}
    Let $\tau > 0$ and $\sigma \in \{0,1\}^N$ be the (unknown) labeling. The $\tau$-robust label inference problem is that of recovering $\sigma$ given $\ell_1,\dots,\ell_M$, where 
    for all $i \in [M]$, it holds that $|\logloss\paran{\mathbf{u}_i; \sigma} - \ell_i | \leq \tau$. Here, $M$ is the number of queries and  $\mathbf{u}_i \in [0,1]^N$ are the prediction vectors.  
\end{definition}
As before, we discuss the results in both  $\mathsf{APA}$ and $\mathsf{FPA}$ arithmetic. In this case, we also make a distinction between exponential- and polynomial-time adversaries. 

\subsection{$\tau$-Robust Label Inference under Arbitrary Precision with Exponential-time Adversary} 
As in Section~\ref{sec:rawScores},  because of the equivalence between $\mathcal{L}_{\mathbf{v}}(\sigma)$ and $\logloss\paran{\mathbf{u};\sigma}$ for $\mathbf{u} = f(\mathbf{v})$, we focus on recovering $\sigma$ from $\mathcal{L}_{\mathbf{v}}(\sigma)$.  We start with a  definition of a vector $\mathbf{v}$ that helps with label inference in the presence of noise. To illustrate our key ideas, we first focus on an exponential-time adversary, and later extend the results to a polynomial-time adversary.


\begin{definition}\label{def:robustness}
Let $\tau > 0$. In the $\mathsf{APA}$ model, we say that a vector $\mathbf{v} \in (0,\infty)^N$ is $\tau$-robust if for all labelings $\sigma \in \{0,1\}^N$ and all $\ell$ such that $\abs{\mathcal{L}_{\mathbf{v}}\paran{\sigma} - \ell} \leq \tau$, there exists an algorithm (Turing Machine) $\mathcal{A}$ such that $\mathcal{A}\paran{\ell, N, \tau, \mathbf{v}} = \sigma$.
\end{definition}



We show that for all $\tau > 0$, there exists a $\tau$-robust vector in the $\mathsf{APA}$ model that can recover the true labeling in a single query. We do so by constructing a vector $\mathbf{v}$ such that for any two different labelings $\sigma_1, \sigma_2 \in \{0,1\}^N$, it holds that
$| \mathcal{L}_\mathbf{v}(\sigma_1) - \mathcal{L}_\mathbf{v}(\sigma_2) | > 2\tau$, so that the inference from the noised loss is unambiguous. To achieve this co-domain separation, we reduce our problem to constructing sets with a given minimum difference between its arbitrary subset sums. The construction from this reduction will help design our vector $\mathbf{v}$. The main steps of our approach are outlined in Algorithm~\ref{alg:bounded_noise_exponential} and described below. 

Let $\Delta(\mathbf{v}) := \min_{\sigma_1, \sigma_2 \in \{0,1\}^N} \left| \mathcal{L}_{\mathbf{v}}\paran{\sigma_1} - \mathcal{L}_{\mathbf{v}}\paran{\sigma_2} \right|$ be the magnitude of the minimum difference in the loss scores computed on any two distinct labelings. For any set $S$, let $\mu(S) := \min_{S_1, S_2 \subseteq S} \left| \sum_{s_1 \in S_1} s_1 - \sum_{s_2 \in S_2} s_2 \right|$ denote the magnitude of the minimum difference between any two subset sums in $S$. Remember that our goal is to ensure that $\Delta(\mathbf{v})$ is large (in particular, more than $2\tau$). The following lemma will be helpful in constructing such a vector $\mathbf{v}$.

\begin{lemma}\label{lem:leaky_sensitivity_helper}
	Let $\mathbf{v} = [v_1,\dots,v_N]$ be a vector with all entries distinct and positive. Define $\ln \mathbf{v} := [\ln v_1,\dots,\ln v_N]$. Then, it holds that $\Delta(\mathbf{v}) = \frac{1}{N}\mu(\ln \mathbf{v})$.
\end{lemma}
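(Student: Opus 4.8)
The plan is to unwind both $\Delta(\mathbf{v})$ and $\mu(\ln\mathbf{v})$ to a common expression using the closed form~\eqref{eq:loss_expression}. First I would recall that for any labeling $\sigma\in\{0,1\}^N$,
\[
\mathcal{L}_{\mathbf{v}}(\sigma) = \frac{-1}{N}\ln\parfrac{\prod_{i:\sigma_i=1} v_i}{\prod_{i=1}^N(1+v_i)} = \frac{-1}{N}\sum_{i:\sigma_i=1}\ln v_i + \frac{1}{N}\sum_{i=1}^N\ln(1+v_i).
\]
The second term is a constant $C$ independent of $\sigma$, so for any two labelings $\sigma_1,\sigma_2$ the difference telescopes to
\[
\mathcal{L}_{\mathbf{v}}(\sigma_1) - \mathcal{L}_{\mathbf{v}}(\sigma_2) = \frac{-1}{N}\paran{\sum_{i:(\sigma_1)_i=1}\ln v_i - \sum_{i:(\sigma_2)_i=1}\ln v_i},
\]
and the constant $C$ cancels.

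Next I would make the dictionary between labelings and subsets explicit: a labeling $\sigma\in\{0,1\}^N$ corresponds bijectively to the subset $S_\sigma = \{\ln v_i : \sigma_i = 1\}\subseteq \ln\mathbf{v}$, where this is a genuine set of distinct elements because the $v_i$ are distinct and positive (so the $\ln v_i$ are distinct, and the map $i\mapsto \ln v_i$ is injective). Under this correspondence, $\sum_{i:\sigma_i=1}\ln v_i = \sum_{s\in S_\sigma} s$. Taking absolute values in the displayed difference above gives
\[
\abs{\mathcal{L}_{\mathbf{v}}(\sigma_1) - \mathcal{L}_{\mathbf{v}}(\sigma_2)} = \frac{1}{N}\abs{\sum_{s\in S_{\sigma_1}} s - \sum_{s\in S_{\sigma_2}} s}.
\]
Now minimize both sides. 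On the left, $\Delta(\mathbf{v})$ minimizes over all pairs of \emph{distinct} labelings; on the right, as $(\sigma_1,\sigma_2)$ ranges over all pairs of distinct labelings, $(S_{\sigma_1},S_{\sigma_2})$ ranges over all pairs of distinct subsets of $\ln\mathbf{v}$. So $\Delta(\mathbf{v}) = \frac{1}{N}\mu(\ln\mathbf{v})$, provided one checks that the definition of $\mu(S)$ used here effectively ranges over \emph{distinct} subset pairs (otherwise the minimum would trivially be $0$ by taking $S_1 = S_2$); I would note that $\mu$ as used in the paper must implicitly mean distinct $S_1 \neq S_2$, matching the "distinct labelings" in $\Delta$, and that distinctness of subsets is equivalent to distinctness of their indicator labelings.

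The only subtlety — and the place I would be most careful — is the bijection between distinct labelings and distinct subsets, which hinges on the hypothesis that the $v_i$ are \emph{distinct}: if two coordinates had equal $v_i$, then two different labelings could induce the same multiset of logarithms (and $\ln\mathbf{v}$ would not even be a set of $N$ distinct reals), breaking the clean correspondence. Since Lemma~\ref{lem:leaky_sensitivity_helper} explicitly assumes all entries of $\mathbf{v}$ are distinct and positive, this is exactly what makes the identity hold, and the rest is the bookkeeping above. I do not expect any real obstacle beyond stating this correspondence cleanly and being explicit that both minima are over \emph{distinct} pairs.
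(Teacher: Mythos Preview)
Your proposal is correct and takes essentially the same approach as the paper: expand $\mathcal{L}_{\mathbf{v}}(\sigma)$ via Equation~\eqref{eq:loss_expression}, observe that the normalization term $\frac{1}{N}\sum_i\ln(1+v_i)$ cancels in the difference, and identify the remaining minimum over labeling pairs with $\mu(\ln\mathbf{v})$ via the bijection between labelings and subsets. You are in fact more explicit than the paper about why distinctness of the $v_i$ is needed and about the implicit restriction to distinct pairs in both $\Delta$ and $\mu$, which is a welcome clarification.
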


Now, let $\mathcal{S}_N = \{1,2^1,\dots,2^{N-1}\}$. Observe that $\mu\paran{\mathcal{S}_N} = 1$. This follows from the fact that $\mathcal{S}_N$ contains all $N$ distinct powers of $2$ (from 0 to $N-1$), and hence, every subset of $\mathcal{S}_N$ has a distinct sum since there are $2^N$ subsets of $\mathcal{S}_N$ and each subset corresponds to a unique integer in $[2^N-1]$. Using this set, we can achieve the desired separation between loss scores by scaling the elements in $\mathcal{S}_N$ as suggested by Lemma~\ref{lem:leaky_sensitivity_helper} and the construction in Algorithm~\ref{alg:bounded_noise_exponential}. We prove the correctness of this approach in the following.

\begin{algorithm}[t]
\caption{Label Inference with Bounded Error in the $\mathsf{APA}$ Model (Exponential Adversary)}
 \label{alg:bounded_noise_exponential}
 \begin{algorithmic}[1]
  \STATE \textbf{Input:} $N$, upper bound on error $\tau > 0$
  \STATE \textbf{Output:} Labeling $\hat{\sigma} \in \{0,1\}^N$
  \STATE Set $\mathbf{u} \gets f(\mathbf{v})$, where $v_i \gets 3^{2^iN\tau}$.
  \STATE Obtain the loss score $\ell$ using $\mathbf{u}$ as the prediction vector.
  \STATE \textbf{Return} $\hat{\sigma} \gets \arg\min_{\sigma \in \{0,1\}^N} \abs{\mathcal{L}_{\mathbf{u}}\paran{\sigma} - \ell}$.
 \end{algorithmic}
\end{algorithm}
\begin{theorem}\label{thm:tau_leaky}
For any $\tau > 0$, there exists an exponential-time adversary (from Algorithm~\ref{alg:bounded_noise_exponential}) for single-query $\tau$-robust label inference in the $\mathsf{APA}$ model.
\end{theorem}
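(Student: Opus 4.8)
The plan is to show that the vector $\mathbf{v}$ constructed in Algorithm~\ref{alg:bounded_noise_exponential} (with $v_i = 3^{2^i N\tau}$) is $\tau$-robust, so that the brute-force minimization in Step~5 returns the true labeling. By Definition~\ref{def:robustness}, it suffices to exhibit a Turing machine that, given $\ell$ with $|\mathcal{L}_{\mathbf{v}}(\sigma) - \ell| \le \tau$, outputs $\sigma$; the natural candidate is exactly $\arg\min_{\sigma' \in \{0,1\}^N} |\mathcal{L}_{\mathbf{v}}(\sigma') - \ell|$. This candidate is correct provided the loss values $\mathcal{L}_{\mathbf{v}}(\sigma')$ are separated by more than $2\tau$ across distinct labelings, i.e. $\Delta(\mathbf{v}) > 2\tau$: indeed, if $\sigma'' \ne \sigma$, then $|\mathcal{L}_{\mathbf{v}}(\sigma'') - \ell| \ge |\mathcal{L}_{\mathbf{v}}(\sigma'') - \mathcal{L}_{\mathbf{v}}(\sigma)| - |\mathcal{L}_{\mathbf{v}}(\sigma) - \ell| > 2\tau - \tau = \tau \ge |\mathcal{L}_{\mathbf{v}}(\sigma) - \ell|$, so the minimizer is unique and equals $\sigma$.

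So the core task is to lower-bound $\Delta(\mathbf{v})$. First I would apply Lemma~\ref{lem:leaky_sensitivity_helper}: since the $v_i$ are distinct and positive (they are distinct powers of $3$, as the exponents $2^i N\tau$ are distinct), we get $\Delta(\mathbf{v}) = \tfrac{1}{N}\mu(\ln\mathbf{v})$, where $\ln\mathbf{v} = [2^1 N\tau \ln 3, \dots, 2^N N\tau \ln 3]$. Factoring out the common scalar $c := N\tau\ln 3$, the entries of $\ln\mathbf{v}$ are $c$ times the elements of $\{2^1, 2^2, \dots, 2^N\}$, which is $2\cdot\mathcal{S}_N$ up to reindexing — a set of distinct powers of $2$, hence has the distinct-subset-sum property with $\mu = 2$ (or at least $\ge 1$). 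Scaling preserves the subset-sum structure: $\mu(c \cdot S) = c\,\mu(S)$. Therefore $\mu(\ln\mathbf{v}) \ge c = N\tau\ln 3$, giving $\Delta(\mathbf{v}) = \tfrac{1}{N}\mu(\ln\mathbf{v}) \ge \tau\ln 3 > 2\tau$ since $\ln 3 \approx 1.0986 > 2$... wait — $\ln 3 < 2$, so I must be more careful: I should use the full gap $\mu \ge 2$ from the powers $2^1,\dots,2^N$ (distinct powers of two starting at exponent $1$), which yields $\Delta(\mathbf{v}) \ge 2\tau\ln 3 > 2\tau$. That is precisely why the exponent carries the extra factor of $2$ (starting from $2^1$ rather than $2^0$) and the base is $3$ rather than $2$: it guarantees strict inequality $\Delta(\mathbf{v}) > 2\tau$ with room to spare.

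The main obstacle, and the point requiring the most care, is getting the constant right in the chain $\mu(\ln\mathbf{v}) = N\tau\ln 3 \cdot \mu(\{2^1,\dots,2^N\})$ and verifying $\mu(\{2^1,\dots,2^N\}) \ge 2$ — this is where the distinct-subset-sums argument for powers of two (already sketched for $\mathcal{S}_N$ in the text) must be invoked, and where one must check that the minimum is over \emph{distinct} subset-sum pairs so that the minimum is positive rather than zero. I would also need to remark that the algorithm runs in exponential time (it enumerates all $2^N$ labelings in Step~5) and that all arithmetic — in particular evaluating $\mathcal{L}_{\mathbf{v}}(\sigma')$ for the doubly-exponentially large numbers $3^{2^i N\tau}$ — is exact only in the $\mathsf{APA}$ model, which is why the result is stated there. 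Finally I would note that the bound is independent of $\tau$ in the sense that a single query always suffices; the price is that $v_i$ grows like $3^{2^i N\tau}$, matching the doubly-exponential size the later lower bounds show to be unavoidable.
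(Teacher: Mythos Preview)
Your proposal is correct and follows essentially the same approach as the paper: reduce via Lemma~\ref{lem:leaky_sensitivity_helper} to bounding $\mu(\ln\mathbf{v})$, use the distinct-subset-sum property of powers of two (scaled by $N\tau\ln 3$), and conclude $\Delta(\mathbf{v}) \ge 2\tau\ln 3 > 2\tau$. Your explicit triangle-inequality argument for why $\Delta(\mathbf{v})>2\tau$ makes the $\arg\min$ in Step~5 return $\sigma$, and your care about $\mu(\{2^1,\dots,2^N\}) = 2$ (not merely $\ge 1$), are in fact more detailed than the paper's own write-up.
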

We note a few keys remarks about Algorithm~\ref{alg:bounded_noise_exponential}. First, note that the exact knowledge of $\tau$ is not required -- any upper bound $\tau_{\text{max}}$ suffices. This eliminates the need for the adversary to know the exact noise generation process and compute the bound $\tau_{\text{max}}$ purely from its knowledge of the environment (\emph{e.g.} precision on the channel through which the scores are communicated). Second, at first glance, it may seem too good to be true that we can handle arbitrary noise levels added to the noise scores. Intuitively, too much noise must render any signal completely useless when recovering meaningful information. However, we remind the reader that Algorithm~\ref{alg:bounded_noise_exponential} requires arbitrary precision. In practice, any finite precision will limit the resolution to which the difference between the loss scores can be controlled using vectors that contain entries that are exponentially large in $N$ and $\tau$. As we will show next, multiple queries are required in this case, which, instead of separating scores over all the labelings at once, only separates scores over labelings that differ in a few bits (which can be significantly smaller in number). Third, note that the adversary iterates over the entire exponentially large ($2^N$) space of the labelings to recover the true labeling $\sigma$ (see step 5). While this is feasible if we assume an all-powerful adversary, in more realistic scenarios where the adversary is limited to only polynomial computations, at most $O(\log N)$ bits must be inferred at a time. We discuss this approach in more detail below. Lastly, observe that Algorithm~\ref{alg:bounded_noise_exponential} is a single-query inference. Even with the caveats mentioned above, this algorithm is a certificate to the guarantee that even noisy loss scores leak sufficient information about the private labels, which, given enough computation power, can be extracted unambiguously. A natural question to ask is if it is possible to perform this inference using smaller numbers than what our construction uses. We now explore if this is possible.

\textbf{Optimality of Single-Query $\tau$-Robust Inference.} \label{sec:opti}
We now prove that any solution to the single-query Robust Label Inference Problem must use a prediction vector that has large entries. At a high level, we do this by first reducing the problem of constructing sets with distinct subset sums into the problem of constructing $\tau$-robust vectors\footnote{Recall we used the reverse direction earlier in our construction.}. Having established the equivalence between the two problems through this reduction, we then prove our lower bound by generalizing a classic result by Euler, which states that any set of positive integers with distinct subset sums must have at least one element that is exponentially large in the number of elements in the set~\cite{benkoski1974weird,frenkel1998integer}. We state this result as a theorem since it may be of independent mathematical interest (missing details in Appendix~\ref{app:opti}). We let $\mathbb{Q}^{+}$ denote the set of positive rational numbers and for any set $S$, let $\pnorm{\infty}{S} = \max_{s \in S}\abs{s}$.

\begin{theorem}\label{thm:euler_general}
   For any set $S \subset \mathbb{Q}^{+}$ with $\mu(S) > \lambda$ for some $\lambda \in [0,\infty)$, it must hold that $\pnorm{\infty}{S} = \Omega(\lambda 2^{|S|})$.
\end{theorem}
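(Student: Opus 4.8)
The plan is to prove this by extracting and exploiting the two facts that are packed into the hypothesis $\mu(S) > \lambda$, in the style of the classical counting argument for distinct-subset-sum sets. Write $n = |S|$ and $\Sigma = \sum_{s \in S} s$. From $\mu(S) > \lambda \ge 0$ I would first conclude (i) all $2^{n}$ subset sums of $S$ are pairwise distinct — otherwise two distinct subsets with equal sums would force $\mu(S) = 0 \le \lambda$ — and (ii) any two distinct subset sums differ in absolute value by strictly more than $\lambda$. Since every element of $S$ is positive, the empty set realizes the smallest subset sum, $0$, and $S$ itself realizes the largest, $\Sigma$; hence all $2^{n}$ subset sums are distinct points of the interval $[0,\Sigma]$, pairwise separated by more than $\lambda$.

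The next step is a telescoping/pigeonhole estimate. Order the subset sums as $0 = t_{0} < t_{1} < \cdots < t_{2^{n}-1} = \Sigma$; then $t_{j+1} - t_{j} > \lambda$ for every $j$, so $\Sigma = \sum_{j=0}^{2^{n}-2}(t_{j+1}-t_{j}) > (2^{n}-1)\lambda$. On the other hand $\Sigma = \sum_{s \in S} s \le n\,\pnorm{\infty}{S}$, and combining the two gives $\pnorm{\infty}{S} > \lambda(2^{n}-1)/n$, which is already $\Omega(\lambda 2^{|S|})$ up to the factor $|S|$. To sharpen the polynomial factor I would invoke the Erdős--Moser second-moment trick: writing $S = \{s_{1},\dots,s_{n}\}$ and letting $X = \sum_{i} s_{i}\varepsilon_{i}$ for i.i.d.\ uniform $\varepsilon_{i} \in \{0,1\}$, each of the $2^{n}$ equiprobable values of $X$ has probability exactly $2^{-n}$, $\mathrm{Var}(X) \le \tfrac14 n\,\pnorm{\infty}{S}^{2}$, and any interval of length $L$ about $\mathbb{E}[X]$ contains at most $L/\lambda + 1$ of those values; Chebyshev then forces $\sqrt{n}\,\pnorm{\infty}{S} = \Omega(\lambda 2^{n})$, i.e.\ $\pnorm{\infty}{S} = \Omega(\lambda 2^{|S|}/\sqrt{|S|})$. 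Either bound establishes the "exponentially large" conclusion used downstream — in the application to $\tau$-robust vectors one takes $\lambda = 2N\tau$ via Lemma~\ref{lem:leaky_sensitivity_helper}, so the polynomial factor is harmless — and I would state whichever form matches the constants needed later.

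The argument never uses rationality, so it applies verbatim to $S \subset \mathbb{R}^{+}$; the restriction to $\mathbb{Q}^{+}$ is only natural because the companion reduction (from distinct-subset-sum sets into $\tau$-robust vectors) produces rational data. The only point requiring care is the reduction $\mu(S) > \lambda \Rightarrow$ (distinctness $+$ $\lambda$-separation of subset sums): one should replace a general competing pair $S_{1},S_{2}$ by the disjoint pair $S_{1}\setminus S_{2}$, $S_{2}\setminus S_{1}$, which leaves $|\sum S_{1} - \sum S_{2}|$ unchanged, so that the minimum defining $\mu(S)$ is genuinely taken over the $2^{n}$ subset sums, and separately discard the degenerate case $S_{1} = S_{2}$. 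I expect this bookkeeping to be the only genuine subtlety in the proof; the real "obstacle" is conceptual rather than technical — obtaining the clean $\Omega(\lambda 2^{|S|})$ with \emph{no} polynomial loss in $|S|$ is, already for integer sets with distinct subset sums, precisely the long-standing Erdős conjecture, so the theorem should (and, I believe, does) absorb a polynomial-in-$|S|$ factor into the $\Omega(\cdot)$.
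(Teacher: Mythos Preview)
Your argument is correct and takes a genuinely different route from the paper's. The paper proceeds by a chain of reductions: it clears denominators to pass from $S \subset \mathbb{Q}^{+}$ to integer sets, floors $\lambda$ to an integer, and scales to reduce to the base case of integer sets with $\mu(S) \ge 1$, at which point it invokes as a black box the lower bound $\pnorm{\infty}{S} = \Omega(2^{|S|})$ for integer sets with distinct subset sums (attributed in the text to Euler and the cited references). You instead give a direct, self-contained pigeonhole (and then Erd\H{o}s--Moser second-moment) argument that works uniformly for $S \subset \mathbb{R}^{+}$ and never needs rationality or the reduction chain. What the paper's route buys is that, \emph{granting} the cited black box, the stated bound follows with no polynomial-in-$|S|$ loss; what your route buys is a transparent proof that does not hinge on that citation. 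Your closing caveat is exactly right: eliminating the $1/\sqrt{|S|}$ factor is precisely the open Erd\H{o}s distinct-subset-sums conjecture, so $\Omega(\lambda\,2^{|S|}/\sqrt{|S|})$ is the best one can currently establish unconditionally --- and since the only downstream use (Theorem~\ref{thm:lower_bound_helper}) needs merely doubly-exponential growth in $N$, the polynomial loss is indeed harmless there.
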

The optimality of our construction of prediction vectors can now be proven as follows.

\begin{theorem}\label{thm:lower_bound_helper}
For sufficiently large $N$ and all $\tau > 0$, any $\tau$-robust vector $\mathbf{v}$ must have $\pnorm{\infty}{\mathbf{v}} = \Omega\paran{e^{2^N N \tau}}$. 
\end{theorem}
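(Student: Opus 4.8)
The plan is to combine Lemma~\ref{lem:leaky_sensitivity_helper} with the generalized Euler bound of Theorem~\ref{thm:euler_general}. First I would observe what ``$\tau$-robust'' forces at the level of separation: if $\mathbf{v}$ is $\tau$-robust, then for any two distinct labelings $\sigma_1 \neq \sigma_2$ we must have $|\mathcal{L}_{\mathbf{v}}(\sigma_1) - \mathcal{L}_{\mathbf{v}}(\sigma_2)| > 2\tau$ — otherwise a single noised score $\ell$ lying within $\tau$ of both (which exists by the triangle inequality whenever the gap is at most $2\tau$) makes recovery by $\mathcal{A}$ impossible. Hence $\Delta(\mathbf{v}) \geq 2\tau$ (and in particular the entries of $\mathbf{v}$ must be distinct, since repeated entries would collapse some pair of labelings). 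Applying Lemma~\ref{lem:leaky_sensitivity_helper}, $\Delta(\mathbf{v}) = \frac{1}{N}\mu(\ln\mathbf{v})$, so $\mu(\ln\mathbf{v}) \geq 2N\tau$.

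Next I would pass to the set $S = \ln\mathbf{v} = \{\ln v_1, \dots, \ln v_N\}$. The obstacle here is that Theorem~\ref{thm:euler_general} is stated for $S \subset \mathbb{Q}^{+}$, whereas $\ln v_i$ need be neither rational nor positive. Positivity is the easier of the two: if some $v_i \le 1$ we can argue that making all entries $>1$ only helps the adversary (or, more carefully, note that $\mu$ and the $\Omega$ bound are unaffected by a uniform shift of all elements, and a shift cannot decrease $\pnorm{\infty}{\mathbf{v}}$ by more than a constant factor for the conclusion we want — so WLOG all $\ln v_i \ge 1$, say). For rationality, I would invoke a density/limiting argument: the lower bound in Theorem~\ref{thm:euler_general} is an inequality of the form $\pnorm{\infty}{S} \ge c\,\lambda\,2^{|S|}$ with an absolute constant, and both $\pnorm{\infty}{\cdot}$ and $\mu(\cdot)$ are continuous in the (finitely many) coordinates of $S$; so any real set with $\mu(S) > \lambda$ can be perturbed to a rational one with $\mu > \lambda - \epsilon$ and $\pnorm{\infty}$ changed by at most $\epsilon$, and letting $\epsilon \to 0$ the same $\Omega$ bound holds for real $S$. (Alternatively one checks that the proof of Theorem~\ref{thm:euler_general} in the appendix — a second-moment / pigeonhole argument on subset sums — never actually uses rationality, only that $S$ is a finite set of reals, in which case this step is vacuous and I would just cite it directly.)

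With the real-valued version in hand, set $\lambda = 2N\tau$ and $|S| = N$ to get $\pnorm{\infty}{\ln\mathbf{v}} = \max_i |\ln v_i| = \Omega(N\tau\,2^N)$. Since all $v_i \ge 1$ this says $\max_i \ln v_i = \Omega(2^N N\tau)$, i.e. $\ln \pnorm{\infty}{\mathbf{v}} = \Omega(2^N N\tau)$, and exponentiating gives $\pnorm{\infty}{\mathbf{v}} = \Omega\!\paran{e^{c\, 2^N N\tau}}$ for an absolute constant $c>0$, which is the claimed bound (the "sufficiently large $N$" hypothesis absorbs the constant and the low-order terms from the positivity normalization). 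The main obstacle is entirely the second paragraph — reconciling the rational/positive hypotheses of Theorem~\ref{thm:euler_general} with the arbitrary real logarithms that arise — and I expect the cleanest route is to have phrased Theorem~\ref{thm:euler_general} for finite real sets in the first place, or to carry the $\epsilon$-perturbation argument; the rest is bookkeeping with the two lemmas.
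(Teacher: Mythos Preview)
Your approach is essentially identical to the paper's: use $\tau$-robustness $\Rightarrow \Delta(\mathbf{v}) > 2\tau$ (the paper packages this as Lemma~\ref{lem:min_distance}), convert via Lemma~\ref{lem:leaky_sensitivity_helper} to $\mu(\ln\mathbf{v}) > 2N\tau$, apply Theorem~\ref{thm:euler_general} with $\lambda = 2N\tau$, and exponentiate. The paper's proof in the appendix does exactly this in four sentences and simply applies Theorem~\ref{thm:euler_general} to $\ln\mathbf{v}$ without addressing the $\mathbb{Q}^{+}$ hypothesis at all, so your second paragraph (the rationality/positivity discussion and the $\epsilon$-perturbation patch) is in fact \emph{more} careful than the published argument, not a deviation from it.
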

We emphasize that this lower bound is only when a single log-loss query is used for inference (like used in Algorithm~\ref{alg:bounded_noise_exponential}). This is because if multiple queries can be issued, then smaller numbers in the vector construction may suffice. For example, using $N$ queries with vectors of the form $\mathbf{v} = [k,1,\dots,1]$, where $k > e^{2N\tau}$ suffice. 

\subsection{$\tau$-Robust Label Inference under Bounded Precision with Polynomial-time Adversary} 
We now present our algorithm for label inference under bounded number of bits of precision. As discussed above, at most $O(\log N)$ bits must be inferred at a time since any larger amount will be infeasible in polynomial time. The idea behind our construction is similar to that in Algorithm~\ref{alg:multi_query}, only this time, we construct prediction vectors that offer robustness to the noise added. We outline the main steps in Algorithm~\ref{alg:reconstruction_attack} and discuss it below. Note that Algorithm~\ref{alg:reconstruction_attack} only requires an upper bound on $\tau$.

\begin{algorithm}[t]
\caption{Label Inference with Bounded Error in the $\mathsf{FPA}(\phi)$ Model (Polynomial Adversary)}
 \label{alg:reconstruction_attack}
 \begin{algorithmic}[1]
  \STATE \textbf{Input:} $N$, upper bound on error $\tau > 0$, $\phi$
  \STATE \textbf{Output:} Labeling $\hat{\sigma} \in \{0,1\}^N$
  \STATE Initialize $\hat{\sigma} \gets [0,\dots,0]$.
  \STATE Let $m \gets \min \left\{ \left\lceil \log_2 N \right\rceil, \left\lfloor \log_2 \parfrac{\phi -8}{N \tau \ln 2} \right\rfloor \right\}$. 
  \FOR{$i$ \textbf{in} $\left\{1,\dots,\left\lceil \frac{N}{m} \right\rceil \right\}$}
    \STATE Form a vector $\mathbf{v}$ with $\mathbf{v}_j = 3^{2^{r}N\tau}$ if $j =
                (i-1)m + r$ for some $r\in[m]$. Else, set $\mathbf{v}_j = 1$.
    \STATE Let $\mathbf{u} = f(\mathbf{v}) = \left[ \frac{v_1}{1+v_1}, \dots, \frac{v_N}{1+v_N}\right]$. Obtain the loss score $\ell$ using $\mathbf{u}$ as the prediction vector.
    \STATE Let $\Sigma_{w,m} = 0^{(w-1)m}(0+1)^m0^{N-wm}$ (expressed as a regular expression -- truncated to $N$ bits) denote the set of all labelings that have $0$'s in the first $(w-1)m$ and last $(N-wm)$ positions.
    \STATE Compute $\sigma' \gets \arg\min_{\sigma \in \Sigma_{i,m}} \abs{\mathcal{L}_{\mathbf{u}}\paran{\sigma} - \ell}$.
    \STATE Set $\hat{\sigma}_k = \sigma'_k$ for $k \in \{(i-1)m+1,\dots,im\}$, while keeping all other entries in $\hat{\sigma}$ unchanged.
  \ENDFOR
  \STATE \textbf{Return} $\hat{\sigma}$.
 \end{algorithmic}
\end{algorithm}

\begin{lemma}\label{cor:vec_attack}
	Let $\tau > 0$ be a bound on the resulting error and $m \leq N$ be an integer. Let $\mathbf{v}_m = \left[ 3e^{2N\tau}, 3e^{4N\tau}, \dots, 3e^{2^{m}N\tau}, 1, \dots, 1 \right]$.
	Then, for any distinct $\sigma_1, \sigma_2 \in \{0,1\}^N$, it holds that if $\sigma_1[:m] = \sigma_2[:m]$, then $\mathcal{L}_{\mathbf{v}_m}\paran{\sigma_1} = \mathcal{L}_{\mathbf{v}_m}\paran{\sigma_2}$. Else, we have $\left| \mathcal{L}_{\mathbf{v}_m}\paran{\sigma_1} - \mathcal{L}_{\mathbf{v}_m}\paran{\sigma_2} \right| > 2\tau$.
\end{lemma}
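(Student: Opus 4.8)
The plan is to reduce the loss difference between any two labelings to a difference of subset sums of powers of two, and then use the fact that such sums are distinct even integers. Starting from the closed form in Equation~\eqref{eq:loss_expression}, the product $(1+v_1)\cdots(1+v_N)$ in the denominator is independent of the labeling, so for any $\sigma_1,\sigma_2\in\{0,1\}^N$ we have
$$N\bigl(\mathcal{L}_{\mathbf{v}_m}(\sigma_1)-\mathcal{L}_{\mathbf{v}_m}(\sigma_2)\bigr)\;=\;\sum_{i:\,\sigma_{2,i}=1}\ln v_i\;-\;\sum_{i:\,\sigma_{1,i}=1}\ln v_i .$$
The construction is chosen so that $\ln v_i = 2^i N\tau\ln 3$ for $i\in[m]$ while $\ln v_i = \ln 1 = 0$ for $i>m$; hence the tail coordinates drop out and, writing $S_j=\{\,i\le m:\sigma_{j,i}=1\,\}\subseteq[m]$ for $j=1,2$, the right-hand side equals $N\tau\ln 3\,\bigl(\sum_{i\in S_2}2^i-\sum_{i\in S_1}2^i\bigr)$.

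First I would dispose of the trivial direction: if $\sigma_1[:m]=\sigma_2[:m]$ then $S_1=S_2$, the bracket vanishes, and therefore $\mathcal{L}_{\mathbf{v}_m}(\sigma_1)=\mathcal{L}_{\mathbf{v}_m}(\sigma_2)$, which is the first assertion. For the second direction, assume $\sigma_1[:m]\ne\sigma_2[:m]$, so $S_1\ne S_2$ as subsets of $[m]$. The set $\{2^1,2^2,\dots,2^m\}$ is a rescaling of $\mathcal{S}_m$ and hence has pairwise distinct subset sums; equivalently, by unique binary expansion the map $T\mapsto\sum_{i\in T}2^i$ is injective on $2^{[m]}$, with image $\{0,2,4,\dots,2^{m+1}-2\}$. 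Consequently $\sum_{i\in S_2}2^i-\sum_{i\in S_1}2^i$ is a nonzero even integer, so its absolute value is at least $2$, and therefore
$$\bigl|\mathcal{L}_{\mathbf{v}_m}(\sigma_1)-\mathcal{L}_{\mathbf{v}_m}(\sigma_2)\bigr|\;=\;\tau\ln 3\cdot\Bigl|\textstyle\sum_{i\in S_2}2^i-\sum_{i\in S_1}2^i\Bigr|\;\ge\;2\tau\ln 3\;>\;2\tau ,$$
the last strict inequality using $\ln 3 > 1$. (One can also read off this separation from $\mu(\mathcal{S}_m)=1$ via the subset-sum argument behind Lemma~\ref{lem:leaky_sensitivity_helper} applied to the first $m$ coordinates, but the direct computation above is cleaner since $\mathbf{v}_m$ has repeated entries.)

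I do not expect a genuine obstacle here; the argument is short once the closed form in Equation~\eqref{eq:loss_expression} is in hand. The only two points that require care are (i) observing that the block of trailing $1$'s contributes $\ln 1 = 0$ and hence does not affect the loss difference — this is precisely what legitimizes inferring labels $m$ coordinates at a time in Algorithm~\ref{alg:reconstruction_attack} — and (ii) carrying the constant $\ln 3$ through the calculation, which is the whole reason the vector is built from base $3$ rather than base $2$: it turns the bare separation $2\tau$ into the strict separation $2\tau\ln 3 > 2\tau$ needed so that a $\tau$-additive perturbation of the loss cannot confuse two labelings with different length-$m$ prefixes.
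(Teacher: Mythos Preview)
Your proof is correct and follows essentially the same route as the paper: the paper factors the argument through Lemma~\ref{lem:loss_vector_with_ones} and Lemma~\ref{lem:partial_vectors}, but the core computation---reducing the loss difference to $\frac{\ln 3}{N}\cdot N\tau\cdot\bigl|\sum_{S_2}2^i-\sum_{S_1}2^i\bigr|$ and then using that distinct subsets of $\{2^1,\dots,2^m\}$ have subset sums differing by at least $2$---is identical to yours. Your reading $\ln v_i = 2^i N\tau\ln 3$ (i.e., $v_i = 3^{2^i N\tau}$, as in Algorithms~\ref{alg:bounded_noise_exponential} and~\ref{alg:reconstruction_attack}) is the intended one and matches the paper's own calculation.
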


If using the construction vector $\mathbf{v}_m$ from this above lemma, for any $m \leq N$, the loss scores computed on vector $\mathbf{v}_m$ are at least $2\tau$ apart for all labelings that differ in at least one index in $[m]$. Moreover, if the first $m$ bits are the same for any two labelings, this lemma tells us that the loss scores will be the same as well. This is the key idea that allows unambiguous label inference in Algorithm~\ref{alg:reconstruction_attack}. We formally state our result about the correctness of this algorithm and compute a bound on the number of bits required as follows.

\begin{theorem}\label{thm:proof_reconstruction}
	For any error bounded by $\tau > 0$ and $\phi \geq 8 + \lceil N\tau \ln 2 \rceil$, there exists a polynomial-time adversary (from Algorithm~\ref{alg:reconstruction_attack}) for the $\tau$-label inference problem in the $\mathsf{FPA}(\phi)$ model using $O\paran{\frac{N}{\log N} + \frac{N}{\log \paran{\phi/N\tau}}}$ queries.
\end{theorem}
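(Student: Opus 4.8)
The plan is to prove three things about Algorithm~\ref{alg:reconstruction_attack}: that it outputs the true labeling $\sigma$, that it issues $O\paran{\frac{N}{\log N}+\frac{N}{\log(\phi/N\tau)}}$ queries, and that every arithmetic step it performs is feasible in the $\mathsf{FPA}(\phi)$ model under the stated bound on $\phi$ (which, among other things, guarantees $m\ge 1$, so that the loop and the quantity $\lceil N/m\rceil$ are well defined).

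\emph{Correctness.} I would fix an iteration $i$ and let $B=\{(i-1)m+1,\dots,im\}$ be the block it handles, so that in Step~6 the vector $\mathbf{v}$ has $v_{(i-1)m+r}=3^{2^{r}N\tau}$ for $r\in[m]$ and $v_j=1$ for $j\notin B$. Two observations about the loss in the form~\eqref{eq:loss_expression} drive the argument. First, the normalizer $\prod_j(1+v_j)$ is independent of $\sigma$, and any coordinate $j\notin B$ has $u_j=f(1)=\tfrac12$, whose contribution to $\logloss$ equals $-\tfrac1N\ln\tfrac12$ regardless of whether $\sigma_j$ is $0$ or $1$; hence $\mathcal{L}_{\mathbf{v}}(\sigma)$ depends only on $\sigma$ restricted to $B$, and in particular $\mathcal{L}_{\mathbf{v}}(\sigma)=\mathcal{L}_{\mathbf{v}}(\sigma^{(i)})$, where $\sigma^{(i)}\in\Sigma_{i,m}$ is the unique labeling that is $0$ off $B$ and equals $\sigma$ on $B$. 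Second, since $\logloss$ is invariant under a simultaneous permutation of the coordinates of $\mathbf{v}$ and $\sigma$, I may move $B$ to the front and apply Lemma~\ref{cor:vec_attack}; concretely, for $\sigma',\sigma^{(i)}\in\Sigma_{i,m}$ the $\ln(1+v_j)$ terms cancel, so $\mathcal{L}_{\mathbf{v}}(\sigma')-\mathcal{L}_{\mathbf{v}}(\sigma^{(i)})$ equals $-\tfrac1N$ times the difference of two subset sums of $\{2^{r}N\tau\ln 3:r\in[m]\}$, and distinct such subset sums are distinct multiples of $2N\tau\ln 3$; hence whenever $\sigma'\ne\sigma^{(i)}$ (which, as both lie in $\Sigma_{i,m}$, means they differ on $B$) we get $\abs{\mathcal{L}_{\mathbf{v}}(\sigma')-\mathcal{L}_{\mathbf{v}}(\sigma^{(i)})}\ge 2\tau\ln 3>2\tau$. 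The observed score satisfies $\abs{\ell-\mathcal{L}_{\mathbf{v}}(\sigma^{(i)})}=\abs{\ell-\mathcal{L}_{\mathbf{v}}(\sigma)}\le\tau$, whereas $\abs{\ell-\mathcal{L}_{\mathbf{v}}(\sigma')}\ge 2\tau\ln 3-\tau>\tau$ by the triangle inequality; therefore the minimizer in Step~9 is exactly $\sigma^{(i)}$, so Step~10 writes the correct bits of $\sigma$ on $B$. Ranging over $i=1,\dots,\lceil N/m\rceil$ recovers all of $\sigma$.

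\emph{Queries and running time.} Each pass of the loop issues one query, so the count is $\lceil N/m\rceil=O(N/m)$. Substituting $m=\min\left\{\lceil\log_2 N\rceil,\left\lfloor\log_2\frac{\phi-8}{N\tau\ln2}\right\rfloor\right\}$ and using $1/\min\{a,b\}\le 1/a+1/b$ together with $\log_2\frac{\phi-8}{N\tau\ln2}=\Theta\paran{\log(\phi/N\tau)}$ (positive by the hypothesis on $\phi$) yields the claimed $O\paran{\frac{N}{\log N}+\frac{N}{\log(\phi/N\tau)}}$. For the time bound, each $\Sigma_{i,m}$ has $2^m\le 2N$ members and evaluating $\mathcal{L}_{\mathbf{v}}(\cdot)$ on each costs $O(N)$ operations, so every iteration performs $O(N^2)$ arithmetic operations on $\phi$-bit numbers and the adversary runs in polynomial time.

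\emph{Feasibility in $\mathsf{FPA}(\phi)$, and the main obstacle.} What remains — and what I expect to be the delicate part — is to certify that $\phi\ge 8+\lceil N\tau\ln 2\rceil$ together with the stated $m$ keeps every quantity the algorithm touches in an iteration (the entries of $\mathbf{v}$ and $\mathbf{u}$, the partial products and sums used to evaluate $\mathcal{L}_{\mathbf{v}}$, and the $\le 2N$ candidate loss values) inside the representable window $[-2^{(\phi-1)/2},2^{(\phi-1)/2}]$ with resolution $2^{-(\phi-1)/2}$, and computed accurately enough that the accumulated round-off in each $\mathcal{L}_{\mathbf{v}}(\sigma)$ stays strictly below $2(\ln 3-1)\tau$, the slack the strict inequality $2\tau\ln 3>2\tau$ leaves between the true co-domain separation and the decision threshold $2\tau$. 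The binding requirement is that, for each block position $r\le m$, the prediction probability $u_{(i-1)m+r}=1-(1+3^{2^{r}N\tau})^{-1}$ stay distinguishable from $1$ (and from the probabilities at the other block positions) so that the logarithms applied by the server still recover the binary weights $2^{r}$; translating this into a bound on $2^m$ is exactly where the factor $N\tau\ln 2$ inside $m$ and the term $\lceil N\tau\ln 2\rceil$ in the hypothesis on $\phi$ originate, the constant $8$ absorbing a fixed number of guard bits. The difficulty is that the entries are \emph{doubly} exponential ($3^{2^{r}N\tau}$), so this bookkeeping must simultaneously maximize the number $m$ of bits inferred per query, avoid overflow/underflow of the $\phi$-bit window, and prevent round-off from eroding the $\Theta(\tau)$ margin; once it is carried out, the FPA argmin in Step~9 coincides with the exact one analysed above, and correctness is preserved.
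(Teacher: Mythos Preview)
Your proposal is correct and follows essentially the same route as the paper: correctness via Lemma~\ref{cor:vec_attack} (the loss on block $B$ is independent of the off-block bits and separates distinct on-block patterns by more than $2\tau$), query count via $\lceil N/m\rceil$ with the stated $m$, and polynomial time via $|\Sigma_{i,m}|\le 2N$. The one place you leave as a sketch---the $\mathsf{FPA}(\phi)$ feasibility---is where the paper supplies the concrete calculation you anticipate: it bounds the minimum gap between distinct entries of $\mathbf{u}=f(\mathbf{v})$ from below by $\exp(-2^{m-1}N\tau)/16$ (so representing them requires $\phi\ge 8+2^{m}N\tau\ln 2$), and bounds $\max_\sigma \mathcal{L}_{\mathbf{v}}(\sigma)\le 2^{m+1}\tau$, which together force the choice $m=\lfloor\log_2\frac{\phi-8}{N\tau\ln 2}\rfloor$ you quote from Step~4.
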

\subsection{Extension to Other Noise Models}\label{sec:extension_other_noise}
In the previous section, we considered the case where the log-loss responses were all accurate within $\pm \tau$ for some $\tau > 0$. We now give an overview of how our attacks could be  presence of random and multiplicative noise (missing details in Appendices~\ref{app:random_noise} and~\ref{app:multiplicative_noise}, respectively).

\noindent\textbf{Random Noise.} 
The results above on bounded error can be extended  to handle randomly generated (additive) noise. 
We achieve this by safeguarding against the worst case magnitude of the noise that can be added for bounded noise distributions. For cases where this distribution is unbounded, we allow for some error tolerance. 

To see why this works, observe that for any bounded distribution, say over some interval $[a,b] \subset \mathbb{R}$, the error is bounded by $\max\{|a|,|b|\}$. Thus, using any $\max\{|a|,|b|\}$-robust vector will suffice. For distributions with unbounded support, however, this upper bound does not exist. However, given a failure probability $\delta > 0$, it is possible to compute this bound for vector robustness for any distribution. For example, in the $\mathsf{APA}$ model, in case of subexponential noise\footnote{A random variable $X$ is subexponential with parameters $\lambda^2$ and $\nu$ if 
$\mathbb{E}(e^{sX}) \leq \exp{\paran{\lambda^2s^2/2}}$ for all $|s| < 1/\nu$} with parameters $\lambda^2$ and $\nu>0$, and $\delta \in (0,1)$, a $\tau$-robust vector $\mathbf{v}$ constructed as in Algorithm~\ref{alg:bounded_noise_exponential} (with $\tau = \paran{2(\lambda+\nu)\sqrt{\ln 2/\delta}}$) will succeed in label inference with probability at least $1-\delta$.
\begin{figure*}[t]
    \centering
    \vspace{-0.5em}
    \subfloat[]{\includegraphics[width=120pt]{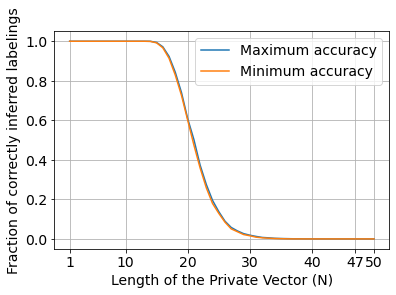}}
    \hspace{-0.1em}
    \subfloat[]{\includegraphics[width=120pt]{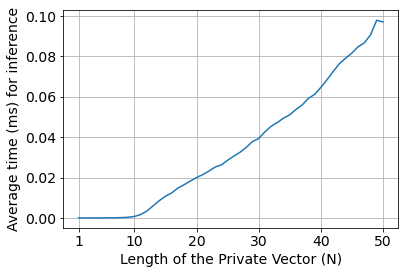}} 
    \hspace{-0.1em}
    \subfloat[]{\includegraphics[width=120pt]{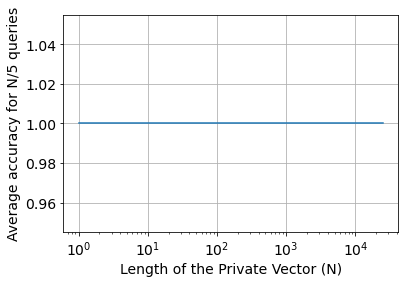}}
    \subfloat[]{\includegraphics[width=120pt]{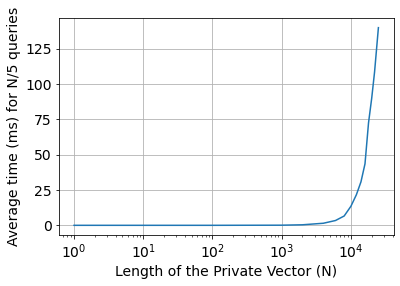}}
    \vspace{-1em}
    \newline
    \subfloat[]{\includegraphics[width=120pt]{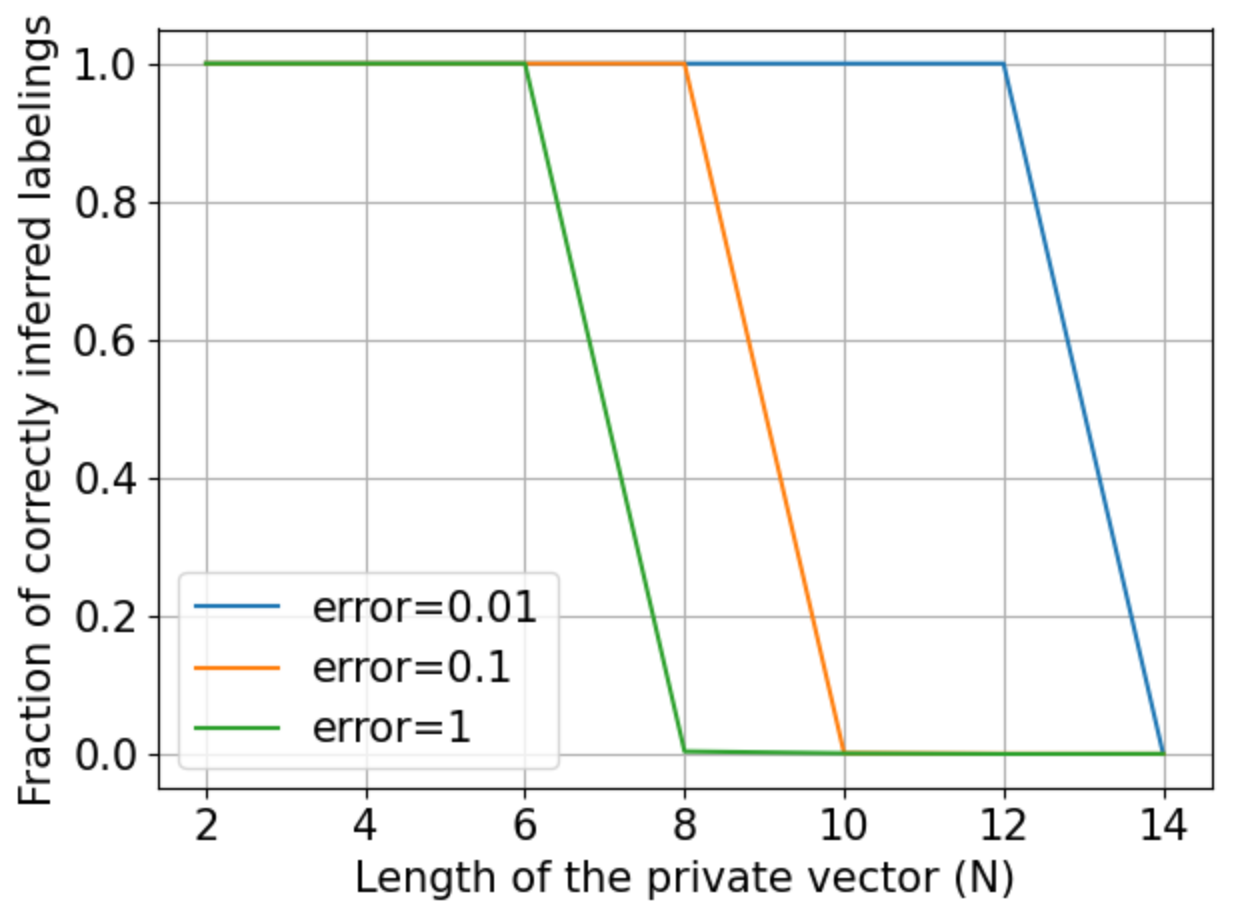}}
    \hspace{-0.1em}
    \subfloat[]{\includegraphics[width=120pt]{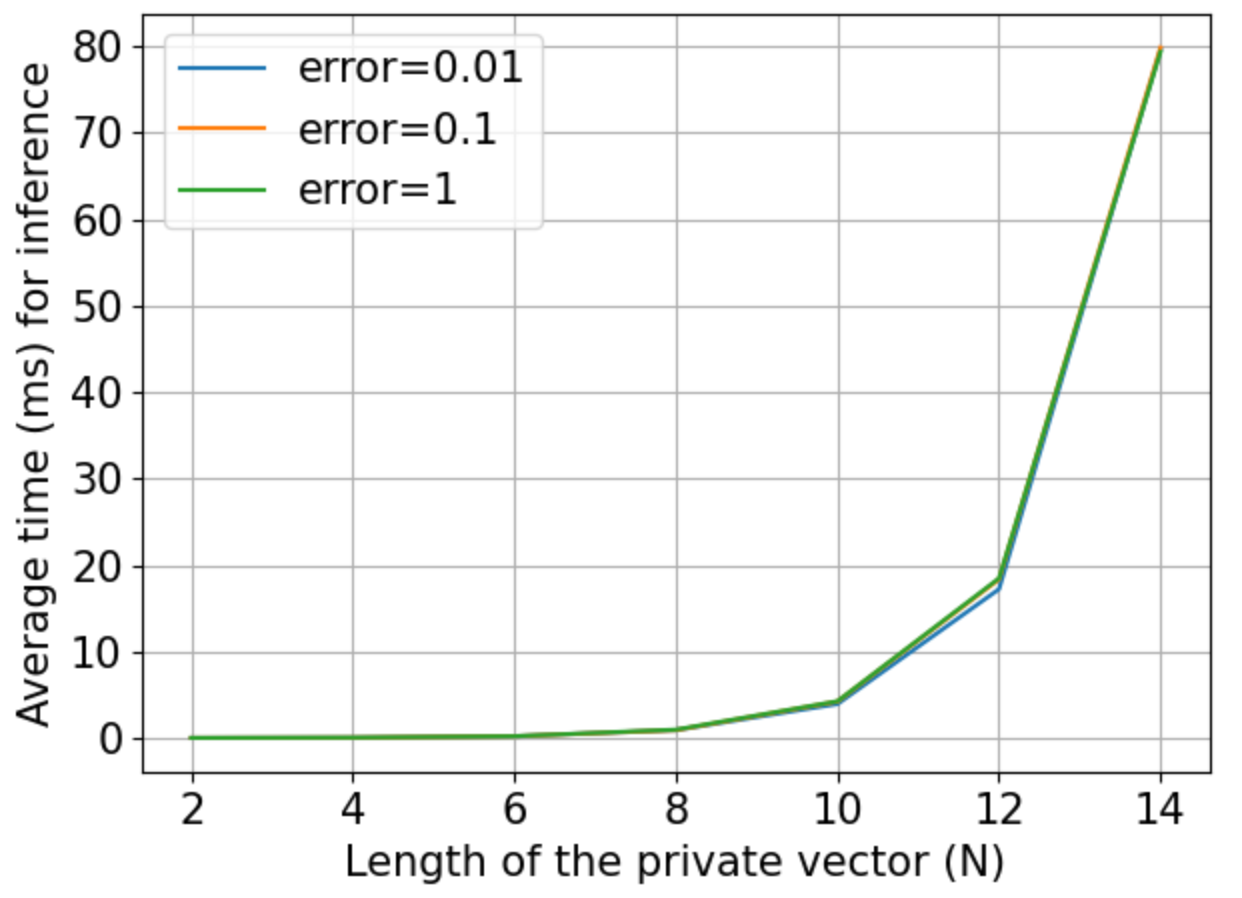}}
    \hspace{-0.1em}
    \subfloat[]{\includegraphics[width=120pt]{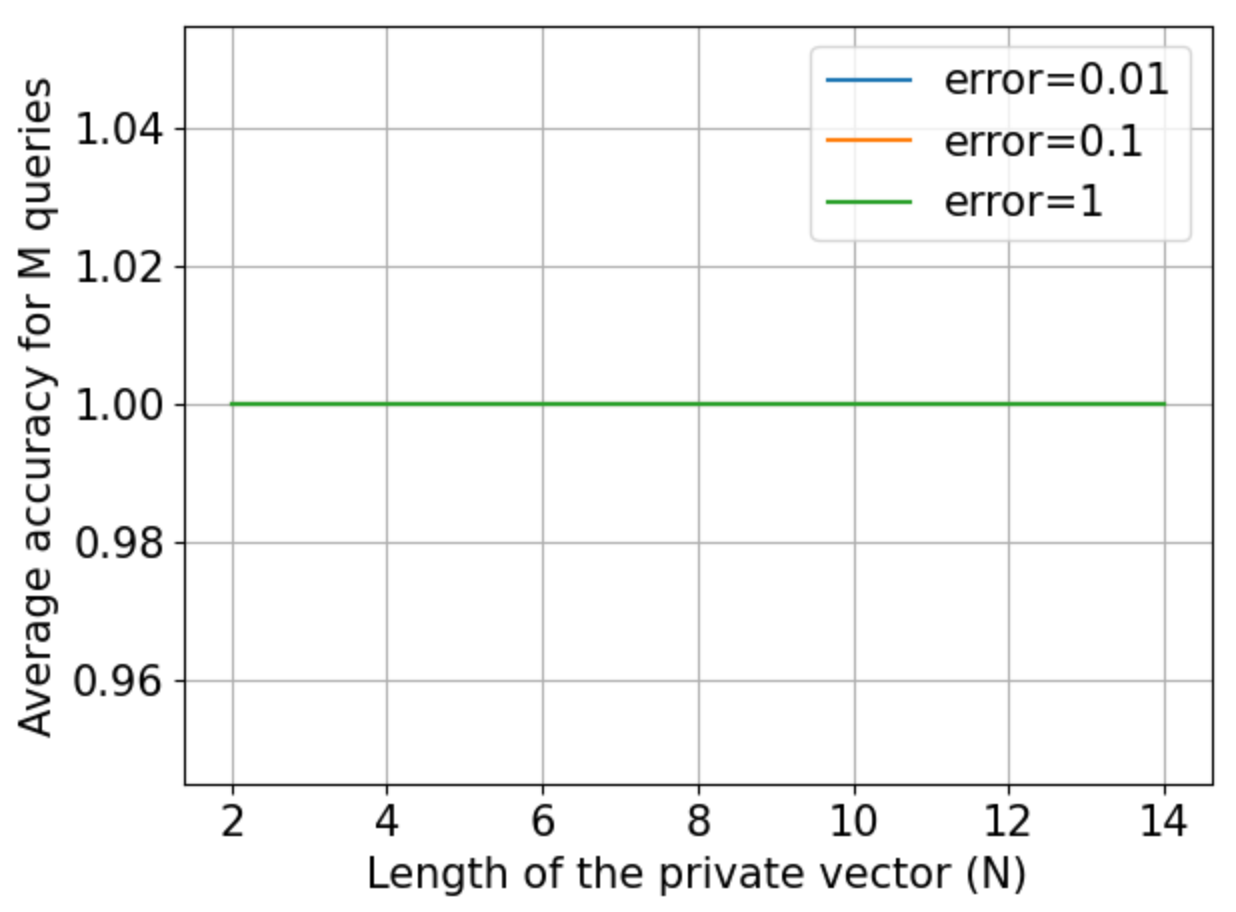}}
    \hspace{-0.1em}
    \subfloat[]{\includegraphics[width=120pt]{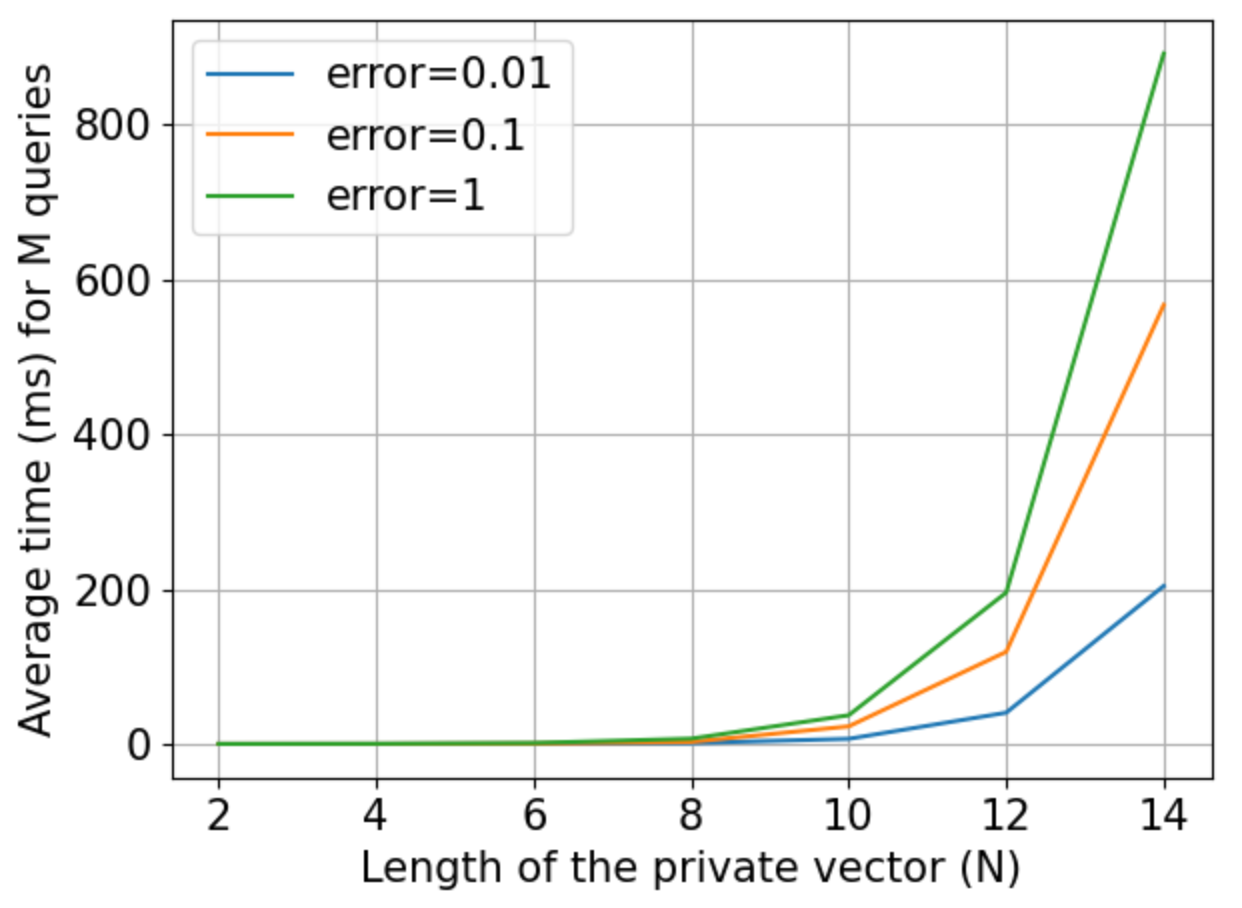}}
    \vspace*{-1ex}
    \caption{Results on simulated binary labelings. The first row shows the performance of the label inference without noise using for single-query (a) and (b), and multi-query (c) and (d). The second row shows the performance of label inference with bounded error (scale = $0.01$, $0.1$, and $1$) for single-query (e) and (f), and multi-query (g) and (h). Here, $M$ is the number of queries from Algorithm~\ref{alg:reconstruction_attack}.}
    \label{fig:experiment_plots}
    \vspace*{-2ex}
\end{figure*}

\noindent\textbf{Multiplicative Noise.} We briefly explore extending the analysis above to the case of multiplicative error. In this case, the adversary observes score $\ell$ that satisfies $(1-\alpha)\mathcal{L}_\mathbf{v}(\sigma) \le \ell \le (1+\alpha)\mathcal{L}_\mathbf{v}(\sigma)$, where the bound $\alpha \in [0,1)$ is known to the adversary. 
We prove that for any $\alpha \leq 1/8$, label inference can be done, $\left\lceil \log_2 \parfrac{1}{\alpha} - 2 \right\rceil$ labels at a time, using vectors that are $(2\ln 2)\alpha$-robust. Observe that when $\alpha \geq 1/4$, then no value of $\tau$ satisfies the constraint above, implying that vectors from Algorithm~\ref{alg:reconstruction_attack} cannot be used with any number of queries. The noise is more than what these vectors can guarantee handling.

\section{Experimental Observations}
\label{sec:expts}
We evaluate our attacks on both simulated binary labelings and real binary classification datasets fetched from the UCI machine learning dataset repository\footnote{https://archive.ics.uci.edu/ml/machine-learning-databases}. In this section, we focus on the binary label experiments, deferring the multiclass experiments to Appendix~\ref{app:experiments_multi_class}.  The results show that our algorithms are surprisingly efficient, even with a large number of datapoints. 
All experiments are run on a 64-bit machine with 2.6GHz 6-Core processor, using the standard IEEE-754 double precision format (1 bit for sign, 11 bits for exponent, and 53 bits for mantissa). For ensuring reproducibility, the entire experiment setup is submitted as part of the supplementary material.

\textbf{Results on Simulated Binary Labelings.} The first row in Figure~\ref{fig:experiment_plots} shows the plots for label inference with no noise, where we use the attack based on primes from Theorem~\ref{thm:basic_log_loss_attack}. The accuracy reported is with respect to 10000 randomly generated binary labelings for each $N$ (length of the vector to be inferred). For $N \le 10$ all labels are correctly recovered (see Figure~\ref{fig:experiment_plots}(a)). For $N \ge 47$, the maximum accuracy falls to zero. This is not unexpected because of the limited floating point precision on the machine.
The run time plot shows that this inference happens in only a few milliseconds (see Figure~\ref{fig:experiment_plots}(b)).  
The third plot in row 1 shows  multi-query label inference with no noise using Algorithm~\ref{alg:multi_query}. We use $N/5$ queries.\footnote{We did not optimize for the number of queries -- probably a smaller number of queries suffice for 100\% reconstruction.}
The results show that by changing the number of queries from 1 to $N/5$, we now have accuracy of 100\% even up to $N=10,000$, with the corresponding average run time shown in the fourth plot (Figure ~\ref{fig:experiment_plots}(d)). The average runtime is order of few milliseconds, even when $N=10000$, demonstrating the efficiency of this attack.

The second row in Figure~\ref{fig:experiment_plots} shows the plots for robust label inference with bounded noise. 
The setup is similar to the first row, except that a bounded noise of scale $0.01$, $0.1$, or $1$ is added to the log-loss scores (the noise is from about 1\% to 100\% 
of the raw log-loss score).
In Figure~\ref{fig:experiment_plots}(e), the label inference is performed using Algorithm~\ref{alg:bounded_noise_exponential}. 
As the noise scale increases from $0.01$ to $0.1$ and $1$, the length of the private vector on which we can recover all labelings correctly drops from 12 to 8 and 6, respectively. 
The run time displayed in the Figure~\ref{fig:experiment_plots}(f) is much higher than in the unnoised case (Figure~\ref{fig:experiment_plots}(b)) because the label inference algorithm for the noised setting  involves iterating through $2^N$ labelings to find out the one that is closest to the reported noisy log-loss.
The third plot shows the accuracy for multi-query label inference with bounded noise using Algorithm~\ref{alg:reconstruction_attack}. With multiple queries calibrated by the noise scale, we are able to recover all labelings correctly in all three noise cases,
with the corresponding run time displayed in Figure~\ref{fig:experiment_plots}(f).
\begin{table}[t]
    \centering
    \caption{Experimental results on real datasets using Algorithm~\ref{alg:multi_query}. Here, $N$ is the number of test samples in the dataset and \textbf{Acc$_q$} is the fraction of labels correctly inferred with $q$ queries.}\vspace{0.1in}
    \begin{tabular}{|c|c|c|c|c|}
\hline
     \textbf{Dataset}& \textbf{N}& \textbf{Acc$_{1}$}& \textbf{Acc$_{N/5}$}& \textbf{Time$_{N/5}$}  \\
     \hline
     
     \textbf{D1} & 25,000 &0.4891 & 1.0 & 53.41 ms\\ \hline
     
     
     \textbf{D2} & 1,372 & 0.4446& 1.0& 0.2 ms \\ \hline
     
     \textbf{D3} & 569 & 0.3448& 1.0& 0.06 ms \\ \hline
     
     \textbf{D4} & 306 & 0.2647& 1.0& 0.03 ms\\ \hline
\end{tabular}
    \label{tab:experiment_no_noise}
\end{table}

\textbf{Results on Real Binary Classification Datasets.} We now discuss (unnoised) label inference on real datasets. The list of datasets we use is as follows: 
\begin{CompactEnumerate}
\item \textbf{D1} (IMDB movie review for sentiment analysis~\cite{maas-EtAl:2011:ACL-HLT2011}) -- 0 (negative review) or 1 (positive review); 


\item \textbf{D2} (Banknote Authentication) -- 0 (fine) or 1 (forged); 

\item \textbf{D3} (Wisconsin Cancer) -- 0 (benign) and 1 (malignant); 

\item \textbf{D4} (Haberman’s Survival) -- 0 (survived) and 1 (died). 
\end{CompactEnumerate}

As our attacks construct prediction vectors that are independent of the dataset contents, we ignore the dataset features in our experiments. Our results are summarized in Table~\ref{tab:experiment_no_noise} with $N/5$ log-loss queries. In Figure~\ref{fig:queries_on_D1}, for \textbf{D1}, we plot the results as we double the queries from 1 to $N/5$. Note while the accuracy is low to start with, as soon as we get sufficiently large number of queries we get perfect recovery.


\begin{figure}[!ht]
    \centering
    \includegraphics[width=0.8\columnwidth]{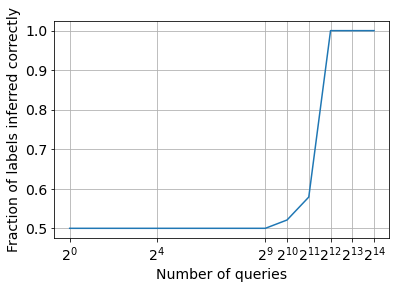}
    \caption{Accuracy of label inference on dataset \textbf{D1} as a function of the number of queries used by the adversary. For $N/5 = 5000$ queries, all labels have been correctly inferred.}
    \label{fig:queries_on_D1}\vspace{0.1in}
\end{figure}
\section{Conclusion}
In this paper, we discussed multiple label inference attacks from log-loss scores. These attacks allow an adversary to efficiently recover all test labels in a small number of queries, even when the observed scores can be erroneous due to perturbation or precision constraints (or both), without any access to the underlying dataset or model training. These results shed light into the amount of information that log-loss scores leak about the test datasets. 

A natural question to ask is if there are ways to defend against such inference attacks. One way is to apply Warner's randomized response mechanism~\cite{warner1965randomized} to the test labels before computing the loss score. This way, when an adversary runs our inference attacks, the labels that it recovers will be protected by plausible deniability. Yet another way is to report the scores on a randomly chosen subset of the test dataset. In this case, bounding the loss is not possible when the size of this subset is unknown. 

\section*{Acknowledgements}
We would like to thank the anonymous reviewers of ICML 2021 for their feedback. We are also grateful to Prakash Krishnamurthy, Shengsheng Liu, Huiming Song and the scientific publications team at Amazon for their helpful comments on the earlier drafts of this paper and providing the necessary resources for this research. 

\newpage
\bibliography{ref}
\bibliographystyle{icml2021}

\newpage
\appendix
\clearpage
\newpage
\onecolumn
\appendix
\noindent\rule{\textwidth}{1pt}
\begin{center}
{\Large \textbf{Appendix for ``Label Inference Attacks from Log-loss Scores''}}
\end{center}\vspace{-0.8em}
\noindent\rule{\textwidth}{1pt}

In this technical appendix, we present the missing details and omitted proofs from the main body of our paper.
\section{Missing Details from Section~\ref{sec:rawScores}} \label{app:rawScores}
\subsection{Label Inference under Bounded Precision with Polynomial-Time Adversary} \label{app:boundpoly}
For our results in this section, we make use of the well-known Prime Number Theorem (PNT), which describes the asymptotic distribution of the prime numbers among the positive integers. We use this result to quantify the number of queries required for our label inference attack in the $\mathsf{FPA}(\phi)$ model. The form of PNT which will be helpful for our analysis is stated as follows (we state it as a Lemma for use in our proof for the main result in this section).
\begin{lemma}\label{lem:prime_number_theorem}
\cite{hadamard1896distribution,poussin1897recherches} The $n^{th}$ prime number $p_n$ satisfies $p_n\sim n \log n$, 
where $\sim$ means that the relative error of this approximation approaches $0$ as $n$ increases.
\end{lemma}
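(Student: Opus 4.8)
\textbf{Proof proposal for Lemma~\ref{lem:prime_number_theorem}.}

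The plan is to derive the asymptotic $p_n \sim n\log n$ from the Prime Number Theorem in its counting-function form, namely $\pi(x) \sim x/\log x$, which we take as given (this is the theorem of Hadamard and de la Vall\'ee Poussin cited in the statement). The standard route is to invert the asymptotic relation between $\pi(x)$ and its functional inverse evaluated at $n$, which is exactly $p_n$. Concretely, since $\pi(p_n) = n$ by definition, substituting $x = p_n$ into $\pi(x) \sim x/\log x$ gives $n \sim p_n/\log p_n$, and the work is to solve this for $p_n$ asymptotically.

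First I would establish $\log n \sim \log p_n$. Taking logarithms in $n \sim p_n/\log p_n$ yields $\log n = \log p_n - \log\log p_n + o(1)$; since $\log\log p_n = o(\log p_n)$, we get $\log n \sim \log p_n$. Next I would substitute this back: from $n \sim p_n/\log p_n$ we have $p_n \sim n\log p_n$, and replacing $\log p_n$ by the asymptotically equivalent $\log n$ gives $p_n \sim n\log n$, which is the claim. One must be slightly careful that replacing a factor inside an asymptotic equivalence by an equivalent quantity is legitimate here — it is, because $p_n = n\log p_n(1+o(1)) = n\log n \cdot (\log p_n/\log n)(1+o(1)) = n\log n(1+o(1))$.

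The only mild subtlety — and the step I would treat most carefully — is the very first substitution $\pi(p_n)=n \Rightarrow n \sim p_n/\log p_n$: one should note that $p_n \to \infty$ as $n\to\infty$ (there are infinitely many primes, so the $n$th prime is well-defined and unbounded), so that the asymptotic $\pi(x)\sim x/\log x$ may indeed be applied along the subsequence $x = p_n$. Everything else is routine manipulation of $o(\cdot)$ terms. Since the lemma is a classical fact and is only invoked as a black box in the paper's main proofs, a short argument of this form — or simply a citation to a standard analytic number theory reference — suffices; the main obstacle is essentially nonexistent, the content being entirely in the cited Prime Number Theorem itself.
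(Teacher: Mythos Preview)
Your proposal is correct: the derivation of $p_n \sim n\log n$ from the counting-function form $\pi(x)\sim x/\log x$ via $\pi(p_n)=n$, then bootstrapping $\log n \sim \log p_n$ and substituting back, is the standard argument and all the steps you flag are handled properly.

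The paper, however, does not prove this lemma at all --- it simply states it with the Hadamard and de la Vall\'ee Poussin citations and adds the remark that the asymptotic implies $p_n = \Theta(n\log n)$. So your approach differs from the paper's only in that you actually supply the short inversion argument rather than treating the result as a black-box citation; as you yourself note at the end, either is adequate for the lemma's role in the paper.
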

We note that this bound on $p_n$ implies that $p_n = \Theta(n \log n)$ also holds. We begin with proving the following lemma.

\begin{lemma}\label{lem:loss_vector_with_ones}
    Let $M \leq N$ be a positive integer and $\mathbf{v} = [v_1,\dots,v_M,1,\dots,1] \in \paran{\mathbb{R}^+}^N$ be a real valued vector. Then, for any labeling $\sigma \in \{0,1\}^N$, it holds that:
	$$\mathcal{L}_{\mathbf{v}}\paran{\sigma} = - \frac{1}{N} \ln \paran{\prod_{\substack{i: \sigma_i = 1\\1 \leq i \leq M}} v_i} + \text{constant},$$
	where the constant term is independent of $\sigma$.
\end{lemma}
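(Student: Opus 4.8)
The statement is essentially a direct computation starting from Equation~\eqref{eq:loss_expression}, specialized to a vector whose last $N-M$ coordinates equal $1$. The plan is to substitute $\mathbf{v} = [v_1,\dots,v_M,1,\dots,1]$ into the closed form
\[
\mathcal{L}_{\mathbf{v}}\paran{\sigma} = \frac{-1}{N} \ln \parfrac{\prod_{i: \sigma_i = 1} v_i}{(1+v_1)\dots(1+v_N)}
\]
and then split both the numerator and the denominator into the ``first $M$ coordinates'' part and the ``trailing ones'' part.

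First I would handle the denominator: since $v_i = 1$ for $M < i \le N$, each factor $1+v_i$ equals $2$, so the denominator factors as $\paran{\prod_{j=1}^M (1+v_j)} \cdot 2^{N-M}$. Next I would handle the numerator $\prod_{i:\sigma_i=1} v_i$: split the product over indices $i$ with $\sigma_i=1$ into those with $i \le M$ and those with $i > M$; every factor in the latter group is $v_i = 1$ and hence contributes nothing, so the numerator equals $\prod_{i:\sigma_i=1,\, 1\le i\le M} v_i$. Substituting back and using $\ln(ab) = \ln a + \ln b$, we get
\[
\mathcal{L}_{\mathbf{v}}\paran{\sigma} = -\frac{1}{N}\ln\paran{\prod_{\substack{i:\sigma_i=1\\ 1\le i\le M}} v_i} + \frac{1}{N}\paran{(N-M)\ln 2 + \sum_{j=1}^M \ln(1+v_j)},
\]
and the second term depends only on $\mathbf{v}$, $M$, $N$ — not on $\sigma$ — which is exactly the claimed ``constant''.

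There is no real obstacle here; the only thing to be slightly careful about is the convention when $\{i : \sigma_i = 1, i \le M\}$ is empty, in which case the empty product is $1$ and $\ln 1 = 0$, so the formula still holds (the loss is then just the constant). I would note this edge case in passing. This lemma is then exactly what is invoked in the proof of Theorem~\ref{thm:basic_log_loss_attack_FPA} to justify the expression for $-N\ell^{(1)}$ there.
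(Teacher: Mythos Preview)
Your proposal is correct and follows essentially the same computation as the paper's proof: start from Equation~\eqref{eq:loss_expression}, split both numerator and denominator into the first $M$ coordinates and the trailing ones, simplify using $v_i=1$ (so $1+v_i=2$) for $i>M$, and identify the $\sigma$-independent constant as $\frac{1}{N}\paran{(N-M)\ln 2 + \sum_{j=1}^M \ln(1+v_j)}$. The only addition you make over the paper is the explicit remark about the empty-product edge case, which is a harmless clarification.
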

	
\begin{proof}
    From the definition of log-loss in Equation~(\ref{eq:loss_expression}), we compute the following:
\begin{align*}
	N\mathcal{L}_{\mathbf{v}}\paran{\sigma} &= - \ln \parfrac{\prod_{i: \sigma_i = 1} v_i}{(1+v_1)\cdots(1+v_N)} \\
	&= - \ln \parfrac{\prod_{\substack{i: \sigma_i = 1\\1 \leq i \leq M}} v_i}{2^{N-M}(1+v_1)\cdots(1+v_M)} = - \ln \paran{\prod_{\substack{i: \sigma_i = 1\\1 \leq i \leq M}} v_i} + \emph{constant},
\end{align*}	
where $\text{\emph{constant}} = (N-M)\ln 2 + \sum_{j=1}^M \ln (1+v_j)$. Dividing both sides by $N$, we obtain the desired result.
\end{proof}

\subsection{Extension to the Multiclass Case} \label{app:multi}

We begin by stating our result for the single-query label inference in the $\mathsf{APA}$ model.

\begin{theorem}\label{thm:multi_class_attack}
	There exists a polynomial-time adversary for $K$-ary label inference in the $\mathsf{APA}$ model using only a single log-loss query.
\end{theorem}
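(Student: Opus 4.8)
\textbf{Proof plan for Theorem~\ref{thm:multi_class_attack}.}

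The plan is to mimic the binary construction from Theorem~\ref{thm:basic_log_loss_attack}, but instead of assigning a \emph{single} prime to each datapoint, I would assign the distinct \emph{powers} of a single prime $p_i$ to the $K$ label classes of datapoint $i$. Concretely, I would use the prediction matrix $\mathbf{v}_K$ suggested in the main text, namely $v_{i,k} = p_i^{k-1}/\big(\sum_{j=1}^K p_i^{j-1}\big)$ for all $(i,k) \in [N]\times[K]$, which is a valid probability matrix since each row sums to $1$. The key observation is that for this choice, $\ln v_{i,\sigma_i} = (\sigma_i - 1)\ln p_i - \ln\big(\sum_{j=1}^K p_i^{j-1}\big)$, so the $K$-ary log-loss from Definition~\ref{def:log_loss_multi} becomes
\begin{align*}
-N\cdot\logloss(\mathbf{v}_K;\sigma) \;=\; \sum_{i=1}^N (\sigma_i-1)\ln p_i \;-\; \sum_{i=1}^N \ln\Big(\textstyle\sum_{j=1}^K p_i^{j-1}\Big).
\end{align*}
The second sum is a constant $C$ known to the adversary (it depends only on $N$, $K$, and the chosen primes). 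Hence from the observed score the adversary can compute the integer $Z := \prod_{i=1}^N p_i^{\sigma_i - 1} = \exp\!\big(-N\cdot\logloss(\mathbf{v}_K;\sigma) - C\big)$.

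The next step is to recover $\sigma$ from $Z$ by the Fundamental Theorem of Arithmetic. Since $Z = \prod_i p_i^{\sigma_i-1}$ and the $p_i$ are distinct primes, the exponent of $p_i$ in the unique prime factorization of $Z$ is exactly $\sigma_i - 1 \in \{0,1,\dots,K-1\}$; so setting $\hat\sigma_i = 1 + \big(\text{exponent of } p_i \text{ in } Z\big)$ recovers the labeling exactly and unambiguously. This establishes that $\logloss$ restricted to this $\mathbf{v}_K$ is injective on $[K]^N$, which is the single-query inference guarantee. For the polynomial-time claim, I would note that the adversary itself selects the primes $p_1,\dots,p_N$ (the first $N$ primes, computable in $\mathrm{poly}(N)$ time via e.g. a sieve), and extracting each exponent $\sigma_i - 1$ is done by trial division of $Z$ by the \emph{known} $p_i$ at most $K$ times, so the whole reconstruction is $O(NK)$ arithmetic operations plus the cost of generating the primes — all polynomial. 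One should also remark, as in the binary case, that the construction only needs pairwise coprime bases, not necessarily primes.

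The main obstacle is not conceptual but bookkeeping: one must be careful that $Z$ is genuinely an integer and that the constant $C$ is handled exactly, which is precisely where the $\mathsf{APA}$ assumption enters — in finite precision the exponentiation $\exp(-N\cdot\logloss - C)$ need not round to the right integer. I would explicitly flag that arbitrary precision is essential here (the $\mathsf{FPA}(\phi)$ version with its multi-query blow-up is the subject of the companion statement), and that the bases $p_i^{k-1}$ grow like $p_i^{K-1}$, so the integers involved are exponential in $K$ — harmless in $\mathsf{APA}$ but the reason the precision-bounded analysis needs the extra machinery of Theorem~\ref{thm:multi_class_log_loss_attack}. A minor point to verify cleanly is the degenerate cases $K=1$ (trivial) and the row-sum normalization, but these are routine.
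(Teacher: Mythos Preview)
Your proposal is correct and essentially identical to the paper's own proof: both use the prediction matrix $v_{i,k} = p_i^{k-1}/\sum_{j} p_i^{j-1}$, rearrange the $K$-ary log-loss to isolate the integer $\prod_{i=1}^N p_i^{\sigma_i-1}$, and then invoke the Fundamental Theorem of Arithmetic to read off each $\sigma_i$ as one plus the exponent of $p_i$. One small bookkeeping slip to fix: from your displayed identity $-N\cdot\logloss = \ln Z - C$, the correct inversion is $Z = \exp\!\big(-N\cdot\logloss(\mathbf{v}_K;\sigma) + C\big)$, not $-C$ (equivalently, in the paper's form, $Z = \exp(-N\cdot\logloss)\cdot\prod_i \sum_j p_i^{j-1}$).
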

\begin{proof}
	We only focus on $K \ge 3$ (since $K=2$ is equivalent to Theorem~\ref{thm:basic_log_loss_attack}). Let $\sigma^* \in [K]$ be the true labeling. 
	
	Define the matrix $\mathbf{v}$ as follows:
	\begin{align*}
		\mathbf{v}_{i,k} &= \frac{p_i^{k-1}}{\sum_{j=1}^K p_i^{j-1}}\quad \qquad \forall (i, k) \in [N] \times [K].
	\end{align*}
	We can perform the following algebraic manipulation of the log loss (using Definition~\ref{def:log_loss_multi}) as follows:
	\begin{align*}
		\logloss\paran{\mathbf{v}; \sigma^*} &= \frac{-1}{N}\sum_{i=1}^N \sum_{k=1}^K \ \Big([\sigma_i^* = k] \cdot \ln \mathbf{v}_{i,k} \Big) = \frac{-1}{N}\ln \paran{\prod_{i=1}^N \mathbf{v}_{i,\sigma^*_i}}\\ 
		&= \frac{-1}{N}\ln \prod_{i=1}^N \parfrac{p_i^{\sigma^*_i-1}}{\sum_{j=1}^K p_i^{j-1}} = \frac{-1}{N}\ln \parfrac{\prod_{i=1}^N p_i^{\sigma^*_i-1}}{\prod_{i=1}^N \sum_{j=1}^K p_i^{j-1}}.
	\end{align*}
	Rearranging the terms above, we obtain the following:
	\begin{align*}
		\prod_{i=1}^N p_i^{\sigma^*_i-1} &= \exp\paran{-N \cdot \logloss\paran{\mathbf{v}; \sigma^*}} \paran{\prod_{i=1}^N \sum_{j=1}^K p_i^{j-1}}.
	\end{align*}
	Thus, similar to the binary case, from the Fundamental theorem of arithmetic, given the value on the right hand side above, we can uniquely factorize this value to obtain the true labels on the left (which can be done efficiently since the list of primes and the number of classes are known).
\end{proof}

To extend this algorithm for the $\mathsf{FPA}(\phi)$ model, we first make some important observations about using multiple queries for label inference (similar to the binary case). We analyze the case where $K < N$ and focus only on inferring the first set of labels (the analysis for other cases follow using similar principles). We let $m < K$ denote an upper bound on the number of labels that we will infer in a single query (note that previously, $m$ was the exact number of labels inferred per query). Moreover, for simplicity, we will assume that $m$ divides both $K$ and $N$. This will not change the asymptotic number of queries required by our inference attack.

Define the matrices $\mathbf{v}^{(m)}$ and $\mathbf{u}^{(m)}$ as follows:
	\begin{align*}
		\mathbf{v}^{(m)}_{i,k} &= \begin{cases}
		p_i^{k-1} & \forall (i, k) \in [m] \times [m]\\
		1 & \text{otherwise.}
		\end{cases}, \text{ and }\\ \mathbf{u}^{(m)}_{i,k} &= \frac{\mathbf{v}^{(m)}_{i,k}}{\sum_{j=1}^K \mathbf{v}^{(m)}_{i,j}} = \begin{cases}
		\frac{p_i^{k-1}}{\sum_{j=1}^m p_i^{j-1} + (K-m)} & \forall (i, k) \in [m] \times [m]\\
		\frac{1}{\sum_{j=1}^m p_i^{j-1} + (K-m)} & \text{otherwise.}
		\end{cases}
	\end{align*}
Now, substituting this in Definition~\ref{def:log_loss_multi}, we obtain the following:
\begin{align*}
    \logloss\paran{\mathbf{u}^{(m)}; \sigma^*} &= \frac{-1}{N}\sum_{i=1}^N \sum_{k=1}^K \ \Big([\sigma_i^* = k] \cdot \ln \mathbf{u}^{(m)}_{i,k} \Big) \\
    &= \frac{-1}{N}\sum_{i=1}^m \sum_{k=1}^m \ \Big([\sigma_i^* = k] \cdot \ln \mathbf{u}^{(m)}_{i,k} \Big) + \frac{-1}{N}\sum_{i=m+1}^N \sum_{k=m+1}^K \ \Big([\sigma_i^* = k] \cdot \ln \mathbf{u}^{(m)}_{i,k} \Big)\\
    &= \frac{-1}{N}\sum_{i=1}^m \ln \parfrac{p_i^{\sigma_i^*-1} \cdot [\sigma_i^* \leq m]}{\sum_{j=1}^m p_i^{j-1} + (K-m)} - \frac{1}{N}\sum_{i=m+1}^N \ln \parfrac{1}{\sum_{j=1}^m p_i^{j-1} + (K-m)} \\
    &= \frac{1}{N}\sum_{i=1}^m\ln \paran{\sum_{j=1}^m p_i^{j-1} + (K-m)} -\frac{1}{N}\sum_{i=1}^m \ln \paran{p_i^{\sigma_i^*-1}  \cdot [\sigma_i^* \leq m]} + \frac{(N-m)\ln K}{N}\\
    &= \frac{1}{N}\sum_{i=1}^m\ln \paran{\sum_{j=1}^m p_i^{j-1} + (K-m)} -\frac{1}{N} \ln \paran{\prod_{i=1}^m \paran{p_i^{\sigma_i^*-1}  \cdot [\sigma_i^* \leq m]}} + \frac{(N-m)\ln K}{N}.
\end{align*}
Thus, when the score $\ell^{(m)} := \logloss\paran{\mathbf{u}^{(m)}; \sigma^*}$ is observed, the adversary can compute the following:
\begin{align*}
    \prod_{i=1}^m \paran{p_i^{\sigma_i^*-1}  \cdot [\sigma_i^* \leq m]} &= \exp\paran{-N\ell^{(m)}+(N-m)\ln K + \sum_{i=1}^m\ln \alpha(i,m,K)},
\end{align*}
where $\alpha(i,m,K) = \sum_{j=1}^m p_i^{j-1} + (K-m)$. Using the Fundamental Theorem of Arithmetic, the product on the left will allow recovering labels for datapoints that have labels in $[m]$.

\noindent\textbf{Restatement of Theorem~\ref{thm:multi_class_log_loss_attack}.} \emph{There exists a polynomial-time adversary for K-ary label inference in the $\mathsf{APA}$ model using only a single log-loss query. For inference in the $\mathsf{FPA}(\phi)$ model, it suffices to issue $O\paran{1 + NK h(\phi)}$ queries, where $$h(\phi) = O\paran{\frac{(\ln \phi)^2}{\paran{\phi + (N-K)\ln K}^{2/3}}}.$$}
\begin{proof}
	The largest value of $m$ that works in the $\mathsf{FPA}(\phi)$ model can be obtained (asymptotically) by setting $(\phi-1)/2 \geq \log_2 \paran{\prod_{i=1}^m \paran{p_i^{\sigma_i^*-1}  \cdot [\sigma_i^* \leq m]}}$, which will allow enough resolution for this disambiguation. Simplifying this, we obtain:
\begin{align*}
    \phi &\geq 1 + 2\log_2 \paran{\exp\paran{-N\ell^{(m)}+(N-m)\ln K + \sum_{i=1}^m\ln \alpha(i,m,K)}}\\
    &\geq 1+\frac{2}{\ln 2}\paran{(N-K)\ln K + \sum_{i=1}^m \ln \paran{\sum_{j=1}^m p_i^{j-1}}}\\
    &\geq 1+\frac{2}{\ln 2}\paran{(N-K)\ln K + (m-1) \sum_{i=1}^m \ln p_i}\\
    &\gtrsim 3\paran{(N-K)\ln K + (m-1) \sum_{i=1}^m i\ln i}\ \ \ \text{(Using the Prime Number Theorem -- see Lemma~\ref{lem:prime_number_theorem})}\\
    &\geq c\paran{(N-K)\ln K + m^3\ln m} \ \ \ \ \ \text{for some constant $c > 0$.}
\end{align*}
This gives an upper bound on $m$ by simplifying $m^3 \ln m \leq \phi/c - (N-K)\ln K$, which gives $m\ln m \leq \parfrac{\phi/c - (N-K)\ln K}{3}^{1/3}$, or equivalently, $m \leq c' \parfrac{\paran{\phi + (N-K)\ln K}^{1/3}}{\ln \phi}$ for some constant $c' > 0$ (here we use the fact that $x\ln x = y$ holds when $x = \Theta(y/\ln y)$). Setting $m = m^*$, where $m^*$ is this upper bound, we can then obtain the number of queries as $NK/(m^*)^2$, which gives $h(\phi) = 1/(m^*)^2$, or equivalently,
\begin{align*}
	h(\phi) = O\paran{\frac{(\ln \phi)^2}{\paran{\phi + (N-K)\ln K}^{2/3}}}.
\end{align*}
\end{proof}

We defer optimization of this algorithm and the detailed discussion of Robust Label Inference for multi-class classification to future work.

\section{Missing Details from Section~\ref{sec:bounded_noise}} \label{app:bounded_noise}

\subsection{$\tau$-Robust Label Inference under Arbitrary Precision with Exponential-time Adversary}\label{app:robust_arbitrary_exponential}

\noindent\textbf{Restatement of Lemma~\ref{lem:leaky_sensitivity_helper}.} \emph{Let $\mathbf{v} = [v_1,\dots,v_N]$ be a vector with all entries distinct and positive. Define $\ln \mathbf{v} := [\ln v_1,\dots,\ln v_N]$. Then, it holds that $\Delta(\mathbf{v}) = \frac{1}{N}\mu(\ln \mathbf{v})$.}
\begin{proof}
Without loss of generality, assume that $\sigma_1$ and $\sigma_2$ are such that $\mathcal{L}_{\mathbf{v}}\paran{\sigma_1} \ge \mathcal{L}_{\mathbf{v}}\paran{\sigma_2}$. This happens when the following holds (from Equation~(\ref{eq:loss_expression})):
\begin{align*}
	\mathcal{L}_{\mathbf{v}}\paran{\sigma_1} \ge \mathcal{L}_{\mathbf{v}}\paran{\sigma_2} \Longleftrightarrow \prod_{i: \sigma_1(i) = 1} v_i \le \prod_{j: \sigma_2(j) = 1} v_j.
\end{align*}
Under this condition, we can write the following:
\begin{align*}
	N\Delta(\mathbf{v}) &= \min_{\sigma_1, \sigma_2 \in \{0,1\}^N} N\paran{\mathcal{L}_{\mathbf{v}}\paran{\sigma_1} - \mathcal{L}_{\mathbf{v}}\paran{\sigma_2}} = \min_{\substack{\sigma_1, \sigma_2 \in \{0,1\}^N \\ \sigma_1 \neq \sigma_2}} \ln \exp\paran{N\mathcal{L}_{\mathbf{v}}\paran{\sigma_1} - N\mathcal{L}_{\mathbf{v}}\paran{\sigma_2}} \\
	&= \min_{\sigma_1, \sigma_2 \in \{0,1\}^N} \ln \parfrac{\exp\paran{-N\mathcal{L}_{\mathbf{v}}\paran{\sigma_2}}}{\exp\paran{-N\mathcal{L}_{\mathbf{v}}\paran{\sigma_1}}} = \min_{\sigma_1, \sigma_2 \in \{0,1\}^N} \ln \paran{\frac{\prod_{j: \sigma_2(j) = 1} v_j}{\prod_{i: \sigma_1(i) = 1} v_i}} \\
	&= \min_{\sigma_1, \sigma_2 \in \{0,1\}^N} \paran{\sum_{j: \sigma_2(j) = 1} \ln v_j - \sum_{i: \sigma_1(i) = 1} \ln v_i} = \mu\paran{\ln \mathbf{v}},
\end{align*}
where the last step follows from interpreting $\sigma_1$ and $\sigma_2$ as selecting elements from $\ln \mathbf{v}$ (by choosing which elements get labeled 1 and the ones that do not).
\end{proof}

\noindent\textbf{Restatement of Theorem~\ref{thm:tau_leaky}.} \emph{For any $\tau > 0$, there exists an exponential-time adversary (from Algorithm~\ref{alg:bounded_noise_exponential}) for the $\tau$-robust label inference problem in the \textsf{APA} model using only a single log-loss query.}
\begin{proof}
Let $S = \{s_1,\dots,s_N\}$ be a set such that $\mu(S) \geq 2N\tau$. We know that $S$ exists, for example, by scaling each element of $\mathcal{S}_\circ$ by $2N\tau$. Let $\mathbf{v}$ be constructed such that $v_i = 3\exp(s_i)$. To see that $\mathbf{v}$ is $\tau$-robust, it suffices to prove that $\Delta(\mathbf{v})> 2\tau$. Now, observe that $(\ln 3)s_i = \ln \mathbf{v}_i$. From Lemma~\ref{lem:leaky_sensitivity_helper}, we get  $\Delta(\mathbf{v}) = \frac{\ln 3}{N}\mu(S) = (2\ln 3)\tau > 2\tau$. 
\end{proof}

\subsection{Optimality of Single-Query $\tau$-Robust Inference} \label{app:opti}

We begin with the following two lemmas.
\begin{lemma}~\label{lem:min_distance}
A vector $\mathbf{v}$ is  $\tau$-robust if and only if $\Delta(\mathbf{v}) > 2\tau$.
\end{lemma}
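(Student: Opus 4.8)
\textbf{Proof plan for Lemma~\ref{lem:min_distance}.}
The statement asserts an equivalence: $\mathbf{v}$ is $\tau$-robust (in the sense of Definition~\ref{def:robustness}, i.e., there is a decoder recovering $\sigma$ from any $\ell$ within $\tau$ of $\mathcal{L}_{\mathbf{v}}(\sigma)$) if and only if $\Delta(\mathbf{v}) > 2\tau$. The plan is to prove both directions by a standard ``separation of codewords versus decodability'' argument, where the ``codewords'' are the values $\{\mathcal{L}_{\mathbf{v}}(\sigma) : \sigma \in \{0,1\}^N\}$ and the ``channel'' adds error of magnitude at most $\tau$.

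For the direction $\Delta(\mathbf{v}) > 2\tau \implies \mathbf{v}$ is $\tau$-robust: I would exhibit an explicit decoder. Given an observed $\ell$, define $\mathcal{A}(\ell, N, \tau, \mathbf{v}) := \arg\min_{\sigma \in \{0,1\}^N} |\mathcal{L}_{\mathbf{v}}(\sigma) - \ell|$ (ties broken arbitrarily, e.g., lexicographically). If the true labeling is $\sigma^*$, then $|\mathcal{L}_{\mathbf{v}}(\sigma^*) - \ell| \le \tau$. For any other $\sigma \ne \sigma^*$, the triangle inequality gives $|\mathcal{L}_{\mathbf{v}}(\sigma) - \ell| \ge |\mathcal{L}_{\mathbf{v}}(\sigma) - \mathcal{L}_{\mathbf{v}}(\sigma^*)| - |\mathcal{L}_{\mathbf{v}}(\sigma^*) - \ell| \ge \Delta(\mathbf{v}) - \tau > 2\tau - \tau = \tau$. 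Hence $\sigma^*$ is the unique minimizer and the decoder returns it; this is a valid (if exponential-time) Turing machine, which is all Definition~\ref{def:robustness} requires.

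For the contrapositive direction $\Delta(\mathbf{v}) \le 2\tau \implies \mathbf{v}$ is not $\tau$-robust: pick the pair $\sigma_1 \ne \sigma_2$ achieving the minimum in $\Delta(\mathbf{v})$, so $|\mathcal{L}_{\mathbf{v}}(\sigma_1) - \mathcal{L}_{\mathbf{v}}(\sigma_2)| = \Delta(\mathbf{v}) \le 2\tau$. Let $\ell$ be the midpoint $\tfrac{1}{2}(\mathcal{L}_{\mathbf{v}}(\sigma_1) + \mathcal{L}_{\mathbf{v}}(\sigma_2))$; then $|\mathcal{L}_{\mathbf{v}}(\sigma_1) - \ell| = |\mathcal{L}_{\mathbf{v}}(\sigma_2) - \ell| = \tfrac{1}{2}\Delta(\mathbf{v}) \le \tau$. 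So this same $\ell$ is a legal observation whether the true labeling is $\sigma_1$ or $\sigma_2$, and no algorithm $\mathcal{A}$ can output the correct answer in both cases — contradicting $\tau$-robustness. (I should double-check the boundary case $\Delta(\mathbf{v}) = 2\tau$: the midpoint is still at distance exactly $\tau$ from each, which is $\le \tau$, so it is still a valid observation for both, and the argument goes through; this is exactly why the inequality in the lemma is strict.)

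The argument is essentially routine; the only mild subtlety — and the one place to be careful — is the treatment of the non-strict versus strict inequality at the threshold $\Delta(\mathbf{v}) = 2\tau$, and making sure Definition~\ref{def:robustness} is read as ``for \emph{all} $\ell$ with $|\mathcal{L}_{\mathbf{v}}(\sigma) - \ell| \le \tau$'' so that the midpoint ambiguity genuinely breaks robustness. No deep tools are needed beyond the triangle inequality and Equation~\eqref{eq:loss_expression}; in particular this lemma does not even need the $\ln$/subset-sum correspondence (that enters later, in combination with Lemma~\ref{lem:leaky_sensitivity_helper} and Theorem~\ref{thm:euler_general}).
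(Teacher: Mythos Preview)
Your proposal is correct and follows essentially the same route as the paper: the paper dispatches the ``$\Delta(\mathbf{v}) > 2\tau \Rightarrow$ robust'' direction by pointing to the $\arg\min$ decoder of Algorithm~\ref{alg:bounded_noise_exponential} (exactly what you write out), and proves the converse by the same midpoint-ambiguity contradiction you give. If anything, you are slightly more careful than the paper, which sets up the contradiction with a strict inequality $0 < \mathcal{L}_{\mathbf{v}}(\sigma_2) - \mathcal{L}_{\mathbf{v}}(\sigma_1) < 2\tau$ and thus glosses over the boundary case $\Delta(\mathbf{v}) = 2\tau$ that you explicitly check.
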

\begin{proof}
It suffices to prove that for any $\Delta(\mathbf{v}) > 2\tau$ for any $\tau$-robust vector $\mathbf{v}$ (since the other direction follows from our construction in Algorithm~\ref{alg:bounded_noise_exponential}). We prove this by contradiction. The idea is to construct a score from which a unique labeling cannot be unambiguously derived. Let $\mathbf{v}$ be a $\tau$-robust vector and $\mathcal{A}$ be a Turing Machine such that $\mathcal{A}(\ell,N,\tau,\mathbf{v}) = \sigma$ for all $\sigma \in \{0,1\}^N$ and all $\ell$ that satisfies $\abs{\ell - \mathcal{L}_{\mathbf{v}}\paran{\sigma}} \leq \tau$. Without loss of generality, let $\sigma_1, \sigma_2$ be two distinct labelings for which $0 < \mathcal{L}_{\mathbf{v}}\paran{\sigma_2} - \mathcal{L}_{\mathbf{v}}\paran{\sigma_1} < 2\tau$. It follows that $\mathcal{L}_{\mathbf{v}}\paran{\sigma_2} - \tau < \mathcal{L}_{\mathbf{v}}\paran{\sigma_1} + \tau$ (see schematic below).  
\vspace{-1em}    
\begin{center}
 \begin{tikzpicture}
\path [draw=black,fill=black] (-3.5,0) circle (1.5pt);
\path [draw=black,fill=black] (-1.3,0) circle (1.5pt);
\path [draw=black,fill=black] (0,0) circle (1.5pt);
\path [draw=black,fill=black] (1.3,0) circle (1.5pt);
\path [draw=black,fill=black] (3.5,0) circle (1.5pt);
\draw[latex-latex] (-4,0) -- (4,0) ;
\foreach \x in  {-3.5,-1.3,1.3,3.5}
\draw[shift={(\x,0)},color=black] (0pt,3pt) -- (0pt,-3pt);
\draw[shift={(-3.5,0)},color=black] (0pt,0pt) -- (0pt,-3pt) node[below] 
{$\mathcal{L}_{\mathbf{v}}\paran{\sigma_1}$};
\draw[shift={(-1.3,0)},color=black] (0pt,0pt) -- (0pt,-3pt) node[below] 
{$\mathcal{L}_{\mathbf{v}}\paran{\sigma_2} - \tau$};
\draw[shift={(0,0)},color=black] (0pt,0pt) -- (0pt,-3pt) node[below] 
{$\ell$};
\draw[shift={(1.3,0)},color=black] (0pt,0pt) -- (0pt,-3pt) node[below] 
{$\mathcal{L}_{\mathbf{v}}\paran{\sigma_1} + \tau$};
\draw[shift={(3.5,0)},color=black] (0pt,0pt) -- (0pt,-3pt) node[below] 
{$\mathcal{L}_{\mathbf{v}}\paran{\sigma_2}$};

\draw[decorate, decoration={brace, amplitude=5pt,mirror}] (-3.5,-0.75)--(0,-0.75) node[midway,yshift=-0.4cm] {$x$};
\draw[decorate, decoration={brace, amplitude=5pt}] (-3.5,0.3)--(3.5,0.3) node[midway,yshift=0.4cm] {$< 2\tau$};
\end{tikzpicture} 
\end{center}
\vspace{-1.2em}
Let $\ell = \paran{\mathcal{L}_{\mathbf{v}}\paran{\sigma_1}+\mathcal{L}_{\mathbf{v}}\paran{\sigma_2}}/2$ and $x = \ell -  \mathcal{L}_{\mathbf{v}}\paran{\sigma_1}$. Clearly, $x < \tau$, and thus, $\mathcal{A}\paran{\ell,N,\tau,\mathbf{v}} = \sigma_1$. However, we can show similarly that $\mathcal{L}_{\mathbf{v}}\paran{\sigma_2} - \ell < \tau$, and hence $\mathcal{A}\paran{\ell,N,\tau,\mathbf{v}} = \sigma_2$. This is a contradiction since $\sigma_1 \neq \sigma_2$, and hence, it must be true that $\mathcal{L}_{\mathbf{v}}\paran{\sigma_2} - \mathcal{L}_{\mathbf{v}}\paran{\sigma_1} > 2\tau$. The result in the Lemma statement then follows by observing that if this condition holds for any $\sigma_1, \sigma_2$, then  $\Delta(\mathbf{v}) = \min_{\sigma_1,\sigma_2}\abs{\mathcal{L}_{\mathbf{v}}\paran{\sigma_2} - \mathcal{L}_{\mathbf{v}}\paran{\sigma_1}} > 2\tau$ as well. 
\end{proof}

\begin{lemma}\label{lem:set_construct}
For every $\tau$-robust vector $\mathbf{v} = [v_1,\dots,v_N]$ with distinct entries in $(1,\infty)$, it is possible, for every $s > 0$, to construct a set $S$ such that $\mu(S) > s$.
\end{lemma}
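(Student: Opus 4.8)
\textbf{Proof proposal for Lemma~\ref{lem:set_construct}.}

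The plan is to run the reduction from Lemma~\ref{lem:leaky_sensitivity_helper} in the forward direction: a $\tau$-robust vector must have well-separated loss scores (by Lemma~\ref{lem:min_distance}), and this separation is exactly $\tfrac1N\mu(\ln\mathbf{v})$, so the logarithms of the entries of $\mathbf{v}$ already form a set with a positive lower bound on its minimum subset-sum difference. Rescaling then boosts that bound past any desired $s$. Concretely, first I would fix the given $\tau$-robust vector $\mathbf{v}=[v_1,\dots,v_N]$ with distinct entries in $(1,\infty)$; note the entries of $\ln\mathbf{v}$ are distinct and positive since each $v_i>1$. By Lemma~\ref{lem:min_distance}, $\tau$-robustness gives $\Delta(\mathbf{v})>2\tau$, and by Lemma~\ref{lem:leaky_sensitivity_helper}, $\Delta(\mathbf{v})=\tfrac1N\mu(\ln\mathbf{v})$, so $\mu(\ln\mathbf{v})>2N\tau>0$. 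Write $\delta:=\mu(\ln\mathbf{v})>0$.

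Next I would observe that $\mu$ scales linearly: for any scalar $c>0$ and any set $T$, $\mu(cT)=c\,\mu(T)$, because each subset-sum difference of $cT$ is $c$ times the corresponding subset-sum difference of $T$, and taking the minimum over the (same) collection of subset pairs commutes with multiplication by the positive constant $c$. Given a target $s>0$, set $c:=\lceil s/\delta\rceil+1$ (or any $c> s/\delta$) and define $S:=c\cdot\ln\mathbf{v}=\{c\ln v_1,\dots,c\ln v_N\}$. Then $S$ is a set of distinct positive reals with $\mu(S)=c\,\mu(\ln\mathbf{v})=c\delta> s$, as required. One small point to check is that $S$ genuinely has $N$ distinct elements, which holds because the $v_i$ are distinct and $x\mapsto c\ln x$ is injective on $(1,\infty)$; this guarantees $\mu(S)$ is taken over the full family of $2^N$ subsets just as in the definition.

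I do not anticipate a serious obstacle here: the work is essentially bookkeeping, chaining together Lemmas~\ref{lem:min_distance} and~\ref{lem:leaky_sensitivity_helper} and then invoking the trivial scaling identity for $\mu$. The only mild subtlety is making sure the hypotheses of Lemma~\ref{lem:leaky_sensitivity_helper} are met — all entries distinct and positive — which is exactly why the statement restricts $\mathbf{v}$ to have distinct entries in $(1,\infty)$ (the lower bound $1$ is what forces $\ln v_i>0$ so that $\ln\mathbf{v}$ is a legitimate set of positive numbers, matching the later use of this lemma in establishing Theorem~\ref{thm:euler_general} and Theorem~\ref{thm:lower_bound_helper}). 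If one wanted the elements of $S$ to be rationals (as Theorem~\ref{thm:euler_general} is phrased over $\mathbb{Q}^+$), one would additionally round each $c\ln v_i$ to a nearby rational while keeping the perturbation small enough that $\mu$ stays above $s$; but for the statement as given over the reals, the clean rescaling argument above suffices.
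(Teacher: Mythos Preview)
Your proposal is correct and follows essentially the same route as the paper: chain Lemma~\ref{lem:min_distance} and Lemma~\ref{lem:leaky_sensitivity_helper} to obtain $\mu(\ln\mathbf{v})>2N\tau$, then rescale $\ln\mathbf{v}$ to push the minimum subset-sum gap past the target $s$. The paper simply uses the explicit scaling factor $s/(2N\tau)$ rather than your $c>s/\delta$, but this is a cosmetic difference.
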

\begin{proof}
    From Lemma~\ref{lem:leaky_sensitivity_helper}, we know that $\Delta(\mathbf{v}) = \mu(\ln \mathbf{v})/N$. Since $\mathbf{v}$ is $\tau$-robust, it must be true that $\Delta(\mathbf{v}) > 2\tau$ (by Lemma~\ref{lem:min_distance}), from which we obtain that $\mu(\ln \mathbf{v}) > 2N \tau$. The result in the lemma statement then follows by setting $S = \{s_1,\dots,s_N\}$, where $s_i = (s\ln v_i)/(2N\tau)$.
\end{proof}

\noindent\textbf{Restatement of Theorem~\ref{thm:euler_general}.} \emph{For any set $S \subset \mathbb{Q}^{+}$ with $\mu(S) > \lambda$ for some $\lambda \in [0,\infty)$, it holds that $\pnorm{\infty}{S} = \Omega(\lambda 2^{|S|})$.}
\begin{proof}
    We prove this result in three steps. First, we show that the bound holds whenever $\lambda$ as well as all the elements in $S$ are positive integers. To see this, observe that if $\mu(S) > 1 > 0$, Euler's result gives us that $\pnorm{\infty}{S} = \Omega(2^{|S|})$. Now, let $S' = \{s\lambda \ | \ s \in S\}$. Then, $\mu(S') > \lambda$. If we suppose that $\pnorm{\infty}{S'} = o(\lambda 2^{|S|})$, then $\pnorm{\infty}{S'/T} = \pnorm{\infty}{S} = o(2^{|S|})$, which is a contradiction to above. 
    
    Next, assume that $S \subset \mathbb{Q}^{+}$ and $\lambda$ still be an integer. In this case, each element of $S$ can be written as $s_i = p_i/q_i$ (in the lowest form). Let $Q = q_1q_2\cdots q_N$ and $S' = \{sQ \ | \ s \in S\}$. Then, each element of $S'$ is an integer and since $\mu(S') > \lambda Q$, which is also an integer, from the discussion above, we have $\pnorm{\infty}{S'} = \Omega(\lambda Q 2^N)$, which gives $\pnorm{\infty}{S} = \Omega(\lambda 2^N)$.
    
    Finally, let $\lambda$ be an arbitrary positive real. In this case, $\mu(S) > \lambda$ implies $\mu(S) > \lfloor \lambda \rfloor$, which is an integer. Hence, $\pnorm{\infty}{S} = \Omega(\lfloor \lambda \rfloor 2^N) = \Omega(\lambda 2^N)$, as desired.
\end{proof}

\noindent\textbf{Restatement of Theorem~\ref{thm:lower_bound_helper}:} \emph{For sufficiently large $N$ and all $\tau > 0$, any $\tau$-robust vector $\mathbf{v}$ must have $\pnorm{\infty}{\mathbf{v}} = \Omega\paran{e^{2^N N \tau}}$.}
\begin{proof}
From Lemma~\ref{lem:leaky_sensitivity_helper}, we know that for any vector $\mathbf{v}$ with distinct positive entries, it holds that $\Delta\paran{\mathbf{v}} = \mu\paran{\ln \mathbf{v}}/N$. For this vector to be $\tau$-robust, we argue that $\Delta\paran{\mathbf{v}} > 2\tau$ (see Lemma~\ref{lem:min_distance}), which is the same as setting $\mu\paran{\ln \mathbf{v}} > 2N\tau$. From Theorem~\ref{thm:euler_general}, for this to hold, it must be true that $\pnorm{\infty}{\ln \mathbf{v}} = \Omega\paran{2^N N \tau}$. From the definition of $\ln \mathbf{v}$, this gives $\pnorm{\infty}{\mathbf{v}} = \Omega\paran{\exp\paran{2^N N \tau}}$, as desired.
\end{proof}

\subsection{$\tau$-Robust Label Inference under Bounded Precision with Polynomial-time Adversary}\label{app:robust_precision_polynomial}

We prove an additional lemma before proving our next result.

\begin{lemma}\label{lem:partial_vectors}
	Let $\tau > 0$ be a bound on the resulting error and $m \leq N \in \mathbb{Z}^{+}$ be an integer. Let $\mathbf{v}_m = \left[ 3e^{2m\tau}, 3e^{4m\tau}, \dots, 3e^{2^{m}m\tau}, 1, \dots, 1 \right]$. Then, for any distinct $\sigma_1, \sigma_2 \in \{0,1\}^N$ where $\sigma_1[:m] \neq \sigma_2[:m]$ (\emph{i.e.} $\sigma_1$ and $\sigma_2$ differ some index $i \leq m$), it holds that: $$\left| \mathcal{L}_{\mathbf{v}_m}\paran{\sigma_1} - \mathcal{L}_{\mathbf{v}_m}\paran{\sigma_2} \right| > 2m\tau/N.$$
\end{lemma}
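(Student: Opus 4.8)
The plan is to reduce Lemma~\ref{lem:partial_vectors} to the already-proved Lemma~\ref{lem:leaky_sensitivity_helper} applied to a suitable sub-problem. First I would observe that since the last $N-m$ entries of $\mathbf{v}_m$ are all equal to $1$, Lemma~\ref{lem:loss_vector_with_ones} tells us that for any labeling $\sigma$,
\begin{align*}
\mathcal{L}_{\mathbf{v}_m}(\sigma) = -\frac{1}{N}\ln\!\paran{\prod_{\substack{i:\sigma_i=1\\ 1\le i\le m}} (\mathbf{v}_m)_i} + c,
\end{align*}
where $c = \frac{1}{N}\bigl((N-m)\ln 2 + \sum_{j=1}^m \ln(1+(\mathbf{v}_m)_j)\bigr)$ is a constant independent of $\sigma$. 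Hence $\mathcal{L}_{\mathbf{v}_m}(\sigma)$ depends only on $\sigma[:m]$, which immediately gives the first half of Lemma~\ref{cor:vec_attack} (the $\sigma_1[:m]=\sigma_2[:m]$ case), and it reduces the separation question to controlling $\bigl|\sum_{i\in A}\ln(\mathbf{v}_m)_i - \sum_{i\in B}\ln(\mathbf{v}_m)_i\bigr|$ over distinct subsets $A,B\subseteq[m]$ corresponding to the $1$-coordinates of $\sigma_1[:m]$ and $\sigma_2[:m]$.

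Next I would compute $\ln(\mathbf{v}_m)_i = \ln 3 + 2^i m\tau$ for $i\in[m]$, so the vector $(\ln(\mathbf{v}_m)_1,\dots,\ln(\mathbf{v}_m)_m)$ is an affine shift of $m\tau\cdot(2,4,\dots,2^m)$. The constant shift $\ln 3$ does \emph{not} cancel when comparing subset sums of different cardinalities, so I would handle it by noting that the minimal nonzero difference is still bounded below: writing $|A|=a$, $|B|=b$, the difference is $(\ln 3)(a-b) + m\tau\bigl(\sum_{i\in A}2^i - \sum_{i\in B}2^i\bigr)$. The key point is that the powers-of-two part, $\sum_{i\in A}2^i - \sum_{i\in B}2^i$, ranges over distinct integers as $A,B$ vary (by uniqueness of binary representation, exactly the mechanism behind $\mu(\mathcal{S}_N)=1$ noted in the paper), and when $A\ne B$ this integer is nonzero with absolute value at least $2$ (the smallest power used is $2^1$). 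One then argues that either the contribution $(\ln3)(a-b)$ is dominated — since $|a-b|\le m$ and $\ln 3 < 2m\tau\cdot 2^{?}$... — more cleanly, I would just invoke that $\ln(\mathbf{v}_m)$ has distinct positive entries and apply Lemma~\ref{lem:leaky_sensitivity_helper}-style reasoning restricted to the first $m$ coordinates: the relevant quantity is $\mu$ of the length-$m$ vector $(\ln 3 + 2m\tau, \ln 3 + 4m\tau,\dots,\ln 3 + 2^m m\tau)$, and I would lower-bound this $\mu$ by $2m\tau$ by a direct argument that the powers-of-two structure forces any two distinct subset sums to differ by at least $2m\tau$ in the $m\tau(2,\dots,2^m)$ part while the $\ln 3$ offsets can at most partially cancel — in fact it is cleanest to note that one may add a fictitious coordinate equal to $\ln 3$ (or subtract $\ln 3$ from a common reference) so the problem becomes exactly $\mu$ of scaled distinct powers of two, which equals $2m\tau$.

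Then, with $\mu$ of the length-$m$ reduced vector shown to be $\ge 2m\tau$ (in fact I'd want strictly $> 2m\tau$, which follows since the bound $2m\tau$ is achieved only in the limit and the actual minimum difference over the finite set of subset pairs is a specific value; alternatively replace $2$ by $2+\epsilon$ in intermediate steps), the analogue of Lemma~\ref{lem:leaky_sensitivity_helper} restricted to the first $m$ coordinates yields
\begin{align*}
\left|\mathcal{L}_{\mathbf{v}_m}(\sigma_1) - \mathcal{L}_{\mathbf{v}_m}(\sigma_2)\right| = \frac{1}{N}\left|\sum_{i\in A}\ln(\mathbf{v}_m)_i - \sum_{i\in B}\ln(\mathbf{v}_m)_i\right| \ge \frac{1}{N}\cdot 2m\tau,
\end{align*}
for $\sigma_1[:m]\ne\sigma_2[:m]$, which is exactly the claim. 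I expect the main obstacle to be handling the additive $\ln 3$ offset carefully: unlike the pure-powers-of-two construction where all subset sums are manifestly distinct integers, here different-cardinality subsets pick up different multiples of $\ln 3$, so I need to verify that this cannot conspire to bring two subset sums within $2m\tau/N$ of each other. The resolution is that $\ln 3 \approx 1.0986$ is itself positive and the powers-of-two increments are multiples of $2m\tau$, so the $\ln 3$ terms only ever \emph{increase} separation when cardinalities differ (both the integer-part difference and the cardinality-part difference have controllable signs); formally one shows $|(\ln 3)(a-b) + 2m\tau\cdot k| > 2m\tau$ whenever $(a,k)\ne(0,0)$ is the pattern arising from $A\ne B$, using $k\in\mathbb{Z}$, $|k|\ge 1$ when $k\ne 0$, and $|k|\ge |a-b|$ (since flipping $j$ coordinates changes the power-sum by at least... — this last inequality, that the power-of-two difference has magnitude at least the cardinality difference, is the crisp combinatorial fact that makes it all work).
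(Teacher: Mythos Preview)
Your overall plan---reduce via Lemma~\ref{lem:loss_vector_with_ones} to a subset-sum separation on $\ln\mathbf{v}_m$ and then exploit the powers-of-two structure---is exactly the paper's approach. The paper's proof is a two-line computation: it writes
\[
\abs{\mathcal{L}_{\mathbf{v}_m}(\sigma_1)-\mathcal{L}_{\mathbf{v}_m}(\sigma_2)}
=\frac{\ln 3}{N}\,\Bigl|\sum_{\substack{i:\sigma_1(i)=1\\ i\le m}}2^i m\tau-\sum_{\substack{k:\sigma_2(k)=1\\ k\le m}}2^k m\tau\Bigr|
=\frac{m\tau\ln 3}{N}\,\Bigl|\sum 2^i-\sum 2^k\Bigr|>\frac{2m\tau}{N},
\]
using that distinct subsets of $\{2^1,\dots,2^m\}$ have sums differing by at least $2$ and that $\ln 3>1$.

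The place where you diverge is in reading the vector entries as $3\cdot e^{2^i m\tau}$, which gives $\ln(\mathbf{v}_m)_i=\ln 3+2^i m\tau$ and produces the additive $\ln 3$ offset you then spend most of the proposal trying to control. The paper's proof (and Algorithms~\ref{alg:bounded_noise_exponential} and~\ref{alg:reconstruction_attack}, which set $v_i=3^{2^i N\tau}$) treats the entry as $3^{2^i m\tau}$, so that $\ln(\mathbf{v}_m)_i=(\ln 3)\cdot 2^i m\tau$ and the $\ln 3$ is a \emph{multiplicative} factor that cleanly pulls out. With that reading there is no offset issue at all, and the strict inequality is immediate from $\ln 3>1$.

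More importantly, under your reading the lemma as stated is actually false, so no amount of case analysis on $(a-b,k)$ can rescue it. Take $m\ge 2$ and let $\sigma_1,\sigma_2$ agree everywhere except that $\sigma_1$ has a $1$ in position~$1$ and a $0$ in position~$2$, while $\sigma_2$ has the reverse. Then $|A|=|B|$ so the $\ln 3$ contributions cancel, and
\[
\abs{\mathcal{L}_{\mathbf{v}_m}(\sigma_1)-\mathcal{L}_{\mathbf{v}_m}(\sigma_2)}
=\frac{1}{N}\bigl|(\ln 3+2m\tau)-(\ln 3+4m\tau)\bigr|=\frac{2m\tau}{N},
\]
which is not strictly greater than $2m\tau/N$. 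This is precisely the equal-cardinality, $|k|=2$ case your final paragraph was trying to rule out; it cannot be ruled out. So the fix is not a sharper combinatorial inequality but the intended reading $(\mathbf{v}_m)_i=3^{2^i m\tau}$, after which your first two paragraphs already contain the whole proof.
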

\begin{proof}
	From Lemma~\ref{lem:loss_vector_with_ones}, we can write that:
\begin{align*}
	\abs{\mathcal{L}_{\mathbf{v}_m}\paran{\sigma_1} - \mathcal{L}_{\mathbf{v}_m}\paran{\sigma_2}} &= \frac{1}{N}\abs{\sum_{\substack{i: \sigma_1(i) = 1\\1 \leq i \leq m}} \ln v_m(i) - \sum_{\substack{k: \sigma_2(k) = 1\\1 \leq k \leq m}} \ln v_m(k)} \\
	&= \frac{
	\ln 3}{N}\abs{\sum_{\substack{i: \sigma_1(i) = 1\\1 \leq i \leq m}} 2^{i}m\tau - \sum_{\substack{k: \sigma_2(k) = 1\\1 \leq k \leq m}} 2^{k}m\tau} = \frac{m\tau \ln 3}{N} \abs{\sum_{\substack{i: \sigma_1(i) = 1\\1 \leq i \leq m}} 2^{i} - \sum_{\substack{k: \sigma_2(k) = 1\\1 \leq k \leq m}} 2^{k}} > \frac{2m\tau}{N},
\end{align*}
where the last step follows from the fact that $\sigma_1$ and $\sigma_2$ differ in at least one element amongst the first $m$ entries.
\end{proof}

\textbf{Restatement of Lemma~\ref{cor:vec_attack}.} \emph{Let $\tau > 0$ be a bound on the resulting error and $m \leq N \in \mathbb{Z}^{+}$ be an integer. Let $\mathbf{v}_m = \left[ 3e^{2N\tau}, 3e^{4N\tau}, \dots, 3\exp\paran{2^{m}N\tau}, 1, \dots, 1 \right]$.
	Then, for any distinct $\sigma_1, \sigma_2 \in \{0,1\}^N$, the following hold:
	\begin{enumerate}
	    \item[(a)] If $\sigma_1[:m] = \sigma_2[:m]$, then $\mathcal{L}_{\mathbf{v}_m}\paran{\sigma_1} = \mathcal{L}_{\mathbf{v}_m}\paran{\sigma_2}$.
	    \item[(b)] Else, we have $\left| \mathcal{L}_{\mathbf{v}_m}\paran{\sigma_1} - \mathcal{L}_{\mathbf{v}_m}\paran{\sigma_2} \right| > 2\tau$.
	\end{enumerate}}
The proof directly follows from Lemma~\ref{lem:partial_vectors}.

\noindent\textbf{Restatement of Theorem~\ref{thm:proof_reconstruction}.} \emph{For any error bounded by $\tau > 0$ and $\phi \geq 8 + \lceil N\tau \ln 2 \rceil$, there exists a polynomial-time adversary (from Algorithm~\ref{alg:reconstruction_attack}) for the $\tau$-label inference problem in the $\mathsf{FPA}(\phi)$ model using $O\paran{\frac{N}{\log N} + \frac{N}{\log \paran{\phi/N\tau}}}$ queries.}
\begin{proof}
To see how many queries suffice, we first compute the number of bits necessary to represent the prediction vector and the loss scores up to sufficient resolution. For $\mathbf{u}_m = f(\mathbf{v}_m)$, we observe the following for sufficiently large $N$, in particular, when $N\tau > \ln(16)$:  
\begin{align*}
    \min_{\substack{i,j\in [N]\\\mathbf{u}_m(i)\neq \mathbf{u}_m(j)}}\abs{\mathbf{u}_m(i) - \mathbf{u}_m(j)} &= \frac{3e^{2^m N\tau}}{1+3e^{2^m N\tau}} - \frac{3e^{2^{m-1} N\tau}}{1+3e^{2^{m-1} N\tau}} \\ 
    &\geq \frac{\exp\paran{-2^{m-1}N\tau}}{16},
\end{align*}
which, to work within the $\mathsf{FPA}(\phi)$ model, would require $(\phi-1)/2 \geq 4 + 2^{m-1}N\tau \ln 2$, or equivalently, $\phi \geq 8 + 2^{m}N\tau \ln 2$ bits. Moreover, we can bound the loss computed on $\mathbf{v}_m$ on any $\sigma$ as follows:
\begin{align*}
    \max_{\sigma \in \{0,1\}^N} \mathcal{L}_{\mathbf{v}_m}(\sigma) &\leq \frac{1}{N}\ln \paran{2^{N-m}\prod_{j=1}^m (1+3e^{2^j N\tau})}\\
    &\leq 2\paran{\ln 2 + (2^m - 1)\tau} \leq 2^{m+1}\tau,
\end{align*}
which can be represented (along with sufficient resolution, since $\Delta(\mathbf{v}_m) \geq 2\tau$ by construction) within the number of bits described above. Thus, in the $\mathsf{FPA}(\phi)$ model, the maximum value of $m$ that we can set is obtained by setting $2^{m}N\tau \ln 2 + 8 \leq \phi$, which gives: $$m_{max} = \left\lfloor \log_2 \parfrac{\phi -8}{N \tau \ln 2} \right\rfloor.$$From this, we obtain that $\frac{N}{m_{max}} = O\paran{\frac{N}{\log N} + \frac{N}{\log \paran{\phi/N\tau}}}$ queries suffice.

Given this bound, it suffices to show that in the $i^{th}$ iteration of the for-loop in Algorithm~\ref{alg:reconstruction_attack}, the vector $\sigma'$ contains the true labels for indices in $\{(i-1)m+1,\dots,im\}$.  Without loss of generality, assume $i=1$, so that the vector $\mathbf{v} = \left[ 3e^{2N\tau}, 3e^{4N\tau}, \dots, 3\exp\paran{2^{m}N\tau}, 1, \dots, 1 \right]$.
    From Lemma~\ref{cor:vec_attack}(b), it follows that for any distinct $\sigma_1, \sigma_2 \in \{0,1\}^N$ where $\sigma_1[:m] \neq \sigma_2[:m]$, it holds that:
    \begin{align}
        \label{eq:bound_on_loss}
        \left| \mathcal{L}_{\mathbf{v}}\paran{\sigma_1} - \mathcal{L}_{\mathbf{v}}\paran{\sigma_2} \right| > 2\tau.
    \end{align}
    Thus, when the loss $\ell$ is observed on $\mathbf{u} = f(\mathbf{v})$ and $\arg\min_\sigma \abs{\mathcal{L}_{\mathbf{u}}\paran{\sigma} - \ell} = \{\sigma^{(1)},\dots,\sigma^{(k)}\}$, then it must be true that $\sigma^{(k_1)}[:m] = \sigma^{(k_1)}[:m]$ for all $k_1,k_2 \in [k]$ (or else, it would contradict the inequality in~(\ref{eq:bound_on_loss})). Furthermore, from Lemma~\ref{cor:vec_attack}(a), it follows that the true labeling must be present in the set $\{\sigma^{(1)},\dots,\sigma^{(k)}\}$. Thus, at the end of iteration $i=1$, the vector $\hat{\sigma}$ has recovered the first $m$ bits in $\sigma$. For all other iterations, note that the vector $\mathbf{v}$ is just a cyclic rotation to the right by $m$ elements, and hence, the bits in $\sigma$ are recovered, $m$ at a time, in Algorithm~\ref{alg:reconstruction_attack}. 
\end{proof}

\begin{figure*}[t]
    \centering
    \includegraphics[width=380pt,trim={0 0.18cm 0 0},clip]{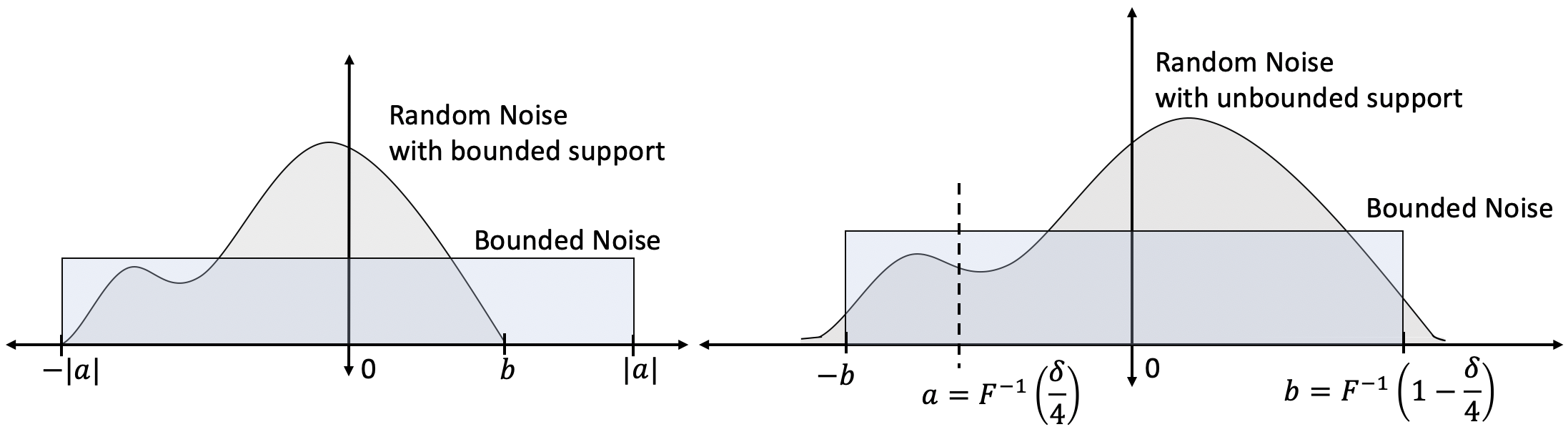}
    \caption{Schematic of the reduction of the random noise case to bounded noise for label inference. The picture on the left allows for a construction of a robust vector, whereas the picture on the right allows for robust vectors.}
    \label{fig:noise_schematic}
\end{figure*}

\subsection{Extension to Other Noise Models: Random Noise} \label{app:random_noise}
The discussion of norm-bounded noise above allows easy extension to handling randomly generated (additive) noise. We achieve this by safeguarding against the worst case magnitude of the noise that can be added for bounded noise distributions. For cases where this distribution is unbounded, we allow for some error tolerance. We begin with formally defining the label inference problem in this setting and then present our analysis. We will restrict ourselves to arbitrary precision in this section and defer the extension to $\mathsf{FPA}(\phi)$ for future work. 

\begin{definition}
Let $\mathcal{D}$ be a probability distribution and $\delta \in [0,1]$ be the error tolerance. We say that a vector $\mathbf{v}$ is $\mathcal{D}$-robust if for all $\tau \sim \mathcal{D}$ and $\sigma \in \{0,1\}^N$, there exists an adversary (Turing Machine) $\mathcal{A}$ that can recover (within a single query) $\sigma$ from $\ell = \mathcal{L}_{\mathbf{v}}\paran{\sigma} + \tau$ with probability at least $1 - \delta$, \emph{i.e.} $\Pr\vect{\mathcal{A}\paran{\ell, N, \mathcal{D},\mathbf{v}} = \sigma} \geq 1 - \delta$.
\end{definition}

Our main result in the section is stated in the theorem below (see Figure~\ref{fig:noise_schematic} for a proof sketch). 
\begin{theorem}\label{thm:random_noise}
	For any error tolerance $\delta \in (0,1)$, it is possible to construct a $\mathcal{D}$-robust vector for all distributions $\mathcal{D}$.
\end{theorem}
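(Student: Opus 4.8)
The plan is to reduce the random-noise setting to the (already solved) norm-bounded noise setting by identifying, for each distribution $\mathcal{D}$ and each tolerance $\delta$, a threshold $\tau_\delta$ such that the sampled noise satisfies $|\tau| \le \tau_\delta$ except on an event of probability at most $\delta$. Concretely, since $\mathcal{D}$ is a fixed distribution, define $\tau_\delta$ to be any number with $\Pr_{\tau \sim \mathcal{D}}[|\tau| > \tau_\delta] \le \delta$; such a $\tau_\delta$ always exists because $\lim_{t \to \infty} \Pr[|\tau| > t] = 0$ for any (proper) probability distribution (we may take $\tau_\delta$ to be a $(1-\delta)$-quantile of $|\tau|$). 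For bounded distributions supported on $[a,b]$ one can simply take $\tau_\delta = \max\{|a|,|b|\}$ and the failure probability is in fact $0$.

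The key steps, in order, are as follows. First, fix $\delta \in (0,1)$ and $\mathcal{D}$, and choose $\tau_\delta$ as above. Second, invoke Theorem~\ref{thm:tau_leaky} (equivalently, the construction in Algorithm~\ref{alg:bounded_noise_exponential}) with parameter $\tau = \tau_\delta$ to obtain a $\tau_\delta$-robust vector $\mathbf{v}$, i.e.\ a vector with $\Delta(\mathbf{v}) > 2\tau_\delta$ together with its decoding Turing Machine $\mathcal{A}_0$; by Lemma~\ref{lem:min_distance} this separation is exactly what $\tau_\delta$-robustness needs. Third, define the adversary $\mathcal{A}$ for the random-noise problem to simply run $\mathcal{A}_0(\ell, N, \tau_\delta, \mathbf{v})$ on the observed score $\ell = \mathcal{L}_{\mathbf{v}}(\sigma) + \tau$. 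Fourth, analyze correctness: conditioned on the event $E = \{|\tau| \le \tau_\delta\}$, the observed $\ell$ satisfies $|\ell - \mathcal{L}_{\mathbf{v}}(\sigma)| \le \tau_\delta$, so by the defining property of a $\tau_\delta$-robust vector, $\mathcal{A}_0$ returns $\sigma$ exactly. Since $\Pr[E] \ge 1-\delta$, we conclude $\Pr[\mathcal{A}(\ell,N,\mathcal{D},\mathbf{v}) = \sigma] \ge 1-\delta$, which is precisely $\mathcal{D}$-robustness.

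For the subexponential special case quoted in the main text, the only extra work is to make $\tau_\delta$ explicit: if $X \sim \mathcal{D}$ is subexponential with parameters $\lambda^2$ and $\nu$, the standard subexponential tail bound gives $\Pr[|X| > t] \le 2\exp(-t^2/(2\lambda^2))$ for $0 \le t \le \lambda^2/\nu$ and $\Pr[|X| > t] \le 2\exp(-t/(2\nu))$ for $t > \lambda^2/\nu$; setting this $\le \delta$ and solving yields the stated choice $\tau_\delta = 2(\lambda+\nu)\sqrt{\ln(2/\delta)}$ (one checks it dominates both regimes, up to the constant, for all $\delta \in (0,1)$), after which the reduction above applies verbatim.

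I expect the only real subtlety — not an obstacle so much as a point requiring care — to be the quantitative tail computation in the unbounded (subexponential) case: one must verify that the single closed-form $\tau_\delta$ simultaneously handles both the Gaussian-like and the exponential-like regimes of the subexponential tail, and that the arbitrary-precision construction of Algorithm~\ref{alg:bounded_noise_exponential} still succeeds when instantiated with this (possibly large) $\tau_\delta$. The core reduction itself is immediate once $\tau_\delta$ is fixed, since it is just conditioning on a high-probability bounded-noise event and appealing to Theorem~\ref{thm:tau_leaky}.
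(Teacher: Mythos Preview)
Your proposal is correct and follows essentially the same approach as the paper: reduce random noise to bounded noise by choosing a high-probability tail threshold and then invoke the $\tau$-robust construction of Theorem~\ref{thm:tau_leaky}. The only cosmetic difference is that the paper phrases the threshold via two-sided CDF quantiles $a = F^{-1}(\delta/4)$, $b = F^{-1}(1-\delta/4)$ rather than a single quantile of $|\tau|$, and treats the subexponential computation as a separate theorem; your write-up is if anything slightly more explicit about the decoding step.
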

\begin{proof}
    We first look at the case when $\mathcal{D}$ has bounded support over $\mathbb{R}$. Let $[a,b] \subset \mathbb{R}$ be the support of $\mathcal{D}$. Let $\tau = \max\{|a|,|b|\}$. Then, for any $\eta \sim \mathcal{D}$, it holds that $\abs{\paran{\mathcal{L}_{\mathbf{v}}\paran{\sigma} + \eta} - \mathcal{L}_{\mathbf{v}}\paran{\sigma}} = \abs{\eta} \leq \tau$ and hence, any $\tau$-robust vector is also $\mathcal{D}$-robust.
	
	For the case when $\mathcal{D}$ has unbounded support, let $\eta \sim \mathcal{D}$, and $a,b \in \mathbb{R}$ be such that $\Pr\paran{\eta \in (-\infty,a) \cup (b,\infty)} < \delta$. To see that this can be done, let $F: \mathbb{R}\to[0,1]$ be the cumulative distribution function for $\mathcal{D}$, which is a nondecreasing, right-continuous function with $F(-\infty)=0$, $F(\infty)=1$,
	and $F^{-1}(p)=\inf\{x\in\mathbb{R}:F(x)\geq p\}$. 
    For any fixed $\delta\in(0,1)$,
    we can set $a=F^{-1}(\delta/4)$ and $b=F^{-1}(1-\delta/4)$ so that
    $\Pr\paran{\eta \in [a,b]} \geq \delta/2$, which makes 
    $\Pr\paran{\eta \in (-\infty,a) \cup (b,\infty)} \leq \delta/2 < \delta$.
    Note that $a$ and $b$ are always finite by definition of the cumulative distribution function as the point mass is zero on $\pm\infty$.
\end{proof}

To see why this works, observe that for any bounded distribution, say over some interval $[a,b] \subset \mathbb{R}$, the amount of noise added is never more than $\max\{|a|,|b|\}$. Thus, any $\max\{|a|,|b|\}$-robust vector can unambiguously recover the labels from the noised scores. For distributions with unbounded support, however, such an upper bound does not exist. Given the error tolerance, it is possible to compute this bound for any distribution and robustness be defined accordingly. We explain this for subexponential noise below. 

Recall that a random variable $X \in \mathbb{R}$ is said to be subexponential (denoted $X \sim \textsf{subE}(\lambda^2, \nu)$) with parameters $\lambda^2, \nu > 0$ if $\mathbb{E}(X)=0$ and    its moment generating function satisfies $\mathbb{E}(e^{sX}) \leq \exp{\paran{\lambda^2s^2/2}}$ for all $|s| < 1/\nu$. Given any tolerance $\delta \in (0,1)$, it can be shown that there exists a $\textsf{subE}(\lambda^2,\nu)$-robust vector for all $\lambda,\nu > 0$. We formally state this result below. 

\begin{theorem}\label{thm:sub_exp_noise}
	For all $\lambda,\nu > 0$ and $\delta \in (0,1)$, any vector that is $\paran{2(\lambda+\nu)\sqrt{\ln \parfrac{2}{\delta}}}$-robust is  $\textsf{subE}(\lambda^2,\nu)$-robust as well. In particular, with probability at least $1-o(1)$, there exists an adversary for the single query Robust Label Inference problem for $\textsf{subE}(\lambda^2,\nu)$ noise.
\end{theorem}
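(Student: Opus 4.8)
The plan is to reduce this to the bounded-error result exactly as in the proof of Theorem~\ref{thm:random_noise} for unbounded distributions: it suffices to exhibit a threshold $\tau = \tau(\lambda,\nu,\delta)$ such that $\Pr_{\eta\sim\textsf{subE}(\lambda^2,\nu)}[\,|\eta| > \tau\,] \le \delta$. Once this is done, any $\tau$-robust vector $\mathbf{v}$ is automatically $\textsf{subE}(\lambda^2,\nu)$-robust with tolerance $\delta$: on the event $\{|\eta|\le\tau\}$, which has probability at least $1-\delta$, the observed score $\ell = \mathcal{L}_{\mathbf{v}}(\sigma)+\eta$ satisfies $|\ell - \mathcal{L}_{\mathbf{v}}(\sigma)|\le\tau$, so the decoder guaranteed by $\tau$-robustness (the $\arg\min$ routine of Algorithm~\ref{alg:bounded_noise_exponential}) returns $\sigma$; and such a $\tau$-robust vector exists for every $\tau$ by Theorem~\ref{thm:tau_leaky}.

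First I would derive the standard sub-exponential tail inequality from the defining moment-generating-function bound. By the Chernoff method, for $t>0$ and $0\le s<1/\nu$ we have $\Pr[\eta\ge t]\le e^{-st}\,\mathbb{E}[e^{s\eta}]\le e^{-st+\lambda^2 s^2/2}$; optimizing $s$ over $[0,1/\nu)$ (take $s=t/\lambda^2$ when $t\le\lambda^2/\nu$, and $s\to1/\nu$ otherwise) yields
\[
\Pr[\eta\ge t]\;\le\;\exp\!\left(-\tfrac12\min\left\{\tfrac{t^2}{\lambda^2},\ \tfrac{t}{\nu}\right\}\right).
\]
Since $-\eta\sim\textsf{subE}(\lambda^2,\nu)$ as well (the MGF condition and the zero-mean condition are symmetric under $\eta\mapsto-\eta$), a union bound over the two tails gives $\Pr[\,|\eta|>t\,]\le 2\exp(-\tfrac12\min\{t^2/\lambda^2,\,t/\nu\})$.

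Next I would substitute $\tau = 2(\lambda+\nu)\sqrt{\ln(2/\delta)}$ and verify $\Pr[\,|\eta|>\tau\,]\le\delta$, i.e.\ that $\min\{\tau^2/\lambda^2,\ \tau/\nu\}\ge 2\ln(2/\delta)$. In the sub-Gaussian branch this needs $\tau\ge\lambda\sqrt{2\ln(2/\delta)}$, which holds because $\tau\ge 2\lambda\sqrt{\ln(2/\delta)}\ge\lambda\sqrt{2\ln(2/\delta)}$; in the linear branch (active only when $\tau>\lambda^2/\nu$) it needs $\tau\ge 2\nu\ln(2/\delta)$, which one checks from $\tau\ge 2\nu\sqrt{\ln(2/\delta)}$ after noting that the regime condition $\tau>\lambda^2/\nu$ constrains $\ln(2/\delta)$ to the range where the $\sqrt{\ln(2/\delta)}$ term still dominates — this is the place where the constant $2$ in $2(\lambda+\nu)$ is used. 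This proves the first assertion. For the ``in particular'' statement, pick any $\delta=\delta(N)=o(1)$ (e.g.\ $\delta=1/\ln N$), set $\tau$ as above, invoke Theorem~\ref{thm:tau_leaky} to obtain a $\tau$-robust vector in the $\mathsf{APA}$ model, and run the single-query adversary of Algorithm~\ref{alg:bounded_noise_exponential}; it recovers $\sigma$ whenever $|\eta|\le\tau$, hence with probability at least $1-\delta=1-o(1)$.

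The step I expect to be the main obstacle is the constant-chasing in the previous paragraph: the threshold emerging from the linear (far) tail of a sub-exponential variable naturally scales like $\nu\ln(2/\delta)$, which is larger than the $\sqrt{\ln(2/\delta)}$-type quantity in the claimed bound, so one must argue carefully that the chosen $\tau$ either never enters that regime for the relevant $\lambda,\nu,\delta$ or that the slack absorbs the gap. I would also reconcile the precise form of the tail bound with the paper's exact convention for $\textsf{subE}(\lambda^2,\nu)$, since the coefficient multiplying $t/\nu$ — and hence the final constant in $2(\lambda+\nu)\sqrt{\ln(2/\delta)}$ — depends on it.
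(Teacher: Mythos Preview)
Your approach is essentially identical to the paper's: reduce to the bounded-error case by finding a threshold $a$ with $\Pr[|\eta|>a]\le\delta$, using the sub-exponential tail bound $\Pr[|\eta|>t]\le 2\exp\bigl(-\tfrac12\min\{t^2/\lambda^2,\,t/\nu\}\bigr)$. The only cosmetic difference is that the paper quotes this inequality as a lemma from a reference rather than deriving it by Chernoff; it then solves the two branches to get the thresholds $\lambda\sqrt{2\ln(2/\delta)}$ and $2\nu\ln(2/\delta)$ and claims their maximum is bounded by $2(\lambda+\nu)\sqrt{\ln(2/\delta)}$, invoking only ``$\delta\in(0,1)$'' as justification.

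Your hesitation about the constant is well placed, and in fact the paper's own final step has exactly the gap you anticipate: the inequality $2\nu\ln(2/\delta)\le 2(\lambda+\nu)\sqrt{\ln(2/\delta)}$ is equivalent to $\sqrt{\ln(2/\delta)}\le 1+\lambda/\nu$, which fails once $\delta$ is small enough relative to $\lambda/\nu$. So neither your argument nor the paper's closes this case for \emph{all} $\delta\in(0,1)$; the stated constant $2(\lambda+\nu)\sqrt{\ln(2/\delta)}$ is simply not large enough to cover the linear (far) tail in general. Your instinct to flag this, and your remark that the natural scaling in that regime is $\nu\ln(2/\delta)$ rather than $\sqrt{\ln(2/\delta)}$, are correct --- it is a defect of the statement, not of your strategy.
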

\begin{proof}
    We begin with reminding the reader a concentration bound that will be useful in our proof.
\begin{lemma}[\cite{dubhashi2009concentration}]\label{lem:sub_exponential_bound} Let $Y \sim \textsf{subE}(\lambda^2,\nu)$ be a zero-mean random variables for some $\lambda,\nu > 0$. Then, for all $t > 0$, the following holds: $$\Pr \paran{|Y| > t} \leq 2\exp{\paran{-\frac{1}{2}\paran{\frac{t^2}{\lambda^2} \land \frac{t}{\nu}}}},$$where $a \land b := \min\{a,b\}$.
\end{lemma}

Given this result, we follow the general construction shown in the proof of Theorem~\ref{thm:random_noise}, we compute $a > 0$ such that for any $Y \sim \textsf{subE}(\lambda^2,\nu)$, it holds that $\Pr\paran{Y \in (-a,a)} \ge 1-\delta$. This would allow all $a$-robust vectors to be robust with probability at least $1-\delta$, as needed.

To compute $a$, observe that from Lemma~\ref{lem:sub_exponential_bound}, we can set $t=a$ and set the bound on the probability to be at most $\delta$, as follows:
\begin{align*}
	\Pr \paran{|Y| > a} \leq 2\exp{\paran{-\frac{1}{2}\paran{\frac{a^2}{\lambda^2} \land \frac{a}{\nu}}}} < \delta.
\end{align*}

A simple algebraic manipulation for the inequality on the right gives the condition that $\frac{a^2}{\lambda^2} \land \frac{a}{\nu} > 2\ln \parfrac{2}{\delta}$. We solve the two cases here separately. If $a < \lambda^2/\nu$, then $\frac{a^2}{\lambda^2} \land \frac{a}{\nu} = \frac{a^2}{\lambda^2}$, and hence, this gives $a < \lambda \sqrt{2\ln \parfrac{2}{\delta}}$. Else, we have  $\frac{a^2}{\lambda^2} \land \frac{a}{\nu} = \frac{a}{\nu}$, which gives $a < 2\nu \ln \parfrac{2}{\delta}$. It suffices to take the maximum of these two limits and hence, $\max\{\lambda \sqrt{2\ln \parfrac{2}{\delta}}, 2\nu \ln \parfrac{2}{\delta}\}$ works. We can further simplify this expression using the  upper bound $2(\lambda+\nu)\sqrt{\ln \parfrac{2}{\delta}}$ which follows from the fact that $\delta \in (0,1)$.
\end{proof}

For example, let $Z = \mathcal{N}(0,1)$ denote the standard normal random variable. Then, we know that the Chi-squared random variable with one degree of freedom follows the law for $Z^2 = \textsf{subE}(2,4)$. Thus, from the Corollary above, we obtain that for with probability at least $1-o(1)$, any vector that can handle up to $12\sqrt{\ln N}$ amount of bounded noise can also handle $Z^2$ noise. If we are allowed up to, say $\delta = 0.1$, then any $12\sqrt{\ln 20} \approx 20.77$-robust vector suffices.

\subsection{Extension to Other Noise Models: Multiplicative Noise}\label{app:multiplicative_noise}
We briefly explore extending the analysis above to the case of multiplicative noise. In this case, the adversary observes score $\ell$ that satisfies: $$(1-\alpha_1)\mathcal{L}_\mathbf{v}(\sigma) \le \ell \le (1+\alpha_2)\mathcal{L}_\mathbf{v}(\sigma),$$where $\alpha_1 \in [0,1)$ and $\alpha_2 \ge 0$ are the known bounds on the rate of the multiplicative noise. We show that if the rate is small enough, then it is possible to reduce this case to that of additive noise. To see this, we begin with rewriting the inequality above as  $\abs{\ell - \mathcal{L}_\mathbf{v}(\sigma)} \leq \alpha^* \cdot \abs{\mathcal{L}_\mathbf{v}(\sigma)}$, where $\alpha^* = \max\{\alpha_1, \alpha_2\}$. Now, if we use the construction of a $\tau$-robust vector from Algorithm~\ref{alg:bounded_noise_exponential}, it is easy to compute that $\abs{\mathcal{L}_\mathbf{v}(\sigma)} \leq 2^{N+1}\tau$ for any $\sigma$, which would require ensuring $2^{N+1}\tau \alpha^* < 2\tau$ -- this is not possible for any $\tau > 0$ unless $\alpha^* < 2^{-N}$. Thus, we require multiple queries. 

Suppose we only infer $1 \leq m \leq N$ labels in a single query (using $M=m$ in Algorithm~\ref{alg:reconstruction_attack}), then this will require $\paran{\ln 2 + 2^{m+1}\tau} \alpha^*$ to be at most $\tau$. It suffices to set $\alpha^* \leq 1/2^{m+2}$. Then, the quantity on the left becomes at most $$\frac{\ln 2 + 2^{m+1}\tau}{2^{m+2}} = \frac{\ln 2}{2^{m+2}} + \frac{\tau}{2} \le \tau,$$whenever $\tau \ge \frac{\ln 2}{2^{m+1}}$. We state this formally as follows. 
\begin{theorem}\label{thm:multiplicative_noise}
    For any $\alpha^* \leq \frac{1}{8}$, label inference can be done, $\left\lceil \log_2 \parfrac{1}{\alpha^*} - 2 \right\rceil$ labels at a time, using vectors that are $(2\ln 2)\alpha^*$-robust.
\end{theorem}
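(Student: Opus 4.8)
The plan is to reduce the multiplicative-noise setting to the additive-noise setting already handled by Algorithm~\ref{alg:reconstruction_attack}, and then to choose the robustness parameter $\tau$ and the per-query block size $m$ so that the additive error induced by the multiplicative perturbation stays inside the robustness margin. First I would turn the two-sided multiplicative bound into an absolute bound: since every coordinate of $f(\mathbf{v})$ lies in $(0,1)$, we have $\mathcal{L}_{\mathbf{v}}(\sigma) > 0$ for every $\sigma$, so $(1-\alpha_1)\mathcal{L}_{\mathbf{v}}(\sigma) \le \ell \le (1+\alpha_2)\mathcal{L}_{\mathbf{v}}(\sigma)$ gives $\abs{\ell - \mathcal{L}_{\mathbf{v}}(\sigma)} \le \alpha^* \abs{\mathcal{L}_{\mathbf{v}}(\sigma)}$ with $\alpha^* = \max\{\alpha_1,\alpha_2\}$. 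Thus, on a fixed prediction vector, a multiplicative-$\alpha^*$ perturbation behaves like an additive perturbation of magnitude at most $\alpha^*\abs{\mathcal{L}_{\mathbf{v}}(\sigma)}$, which we now only need to bound uniformly over $\sigma$.

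Next I would reuse the vector family $\mathbf{v}_m$ from Algorithm~\ref{alg:reconstruction_attack} (the one that, by Lemma~\ref{cor:vec_attack}, makes the loss scores of any two labelings differing within the first $m$ coordinates more than $2\tau$ apart, while agreeing on labelings that coincide on those coordinates), and set the robustness target to $\tau := (2\ln 2)\alpha^*$. From Lemma~\ref{lem:loss_vector_with_ones}, exactly as in the proof of Theorem~\ref{thm:proof_reconstruction}, one obtains the uniform bound $\abs{\mathcal{L}_{\mathbf{v}_m}(\sigma)} \le \ln 2 + 2^{m+1}\tau$ for all $\sigma$. For the reduction to go through, the induced additive error must not exceed the margin, i.e.\ $\alpha^*\paran{\ln 2 + 2^{m+1}\tau} \le \tau$. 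Substituting $\tau = (2\ln 2)\alpha^*$, this becomes $\alpha^*\ln 2\,\paran{1 + 2^{m+2}\alpha^*} \le 2\ln 2\,\alpha^*$, i.e.\ $2^{m+2}\alpha^* \le 1$, i.e.\ $m \le \log_2\parfrac{1}{\alpha^*} - 2$. Hence the largest admissible block size is $m^\star = \left\lceil \log_2\parfrac{1}{\alpha^*} - 2 \right\rceil$, and $m^\star \ge 1$ precisely when $\alpha^* \le 1/8$, which is the hypothesis of the theorem.

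Finally I would invoke the correctness argument of Algorithm~\ref{alg:reconstruction_attack} / Theorem~\ref{thm:proof_reconstruction} essentially verbatim: with per-query additive error bounded by $\tau = (2\ln 2)\alpha^*$ and block size $m^\star$, each query recovers its block of $m^\star$ labels, since by Lemma~\ref{cor:vec_attack}(b) the true labeling is the unique $\arg\min$ up to labelings that agree with it on the current block, and by Lemma~\ref{cor:vec_attack}(a) the true labeling is indeed among the minimizers; iterating over $\lceil N/m^\star \rceil$ queries recovers all of $\sigma$, within the $\mathsf{FPA}(\phi)$ precision budget inherited from Theorem~\ref{thm:proof_reconstruction}.

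I expect the only genuinely delicate point to be the apparent circularity in the second step: the additive-error bound depends on $\abs{\mathcal{L}_{\mathbf{v}_m}(\sigma)}$, which depends on $\tau$, which is itself the robustness parameter of the vector being constructed. The resolution is that the fixed-point inequality $\alpha^*\paran{\ln 2 + 2^{m+1}\tau} \le \tau$ is satisfiable, and the clean choice $\tau = (2\ln 2)\alpha^*$ makes it hold exactly in the regime $m \le \log_2(1/\alpha^*) - 2$; a secondary (routine) check is that the two-sided multiplicative bound really collapses to the one-sided absolute bound, which hinges on $\mathcal{L}_{\mathbf{v}_m}(\sigma)$ being nonnegative.
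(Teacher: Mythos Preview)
Your proposal is correct and follows essentially the same route as the paper: both reduce the multiplicative bound to an additive one via $\abs{\ell-\mathcal{L}_{\mathbf{v}}(\sigma)}\le\alpha^*\abs{\mathcal{L}_{\mathbf{v}}(\sigma)}$, use the block-wise vectors of Algorithm~\ref{alg:reconstruction_attack} together with the uniform loss bound $\abs{\mathcal{L}_{\mathbf{v}_m}(\sigma)}\le\ln 2+2^{m+1}\tau$, and arrive at the same constraint $\alpha^*\paran{\ln 2+2^{m+1}\tau}\le\tau$. The only cosmetic difference is the order of solving: the paper first imposes $\alpha^*\le 2^{-(m+2)}$ and deduces $\tau\ge(\ln 2)/2^{m+1}$, whereas you fix $\tau=(2\ln 2)\alpha^*$ upfront and back out $m\le\log_2(1/\alpha^*)-2$; both yield the same pair $(m,\tau)$.
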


Observe that when $\alpha \geq \frac{1}{4}$, then no value of $\tau$ satisfies the constraint above, implying that robust vectors from Algorithm~\ref{alg:reconstruction_attack} cannot be used with any number of queries. The noise is more than what these vectors can guarantee handling. We defer label inference for this case to future work.

\section{Experimental Results for Multi-Class Label Inference}\label{app:experiments_multi_class}
\begin{figure}[t]
    \centering
    \subfloat{\includegraphics[width=185pt]{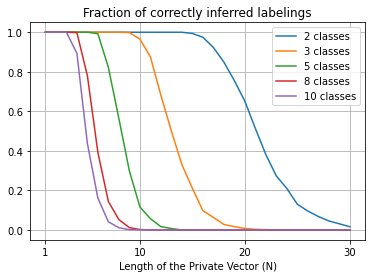}}
    \hspace{1em}
    \subfloat{\includegraphics[width=185pt]{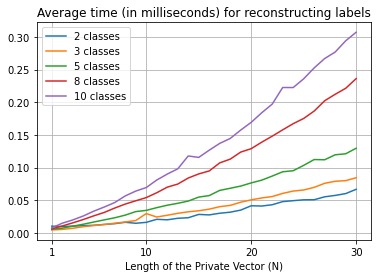}}
    \caption{Empirical results for multi-class label inference. The plot on the top shows that due to fixed-precision, the accuracy drops to zero faster as the number of classes increases, since larger primes powers are used in the prediction vectors. The plot below shows that the inference time increases with the number of classes.\vspace{1em}}
    \label{fig:plots_multi_class}
\end{figure}

Similar to Section~\ref{sec:expts}, we evaluate our attacks on simulated labelings (with no noise) in the multi-class setting (see Figure~\ref{fig:plots_multi_class}). The results show that our algorithms are efficient, even with a large number of datapoints. All experiments are run on a 64-bit machine with 2.6GHz 6-Core processor, using the standard IEEE-754 double precision format (1 bit for sign, 11 bits for exponent, and 53 bits for mantissa). For ensuring reproducibility, the entire experiment setup is submitted as part of the supplementary material. 

The plot on the left in Figure~\ref{fig:plots_multi_class} shows the accuracy of label inference, where we use the attack based on primes from Theorem~\ref{thm:multi_class_attack}. The accuracy reported is with respect to 100 randomly generated labelings for each $N$ (length of the vector to be inferred). We vary the number of classes from $2$ to $10$ for these experiments. Similar to the results in Figure~\ref{fig:experiment_plots}, the maximum accuracy falls to zero when $N$ increases, because of the limited floating point precision on the machine. As the number of classes increase, this drop happens sooner (\emph{i.e.} for a smaller $N$), because the powers of primes get larger. 

The plot on the right shows the run time for our attack. The results show that this inference happens in only a few milliseconds.  

\end{document}